\DeclareMathOperator*{\argmin}{arg\,min}
\newtheorem{corollary}{Corollary}
\newtheorem{fact}{Fact}
\appto\TPTnoteSettings{\footnotesize}
\renewcommand*{\backref}[1]{}
\renewcommand*{\backrefalt}[4]{%
    \ifcase #1 %
    \or        (Cited on page~#2.)%
    \else      (Cited on pages~#2.)%
    \fi}
\newcommand{\multiline}[1]{%
  \begin{tabularx}{\dimexpr\linewidth-\ALG@thistlm}[t]{@{}X@{}}
    #1
  \end{tabularx}
}
\algnewcommand{\Initialize}[1]{%
  \State \textbf{Initialize:} #1}
\theoremstyle{plain}
\newtheorem{theorem}{Theorem}
\newtheorem*{theorem*}{Theorem}
\newtheorem{lemma}{Lemma}[section]
\newtheorem*{cor*}{Corollary}
\theoremstyle{definition}
\newtheorem{definition}{Definition}
\newtheorem{assump}{Assumption}
\theoremstyle{remark}
\newtheorem{remark}{Remark}
\newcommand{\mbb}{\mathbb}
\newcommand{\mbe}{\mathbb E}
\newcommand{\lp}{\left(}
\newcommand{\rp}{\right)}
\newcommand{\lcb}{\left\{}
\newcommand{\rcb}{\right\}}
\newcommand{\lbr}{\left[}
\newcommand{\rbr}{\right]}
\newcommand{\lnr}{\left\|}
\newcommand{\rnr}{\right\|}
\newcommand\norm[1]{\lnr#1\rnr}
\newcommand{\bx}{{\mathbf x}}
\newcommand{\bxt}{{\mathbf x^{(t)}}}
\newcommand{\bxk}{{\mathbf x^{(k)}}}
\newcommand{\bxtp}{{\mathbf x^{(t+1)}}}
\newcommand{\be}{{\mathbf e}}
\newcommand{\bet}{{\mathbf e^{(t)}}}
\newcommand{\bek}{{\mathbf e^{(k)}}}
\newcommand{\bz}{{\mathbf z}}
\newcommand{\bzt}{{\mathbf z^{(t)}}}
\newcommand{\bPsi}{{\boldsymbol{\Psi}}}
\newcommand{\bPhi}{{\boldsymbol{\Phi}}}
\newcommand{\bPhit}{{\bPhi^{(t)}}}
\newcommand{\by}{{\mathbf y}}
\newcommand{\G}{\nabla}
\newcommand{\nn}{\nonumber}
\tikzstyle{startstop} = [rectangle, draw, rounded corners, align=center, minimum width=3cm, minimum height=1cm,text centered]
\tikzstyle{decision} = [diamond, draw, fill=blue!20, 
\tikzstyle{block} = [rectangle, draw, fill=blue!10, align=center, rounded corners, minimum width=3cm, minimum height=1cm]
\tikzstyle{blockcast} = [rectangle, draw, fill=red!10, align=center, rounded corners, minimum width=3cm, minimum height=0.45cm]
\tikzstyle{line} = [draw, -latex']
\tikzstyle{cloud} = [draw, ellipse,fill=red!20, node distance=3cm,
\newcommand{\R}{\mathbb{R}}
\newcommand{\E}{\mathbb{E}}
\newcommand{\N}{\mathbb{N}}
\newcommand{\calN}{\mathcal{N}}
\newtheorem{example}{Example}
\renewcommand\@fnsymbol[1]{}
\author[1]{Aleksandar Armacki%\thanks{The work of A. Armacki and S. Kar was supported in part by the Office of Naval Research under Grant No. N00014-21-1-2547.}
}
\author[1]{Shuhua Yu}
\author[1]{Pranay Sharma}
\author[1]{Gauri Joshi}
\author[2]{Dragana Bajovi\'{c}}
\author[3]{Du\v{s}an Jakoveti\'{c}}
\author[1]{Soummya Kar}
\affil[1]{Carnegie Mellon University, Pittsburgh, PA, USA\\ \texttt{\{aarmacki,shuhuay,pranaysh,gaurij,soummyak\}@andrew.cmu.edu }}
\affil[2]{Faculty of Technical Sciences, University of Novi Sad, Novi Sad, Serbia\\ \texttt{dbajovic@uns.ac.rs}}
\affil[3]{Faculty of Sciences, University of Novi Sad, Novi Sad, Serbia\\ \texttt{dusan.jakovetic@dmi.uns.ac.rs}}
\title{Nonlinear Stochastic Gradient Descent and Heavy-tailed Noise: A Unified Framework and High-probability Guarantees}
\date{}
\begin{document}

\maketitle

\begin{abstract}
  We study high-probability convergence in online learning, in the presence of heavy-tailed noise. To combat the heavy tails, a general framework of nonlinear SGD methods is considered, subsuming several popular nonlinearities like sign, quantization, component-wise and joint clipping. In our work the nonlinearity is treated in a black-box manner, allowing us to establish unified guarantees for a broad range of nonlinear methods. For symmetric noise and non-convex costs we establish convergence of gradient norm-squared, at a rate $\widetilde{\mathcal{O}}(t^{-\nicefrac{1}{4}})$, while for the last iterate of strongly convex costs we establish convergence to the population optima, at a rate $\mathcal{O}(t^{-\zeta})$, where $\zeta \in (0,1)$ depends on noise and problem parameters. Further, if the noise is a (biased) mixture of symmetric and non-symmetric components, we show convergence to a neighbourhood of stationarity, whose size depends on the mixture coefficient, nonlinearity and noise. Compared to state-of-the-art, who only consider clipping and require unbiased noise with bounded $p$-th moments, $p \in (1,2]$, we provide guarantees for a broad class of nonlinearities, without any assumptions on noise moments. While the rate exponents in state-of-the-art depend on noise moments and vanish as $p \rightarrow 1$, our exponents are constant and strictly better whenever $p < 6/5$ for non-convex and $p < 8/7$ for strongly convex costs. Experiments validate our theory, showing that clipping is not always the optimal nonlinearity, further underlining the value of a general framework.
\end{abstract}

\section{Introduction}

Stochastic optimization is a well-studied problem, e.g., \citet{robbins1951stochastic,nemirovski2009robust}, where the goal is to minimize an expected cost, without knowing the underlying probability distribution. Formally, the problem is cast as
\begin{equation}\label{eq:problem}
    \argmin_{\bx \in \R^d} \lcb f(\bx) \triangleq \mbe_{\upsilon \sim \Upsilon} [\ell(\bx; \upsilon)] \rcb,
\end{equation} where $\bx \in \R^d$ represents model parameters, $\ell: \R^d \times \mathcal{V} \mapsto \R$ is a loss function, $\upsilon \in \mathcal{V}$ is a random sample distributed according to the unknown distribution $\Upsilon$, while $f: \mbb R^d \mapsto \mbb R$ is commonly known as the \emph{population cost}. Many modern machine learning applications, such as classification and regression, are modeled using \eqref{eq:problem}.

Perhaps the most popular method to solve \eqref{eq:problem} is stochastic gradient descent (SGD) \citet{robbins1951stochastic}, whose popularity stems from low computation cost and incredible empirical success \citet{bottou2010large, hardt2016train}. Convergence guarantees of SGD have been studied extensively \citet{bach-sgd,rakhlin2012making,curtis-large-scale_ml}. Classical convergence results are mostly concerned with \emph{mean-squared error} (MSE) convergence, characterizing the average performance across many runs of the algorithm. However, due to significant computational cost of a single run of an algorithm in many modern machine learning applications, it is often infeasible to perform multiple runs \citet{harvey2019tight, davis2021low}. As such, many applications require more fine-grained results, such as \emph{high-probability convergence}, which characterize the behaviour of an algorithm with respect to a single run.

Another striking feature of existing works is the assumption that the gradient noise has \textit{light-tails} or \textit{uniformly bounded variance} \citet{rakhlin2012making, ghadimi2012optimal, ghadimi2013stochastic}, which represents a major limitation in many modern applications, see \cite{csimcsekli2019heavy,simsekli2019tail}. For example, \citet{zhang2020adaptive} show that the gradient noise distribution during training of large attention models resembles a Levy $\alpha$-stable distribution, with $\alpha < 2$, which has unbounded variance. To better model this phenomena, the authors propose the \emph{bounded $p$-th moment} assumption, i.e.,
\begin{equation}\label{eq:bounded-moment}
    \E_{\upsilon \sim \Upsilon}\|\nabla \ell(\bx,\upsilon) - \nabla f(\bx) \|^p \leq \sigma^p, \tag{BM}
\end{equation} for every $\bx \in \R^d$ and some $p \in (1,2]$, $\sigma > 0$, subsuming the bounded variance case for $p = 2$. Under this assumption, \citet{zhang2020adaptive} show that SGD fails to converge for any fixed step-size. The clipped variant of SGD solves this problem and achieves \textit{optimal} MSE convergence rate for smooth non-convex losses. Along with addressing heavy-tailed noise, clipped SGD also addresses non-smoothness of the cost \citet{zhang2019gradient}, ensures differential privacy \citet{zhang2022clip_FL_icml} and robustness to malicious nodes in distributed learning \citet{shuhua-clipping}. While popular, clipping is not the only nonlinearity employed in practice. Sign and quantized variants of SGD improve communication efficiency in distributed learning \citet{alistarh2017qsgd, bernstein2018signsgd, gandikota2021vqsgd}. Sign SGD achieves performance on par with state-of-the-art adaptive methods \citet{crawshaw2022general_signSGD}, and is robust to faulty and malicious users \citet{bernstein2018signsgd_iclr}. Normalized SGD is empirically observed to accelerate neural network training \citet{hazan2015beyond, you2019reducing, cutkosky20normalized_SGD} and facilitates privacy \citet{das2021DP_normFedAvg, yang2022normalized_sgd}, while \citet{zhang2020adaptive} empirically observe that component-wise clipping converges faster than the joint one, showing better dependence on problem dimension. Although assumption \eqref{eq:bounded-moment} helps bridge the gap between theory and practice, the downside is that the resulting convergence rates have exponents which explicitly depend on the noise moment and vanish as $p \rightarrow 1$. This seems to contradict the strong performance of nonlinear SGD methods observed in practice and fails to explain the empirical success of nonlinear SGD, e.g., during training of models such as neural networks, in the presence of heavy-tailed noise. A growing body of works recently provided strong evidence that the stochastic noise during training of neural networks is \emph{symmetric}, by studying the empirical distribution of gradient noise during training. For example, \citet{bernstein2018signsgd,bernstein2018signsgd_iclr} show that histograms of gradient noise during training of different Resnet architectures on CIFAR-10 and Imagenet data exhibit strong symmetry under various batch sizes, see their Figures 2 (in both works). Similarly, \citet{chen2020understanding} demonstrate strong symmetry of gradient distributions during training of convolutional neural networks (CNN) on CIFAR-10 and MNIST data, see their Figures 1-3. \citet{barsbey-heavy_tails_and_compressibility} show that the histograms of weights of a CNN layer trained on MNIST data almost identically match samples simulated from a symmetric $\alpha$-stable distribution, see their Figure 2. Finally, \citet{pmlr-v238-battash24a} show that a heavy-tailed symmetric $\alpha$-stable distribution is a much better fit for the stochastic gradient noise than a Gaussian, for a myriad of deep learning architectures and datasets, see their Tables 1-3. Relying on a generalization of the central limit theorem (CLT), \citet{simsekli2019tail,pmlr-v108-peluchetti20b,heavy-tail-phenomena,barsbey-heavy_tails_and_compressibility} theoretically show that symmetric heavy-tailed noises are appropriate models in many practical settings, e.g., when training neural networks with mini-batch SGD using a large batch size. In contrast, works using assumption~\eqref{eq:bounded-moment} are inherently oblivious to this widely observed phenomena. The goal of this paper is to study high-probability guarantees of nonlinear SGD methods in the presence of symmetric heavy-tailed noise and the benefits symmetry brings.

\begin{table*}[!htp]
\caption{High-probability guarantees of SGD methods under heavy-tailed noise. Online indicates whether a method uses a time-varying step-size and is applicable in the online setting (indicated by lower-case $t$), or if it uses a fixed step-size and requires a preset time horizon which is optimized to achieve the best rate and works only in the offline setting (indicated by upper-case $T$). The value $\beta \in (0,1)$ represents the failure probability, while $\widetilde{\mathcal{O}}(\cdot)$ hides factors poly-logarithmic in time $t$. All the works achieve a poly-logarithmic dependence on the failure probability $\beta$ (i.e., contain a multiplicative factor of $\log(\nicefrac{1}{\beta})$ in the bound), which is hidden under the big O notation, for ease of presentation.}
\label{tab:comp}
\begin{adjustwidth}{-1in}{-1in} 
\begin{center}
\begin{threeparttable}
\begin{small}
\begin{sc}
\begin{tabular}{cccccc}
\toprule
\multicolumn{1}{c}{\rule{0pt}{2.5ex}\scriptsize Cost} & \multicolumn{1}{c}{\scriptsize Work} & \multicolumn{1}{c}{\scriptsize Nonlinearity} & \multicolumn{1}{c}{\scriptsize Noise} & \multicolumn{1}{c}{\scriptsize Online} & \multicolumn{1}{c}{\scriptsize Rate}\\
\midrule
\multirow{4}{*}{\scriptsize Non-convex} & \scriptsize\citet{nguyen2023improved} & \multirow{2.5}{*}{\scriptsize Clipping only} & \multirow{2.4}{*}{$\substack{\text{\scriptsize{unbiased, bounded}} \\ \text{\scriptsize{moment of order }} p \in (1,2]}$} & \ding{52} & \scriptsize $\widetilde{\mathcal{O}}\left(t^{\nicefrac{2(1 - p)}{(3p - 2)}} \right)$ \\
& \scriptsize \citet{sadiev2023highprobability} & & & \ding{55} & \scriptsize $\mathcal{O}\left(T^{\nicefrac{(1 - p)}{p}} \right)$ \\
\cmidrule{2-6}
& \scriptsize This paper & $\substack{\text{\scriptsize Component-wise} \\ \text{\scriptsize and joint}}$ & $\substack{\text{\scriptsize{symmetric pdf,}} \\ \text{\scriptsize{positive around zero}}}$ & \ding{52} & \scriptsize $\widetilde{\mathcal{O}}\left(t^{-\nicefrac{1}{4}}\right)^\dagger$ \\
\midrule
\multicolumn{1}{c}{\multirow{4}{*}{$\substack{\text{\scriptsize Strongly} \\ \text{\scriptsize convex}}$}} & \scriptsize \citet{sadiev2023highprobability} & \scriptsize Clipping only & $\substack{\text{\scriptsize{unbiased, bounded}} \\ \text{\scriptsize{moment of order }} p \in (1,2]}$ & \ding{55} & \scriptsize $\mathcal{O}\left(T^{\nicefrac{2(1-p)}{p}} \right)$ \\
\cmidrule{2-6}
\multicolumn{1}{c}{} & $\substack{\text{\scriptsize This paper - weighted} \\ \text{\scriptsize average of iterates}}$ & \multirow{2}{*}{$\substack{\text{\scriptsize Component-wise} \\ \text{\scriptsize and joint}}$} & \multirow{2}{*}{$\substack{\text{\scriptsize{symmetric pdf,}} \\ \text{\scriptsize{positive around zero}}}$} & \ding{52} & \scriptsize $\widetilde{\mathcal{O}}\left(t^{-\nicefrac{1}{4}}\right)^\dagger$ \\
\multicolumn{1}{c}{} &  \scriptsize This paper - last iterate  & & & \ding{52} & \scriptsize $\mathcal{O}\left(t^{-\zeta}\right)^\S$ \\
\bottomrule
\end{tabular}
\end{sc}
\end{small}
\begin{tablenotes}\scriptsize
    \item[$\dagger$] We derive convergence guarantees for a wide range of step-sizes of the form $\alpha_t = \nicefrac{a}{(t+1)^\delta}$, where $a > 0$, $\delta \in (\nicefrac{2}{3},1)$, with the resulting convergence rate depending on $\delta$. The best rate, shown in the table, is achieved for the choice $\delta = \nicefrac{3}{4}$. 
    \item[$\S$] The rate $\zeta \in (0,1)$ depends on the choice of nonlinearity, noise and problem related parameters, see Section~\ref{sec:main} and Appendix~\ref{app:rate}. We provide examples of noise for which $\zeta > \nicefrac{2(p - 1)}{p}$, see Examples~\ref{example:1}-\ref{example:4} ahead.
\end{tablenotes}
\end{threeparttable}
\end{center}
\vskip -0.1in
\end{adjustwidth}
\end{table*}

\paragraph{Literature Review.} We now review the literature on high-probability convergence of SGD and its variants. Initial works on high-probability convergence of stochastic gradient methods considered light-tailed noise and include \citet{nemirovski2009robust, lan2012optimal,hazan2014beyond, harvey2019tight,ghadimi2013stochastic,li2020high}. Subsequent works \citet{gorbunov2020stochastic,gorbunov2021near,parletta2022high} generalized these results to noise with bounded variance. \citet{pmlr-v151-tsai22a} study clipped SGD, assuming the variance is bounded by iterate distance, while \citet{li2022high,eldowa2023general,madden2020high} consider sub-Weibull noise. Recent works~\citet{liu2023high,eldowa2023general} remove restrictive assumptions, like bounded stochastic gradients and domain. \citet{sadiev2023highprobability} show that even with bounded variance and smooth, strongly-convex functions, vanilla SGD cannot achieve an exponential tail decay, implying that the complexity of achieving a high-probability bound for SGD can be much worse than that of the corresponding MSE bound. As such, nonlinear SGD is used to handle tails heavier than sub-Gaussian. Recent works consider a class of heavy-tailed noises satisfying \eqref{eq:bounded-moment}, e.g., \citet{nguyen2023improved, nguyen2023high, sadiev2023highprobability,liu2023breaking}. \citet{nguyen2023improved,nguyen2023high} study high-probability convergence of clipped SGD for convex and non-convex minimization, \citet{sadiev2023highprobability} study clipped SGD for optimization and variational inequality problems, while \citet{liu2023breaking} study accelerated variants of clipped SGD for smooth losses. It is worth mentioning \citet{gorbunov2023breaking}, who show that clipped SGD achieves the optimal $\mathcal{O}\left(T^{-1}\right)$\footnote{We use lower-case $t$ to indicate an online method, using a time-varying step-size, whereas upper-case $T$ indicates an offline method, which uses a fixed-step size and a predefined time horizon $T$. While an online method can clearly be used in the offline setting, the converse is not true.} rate for smooth, strongly convex costs, under a class of heavy-tailed noises with possibly unbounded first moments. However, their noise assumption is difficult to verify, as it requires computing convolutions of order $k$, for all $k \in \N$. Additionally, they use a median-of-means gradient estimator, which requires evaluating multiple stochastic gradients per iteration and is not applicable in the online setting considered in this paper.

The works closest to ours are \citet{nguyen2023improved} for online non-convex and~\citet{sadiev2023highprobability} for offline strongly convex problems. We present a detailed comparison in Table \ref{tab:comp}. Both works study only the clipping operator and use assumption \eqref{eq:bounded-moment}. For non-convex costs, \citet{nguyen2023improved} achieve the optimal rate $\widetilde{\mathcal{O}}\left(t^{\nicefrac{2(1-p)}{(3p-2)}} \right)$, while \cite{sadiev2023highprobability} achieve the optimal rate $\mathcal{O}\left(T^{\nicefrac{2(1-p)}{p}}\right)$ for strongly convex costs. Compared to them, we consider a much broader class of nonlinearities in the presence of noise with symmetric density with no moment requirements, achieving the near-optimal rate $\widetilde{\mathcal{O}}\left(t^{-\nicefrac{1}{4}}\right)$ for non-convex costs and extending it to the weighted average of iterates for strongly convex costs. Crucially, our rate exponent \emph{is independent of noise and problem parameters}, which is not the case with \citet{nguyen2023improved,sadiev2023highprobability}. Our rates are strictly better whenever $p < 6/5$ for non-convex and $p < 8/7$ for strongly convex costs.\footnote{This does not contradict the optimality of the rates in \citet{nguyen2023improved,sadiev2023highprobability}, as their assumptions differ from ours. While \citet{nguyen2023improved,sadiev2023highprobability} require bounded noise moment of order $p \in (1,2]$, we study noise with symmetric density, without making any moment requirements. As such, we show that symmetry leads to improved results and allows for relaxed moment conditions and heavier tails (see Examples \ref{example:1}-\ref{example:3}).} Additionally, we establish convergence of the \emph{last iterate} for strongly convex costs, with rate $\mathcal{O}\left(t^{-\zeta} \right)$, where $\zeta \in (0,1)$ depends on noise, nonlinearity and other problem parameters. We give examples of noise regimes where our rate is better than the one in \citet{sadiev2023highprobability} (see Examples \ref{example:1}-\ref{example:4}) and demonstrate numerically that \emph{clipping is not always the best nonlinearity} (see Section \ref{sec:an-num}), further highlighting the importance and usefulness of our general framework. Finally, it is worth mentioning \citet{jakovetic2023nonlinear}, who provide MSE, asymptotic normality and almost sure guarantees of the same nonlinear framework for strongly convex costs and noises with symmetric PDF, positive around zero and bounded first moments. Our work differs in that we study high-probability convergence, relax the moment conditions and allow for non-convex costs. The latter is achieved by providing a novel characterization of the interplay of the ``denoised'' nonlinear gradient and the true gradient (see Lemma \ref{lm:huber}).  

\paragraph{Contributions.} Our contributions are as follows. 
\begin{enumerate}
    \item We study convergence in high probability of a unified framework of nonlinear SGD, in the presence of heavy-tailed noise and widely observed noise symmetry, making no assumptions on noise moments. The nonlinear map is treated in a black-box manner, subsuming many popular nonlinearities, like sign, normalization, clipping and quantization. \emph{To the best of our knowledge, we provide the first high-probability results under heavy-tailed noise for methods such as sign, quantized and component-wise clipped SGD}.
    
    \item For non-convex costs, we show convergence of gradient norm-squared, at a near-optimal rate $\widetilde{\mathcal{O}}\left(t^{-\nicefrac{1}{4}}\right)$. The exponent in our rate is constant, independent of noise and problem parameters, which is not the case with state-of-the-art \citet{nguyen2023improved}. Our rate is strictly better than state-of-the-art whenever the noise has bounded moments of order $p < \frac{6}{5}$.
    
    \item For strongly convex costs we show convergence of the weighted average of iterates, at the same rate $\widetilde{\mathcal{O}}\left(t^{-\nicefrac{1}{4}}\right)$. Our rate dominates the state-of-the-art \citet{sadiev2023highprobability} whenever the noise has bounded moments of order $p < \frac{8}{7}$, while being applicable in the online setting, which is not the case for \citet{sadiev2023highprobability}. For the last iterate we show convergence at a rate $\mathcal{O}\left(t^{-\zeta} \right)$, where $\zeta \in (0,1)$ depends on noise, nonlinearity and problem parameters, but remains bounded away from zero even for unbounded noise moments.
    
    \item We extend our results beyond symmetric noise, by considering a mixture of symmetric and non-symmetric components. For non-convex costs we show convergence to a neighbourhood of stationarity, at a rate $\widetilde{\mathcal{O}}(t^{-\nicefrac{1}{4}})$, where the size of the neighbourhood depends on the mixture coefficient, nonlinearity and noise. While \cite{nguyen2023improved} achieve convergence under condition \eqref{eq:bounded-moment}, which does not require symmetry, they explicitly require \emph{unbiased noise}, which is not the case for our mixture noise, allowing it to be \emph{biased}.
    
    \item Compared to state-of-the-art \citet{nguyen2023improved,sadiev2023highprobability}, who only consider clipping, require bounded noise moments of order $p \in (1,2]$ and whose rates vanish as $p \rightarrow 1$, we consider a much broader class of nonlinearities, relax the moment condition and provide convergence rates with constant exponents. Finally, we provide numerical results that show \emph{clipping is not always the optimal choice of nonlinearity}, further reinforcing the importance of our general framework.
\end{enumerate}

\paragraph{Paper Organization.} The rest of the paper is organized as follows. Section \ref{sec:framework} outlines the proposed framework. Section \ref{sec:main} presents the main results. Section \ref{sec:an-num} provides numerical results. Section \ref{sec:conclusion} concludes the paper. Appendix contains additional experiments and proofs omitted from the main body. The remainder of this section introduces the notation.   

\paragraph{Notation.} The set of positive integers is denoted by $\N$. For $a \in \N$, the set of integers up to and including $a$ is denoted by $[a] = \{1,\ldots,a \}$. The sets of real numbers and $d$-dimensional vectors are denoted by $\R$ and $\R^d$. Regular and bold symbols denote scalars and vectors, i.e., $x \in \R$ and $\bx \in \R^d$. The Euclidean inner product and induced norm are denoted by $\langle \cdot,\cdot\rangle$ and $\|\cdot\|$.

\section{Proposed Framework}\label{sec:framework}

To solve~\eqref{eq:problem} in the online setting, under the presence of heavy-tailed noise, we use the \textit{nonlinear SGD} framework. The algorithm starts by choosing a deterministic initial model $\bx^{(1)} \in \mbb R^d$,\footnote{While the initial model is deterministically chosen, it can be any vector in $\R^d$. This distinction is required for the theoretical analysis in the next section.} a step-size schedule $\{\alpha_t\}_{t \in \N}$ and a nonlinear map $\boldsymbol{\Psi}:\mbb R^d \mapsto \mbb R^d$. In iteration $t = 1,2,\ldots$, the method performs as follows: a first-order oracle is queried, which returns the gradient of the loss $\ell$ evaluated at the current model $\bxt$ and a random sample $\upsilon^{(t)}$.\footnote{Equivalently, the oracle directly sends the random sample $\upsilon^{(t)}$, which we use to compute the gradient of $\ell$.} Then, the model is updated as
\begin{equation}\label{eq:update}
    \bxtp = \bxt - \alpha_t\mathbf{\Psi}\left(\nabla \ell(\bxt;\upsilon^{(t)})\right),
\end{equation} where $\alpha_t > 0$ is the step-size at iteration $t$. The method is summed up in Algorithm~\ref{alg:nonlin-sgd}. We make the following assumption on the nonlinear map $\bPsi$.

\begin{algorithm}[!tb]
\caption{Online Nonlinear SGD}
\label{alg:nonlin-sgd}
\begin{algorithmic}[1]
   \REQUIRE{Choice of nonlinearity $\bPsi: \R^d \mapsto \R^d$, model initialization $\bx^{(1)} \in \R^{d}$, step-size schedule $\{\alpha_t\}_{t \in \N}$;}
   \FOR{t = 1,2,\ldots}:
        \STATE Query the oracle and receive $\nabla \ell(\bxt;\upsilon^{(t)})$;  
        \STATE Update $\bxtp \leftarrow \bxt - \alpha_t\mathbf{\Psi}\left(\nabla \ell(\bxt;\upsilon^{(t)})\right)$;
    \ENDFOR
\end{algorithmic}
\end{algorithm}

\begin{assump}\label{asmpt:nonlin}
The nonlinear map $\bPsi: \mbb R^d \mapsto \mbb R^d$ is either of the form $\bPsi(\bx) = \bPsi(x_1,\dots,x_d) = \lbr \calN_1(x_1), \dots, \calN_1(x_d) \rbr^\top$ or $\bPsi(\bx) = \bx\calN_2(\|\bx\|)$, where $\calN_1,\: \calN_2: \R \mapsto \R$ satisfy: 
\begin{enumerate}
    \item $\calN_1,\calN_2$ are continuous almost everywhere,\footnote{With respect to the Lebesgue measure.} $\calN_1$ is piece-wise differentiable and the map $a \mapsto a\calN_2(a)$ is non-decreasing.

    \item $\calN_1$ is monotonically non-decreasing and odd, while $\calN_2$ is non-increasing.

    \item $\calN_1$ is either discontinuous at zero, or strictly increasing on $(-c_1,c_1)$, for some $c_1 > 0$, with $\calN_2(a) > 0$, for any $a > 0$.

    \item $\calN_1$ and $\bx\calN_2(\|\bx\|)$ are uniformly bounded, i.e., $|\calN_1(x)| \leq C_1$ and $\|\bx\calN_2(\|\bx\|)\|\leq C_2$, for some $C_1,\: C_2 > 0$, and all $x \in \R$, $\bx \in \R^d$.
\end{enumerate}
\end{assump}

Note that the fourth property implies $\|\bPsi(\bx)\| \leq C$, where $C = C_1\sqrt{d}$ or $C = C_2$, depending on the form of nonlinearity. We will use the general bound $\|\bPsi(\bx) \| \leq C$ for ease of presentation, and specialize where appropriate. Assumption~\ref{asmpt:nonlin} is satisfied by a wide class of nonlinearities, including:
\begin{enumerate}
    \item \emph{Sign}: $[\bPsi(\bx)]_i = \text{sign}(x_i), \: i \in [d]$.
    
    \item \emph{Component-wise clipping}: $[\bPsi(\bx)]_i = x_i$, for $|x_i| \leq m$, and $[\bPsi(\bx)]_i = m\cdot\text{sign}(x_i)$, for $|x_i| > m$, $i \in [d]$, for user-specified $m>0$.
    
    \item \emph{Component-wise quantization}: for each $i \in [d]$, let $[\bPsi(\bx)]_i = r_j$, for $x_i \in (q_j,q_{j+1}]$, with $j = 0,\ldots,J-1$ and $-\infty = q_0 < q_1 <\ldots < q_J = +\infty$, where $r_j$,$q_j$ are chosen such that each component of $\bPsi$ is odd, and we have $\max_{j \in \{0,\ldots,J-1\}}|r_j| < R$, for user-specified $R > 0$.
    
    \item \emph{Normalization}: $\bPsi(\bx) = \frac{\bx}{\norm{\bx}}$ and $\bPsi(\bx) = \mathbf{0}$, if $\bx = \mathbf{0}$.
    
    \item \emph{Clipping}: $\bPsi(\bx) = \min\{1,\nicefrac{M}{\|\bx\|}\}\bx$, for user-specified $M>0$.
\end{enumerate}

\section{Theoretical Guarantees}\label{sec:main}

In this section we present the main results of the paper. Subsection \ref{subsec:prelim} presents the preliminaries, Subsection \ref{subsec:theory-nonconv} presents the results for symmetric noises, while Subsection \ref{subsec:non-sym} presents the results for non-symmetric noises. The proofs can be found in the Appendix.

\subsection{Preliminaries}\label{subsec:prelim}
In this section we provide the preliminaries and assumptions used in the analysis. To begin, we state the assumptions on the behaviour of the cost $f$.

\begin{assump}\label{asmpt:L-smooth}
The cost $f$ is bounded from below, has at least one stationary point and Lipschitz continuous gradients, i.e., $\inf_{\bx \in \R^d}f(\bx) > -\infty$, there exists a $\bx^\star \in \R^d$, such that $\nabla f(\bx^\star) = 0$, and $\|\nabla f(\bx) - \nabla f(\by) \| \leq L\|\bx - \by\|$, for some $L > 0$ and every $\bx,\by \in \R^d$.
\end{assump}

\begin{remark}
    Boundedness from below and Lipschitz continuous gradients are standard for non-convex losses, e.g., \cite{ghadimi2013stochastic}. Since the goal in non-convex optimization is to reach a stationary point, it is natural to assume at least one such point exists, see \citet{liu2023high,madden2020high}. 
\end{remark}

\begin{remark}
    It can be shown that Lipschitz continuous gradients imply the $L$-smothness inequality, i.e., $f(\by) \leq f(\bx) + \langle\nabla f(\bx),\by - \bx\rangle + \frac{L}{2} \|\bx - \by \|^2$, for any $\bx, \by \in \R^d$, see \citet{nesterov-lectures_on_cvxopt,Wright_Recht_2022}.
\end{remark}

In addition to Assumption \ref{asmpt:L-smooth}, we will sometimes use the following assumption.

\begin{assump}\label{asmpt:cvx}
The cost $f$ is strongly convex, i.e., $f(\by) \geq f(\bx) + \langle\nabla f(\bx),\by - \bx\rangle + \frac{\mu}{2} \|\bx - \by \|^2$, for some $\mu > 0$ and every $\bx,\by \in \R^d$.
\end{assump}

Denote the infimum of $f$ by $f^\star \triangleq \inf_{\bx \in \R^d}f(\bx)$. Denote the set of stationary points of $f$ by $\mathcal{X} \triangleq \left\{\bx^\star \in \R^d: \: \nabla f(\bx^\star) = 0 \right\}$. By Assumption \ref{asmpt:L-smooth}, it follows that $\mathcal{X} \neq \emptyset$. If in addition Assumption \ref{asmpt:cvx} holds, we have $\mathcal{X} = \{\bx^\star\}$ and $f^\star = f(\bx^\star)$, for some $\bx^\star \in \R^d$. Denote the distance of the initial model from the set of stationary points by $D_{\mathcal{X}} \triangleq \inf_{\bx \in \mathcal{X}}\|\bx^{(1)} - \bx\|^2$. Next, rewrite the update \eqref{eq:update} as
\begin{align}\label{eq:nonlin-sgd}
    \bxtp = \bxt - \alpha_t \boldsymbol{\Psi}(\G f(\bxt) + \bzt), 
\end{align} where $\bzt \triangleq \nabla \ell(\bxt;\upsilon^{(t)}) - \G f(\bxt)$ is the stochastic noise at iteration $t$. To simplify the notation, we use the shorthand $\bPsi^{(t)} \triangleq \boldsymbol{\Psi}(\nabla f(\bxt) + \bzt)$. We make the following assumption on the noise vectors $\{\bzt \}_{t \in \N}$.

\begin{assump}\label{asmpt:noise}
    The noise vectors $\{\bzt \}_{t \in \N}$ are independent, identically distributed, with symmetric probability density function (PDF) $P: \R^d \mapsto \R$, positive around zero, i.e., $P(-\bz) = P(\bz)$, for all $\bz \in \R^d$ and $P(\bz) > 0$, for all $\|\bz\| \leq B_0$ and some $B_0 > 0$.
\end{assump}

\begin{remark}
    Assumption \ref{asmpt:noise} imposes no moment conditions, at the expense of requiring a symmetric PDF, positive in a neighborhood of zero. Symmetry and positivity around zero are mild assumptions, satisfied by many noise distributions, such as Gaussian, the ones in Examples \ref{example:1}-\ref{example:3} below, and a broad class of heavy-tailed symmetric $\alpha$-stable distributions, e.g., \citet{stable-distributions,heavy-tail-book}. 
\end{remark}

\begin{remark}
    As discussed in the introduction, heavy-tailed symmetric noise has been widely observed during training of deep learning models, across different architectures, datasets and batch sizes, e.g., \citet{bernstein2018signsgd,bernstein2018signsgd_iclr,chen2020understanding,barsbey-heavy_tails_and_compressibility,pmlr-v238-battash24a}.Building on the generalized CLT, \citet{simsekli2019tail,pmlr-v108-peluchetti20b,heavy-tail-phenomena,barsbey-heavy_tails_and_compressibility} provide theoretical justification for this phenomena, e.g., when training neural nets with a large batch size.
\end{remark}

\begin{remark}\label{rmk:iid}
    The independent, identically distributed requirement in Assumption \ref{asmpt:noise} can be significantly relaxed, to allow for noises which are not identically distributed, and in the case of joint nonlinearities, potentially depend on the current model. The reader is referred to the Appendix for a detailed discussion.
\end{remark}

\begin{remark}
    Positivity around zero of the PDF is a technical condition, ensuring that the ``denoised nonlinearity'' (see Section \ref{sec:main} ahead) is well-behaved. As such, the magnitude of the neighborhood $B_0$ does not directly affect the bounds established in Section \ref{sec:main}.  
\end{remark}

\begin{remark}\label{rmk:noise-comparison}
    While the noise assumption used in our work and the $p$-th bounded moment assumption \eqref{eq:bounded-moment} are different, neither set of assumptions is uniformly weaker, with both having some advantages and disadvantages. For a detailed comparison between the two sets of assumptions, the reader is referred to the Appendix. 
\end{remark}

We now give some examples of noise PDFs satisfying Assumption~\ref{asmpt:noise}.

\begin{example}\label{example:1}
    The noise PDF $P(\bz) = \rho(z_1)\times \ldots \times \rho(z_d)$, where $\rho(z) = \frac{\alpha - 1}{2(1 + |z|)^\alpha}$, for some $\alpha > 2$. It can be shown that the noise only has finite $p$-th moments for $p < \alpha - 1$.
\end{example}

\begin{example}\label{example:2}
    The noise PDF $P(\bz) = \rho(z_1) \times \ldots \times \rho(z_d)$, where $\rho(z) = \frac{c}{(z^2 + 1)\log^2(|z| + 2)}$, with $c = \int \nicefrac{1}{[(z^2 + 1)\log^2(|z| + 2)]}dz$ being the normalizing constant. It can be shown that the noise has a finite first moment, but for any $p \in (1,2]$, the $p$-th moments do not exist.
\end{example}

\begin{example}\label{example:Cauchy}
    The noise PDF $P(\bz) = \rho(z_1) \times \ldots \times \rho(z_d)$, where $\rho(z) = \frac{\gamma}{\pi\gamma^2 + \pi(x - x_0)^2}$, for some $x_0 \in \R$ and $\gamma > 0$, i.e., each component is distributed according to the Cauchy distribution. In this case, even the mean of the noise does not exist. 
\end{example}

\begin{example}\label{example:3}
    The PDF $P: \R^d \mapsto \R$ with ``radial symmetry'', i.e., $P(\bz) = \rho(\|\bz\|)$, where $\rho: \R \mapsto \R$ is itself a PDF. If $\rho$ is the PDF from Example \ref{example:2}, then the noise does not have finite $p$-th moments, for any $p > 1$, while if $\rho$ is the PDF of the Cauchy distribution, then the noise does even not have the first moment. 
\end{example}

\begin{remark} 
    While noise in Example \ref{example:1} satisfies moment condition \eqref{eq:bounded-moment}, noises in Examples \ref{example:2} and \ref{example:Cauchy} do not.
\end{remark}

Next, define the function $\bPhi:\mbb R^d \mapsto \mbb R^d$, given by $\bPhi(\bx) \triangleq \mbe_{\bz} [\bPsi (\bx + \bz)] = \int \bPsi(\bx+\bz) P(\bz) d\bz$,\footnote{If $\bPsi$ is a component-wise nonlinearity, then $\bPhi$ is a vector with components $\phi_i(x_i) = \mbe_{z_i}[\calN_1(x_i + z_i)]$, where $\E_{z_i}$ is the marginal expectation with respect to the $i$-th noise component, $i \in [d]$ (see Lemma~\ref{lm:polyak-tsypkin} ahead).} where the expectation is taken with respect to the gradient noise at a random sample. We use the shorthand $\bPhit \triangleq \mbe_{\bzt} [\bPsi (\nabla f(\bxt) + \bzt) \: \vert \: \mathcal{F}_t]$,\footnote{Conditioning on $\mathcal{F}_t$ ensures that the quantity $\nabla f(\bxt)$ is deterministic and $\bPhit$ is well defined.} where $\mathcal{F}_t$ is the natural filtration, i.e., $\mathcal{F}_1 \triangleq \sigma(\{\emptyset,\Omega\})$ and $\mathcal{F}_t \triangleq \sigma\left(\{\bx^{(2)},\ldots,\bx^{(t)}\}\right)$, for $t \geq 2$.\footnote{Recall that in our setup, the initialization $\bx^{(1)} \in \R^d$ is an arbitrary, but deterministic quantity.} The vector $\bPhit$ can be seen as the ``denoised'' version of $\bPsi^{(t)}$. Using $\bPhit$, we can rewrite the update rule~\eqref{eq:nonlin-sgd} as
\begin{equation}\label{eq:nonlin-sgd2}
    \bxtp = \bxt - \alpha_t\bPhit + \alpha_t\bet,    
\end{equation} where $\bet \triangleq \boldsymbol{\Phi}^{(t)} - \boldsymbol{\Psi}^{(t)}$ represents the \emph{effective noise} term. As we show next, the effective noise is light-tailed, even though the original noise may not be.

\begin{comment}
This allows us to establish exponential concentration inequalities and tight control of the behaviour of MGFs. Prior to that, we define the concept of \emph{sub-Gaussianity}, e.g., \citet{vershynin_2018,jin2019short}.

\begin{definition}\label{def:subgaus}
    A zero-mean random vector $\mathbf{v} \in \R^d$ is said to be sub-Gaussian, if there exists a constant $N > 0$, such that, for any $\bx \in \R^d$
    \begin{equation*}
        \E\left[ \exp\left(\langle\bx,\mathbf{v}\rangle\right)\right] \leq \exp\left(N\|\bx\|^2 \right).
    \end{equation*}
\end{definition}

It can be shown that any bounded random variable is sub-Gaussian, a fact used in the following sections (see Appendix~\ref{app:facts} for a proof). 
\end{comment}

\begin{lemma}\label{lm:error_component}
    Let Assumptions~\ref{asmpt:nonlin} and~\ref{asmpt:noise} hold. Then, the effective noise vectors $\{\bet\}_{t \in \N}$ satisfy:
    \begin{enumerate}
        \item $\E[\bet\vert \: \mathcal{F}_t] = 0$  and  $\|\bet\| \leq 2C$.

        \item The effective noise is sub-Gaussian, i.e., for any $\bx \in \R^d$, we have $\E\left[\exp\left(\langle \bx, \bet \rangle \right) \: \vert \: \mathcal{F}_t \right] \leq \exp\left(4C^2\|\bx\|^2\right)$.
    \end{enumerate}
\end{lemma}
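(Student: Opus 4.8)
The plan is to establish the two claims about the effective noise $\bet = \bPhit - \bPsi^{(t)}$ in order, using the boundedness of $\bPsi$ (Assumption~\ref{asmpt:nonlin}, property 4) as the main tool.

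\textbf{Step 1 (zero mean and boundedness).} The zero-mean claim is immediate from the definition: conditioned on $\mathcal{F}_t$, the gradient $\nabla f(\bxt)$ is deterministic, so $\E[\bPsi^{(t)} \mid \mathcal{F}_t] = \E_{\bzt}[\bPsi(\nabla f(\bxt) + \bzt) \mid \mathcal{F}_t] = \bPhit$ by the very definition of $\bPhit$; hence $\E[\bet \mid \mathcal{F}_t] = \bPhit - \bPhit = 0$. For the bound, property~4 of Assumption~\ref{asmpt:nonlin} gives $\|\bPsi(\bx)\| \leq C$ for all $\bx$, and therefore $\|\bPhit\| = \|\E_{\bzt}[\bPsi(\cdot) \mid \mathcal{F}_t]\| \leq \E_{\bzt}[\|\bPsi(\cdot)\| \mid \mathcal{F}_t] \leq C$ by Jensen/triangle inequality. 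The triangle inequality then yields $\|\bet\| \leq \|\bPhit\| + \|\bPsi^{(t)}\| \leq 2C$.

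\textbf{Step 2 (sub-Gaussianity).} Here I would invoke the standard fact that a bounded, zero-mean random vector is sub-Gaussian — essentially a Hoeffding-type lemma. Concretely, fix $\bx \in \R^d$ and let $Y = \langle \bx, \bet \rangle$; then $Y$ is $\mathcal{F}_t$-conditionally zero-mean and bounded by Cauchy-Schwarz as $|Y| \leq \|\bx\| \cdot \|\bet\| \leq 2C\|\bx\|$, i.e., $Y$ lies in an interval of length $4C\|\bx\|$. Hoeffding's lemma for a zero-mean random variable supported on an interval of length $\ell$ gives $\E[e^{Y} \mid \mathcal{F}_t] \leq \exp(\ell^2/8) = \exp\big((4C\|\bx\|)^2/8\big) = \exp(2C^2\|\bx\|^2)$, which is even a factor of $2$ better than the stated $\exp(4C^2\|\bx\|^2)$; so the claimed bound follows a fortiori (one could also get it more crudely by bounding $\E[e^Y] \le e^{\E[Y]} \cdot (\text{something})$ via $e^y \le y + e^{y^2}$-type inequalities, at the cost of the weaker constant). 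I would either cite the Hoeffding lemma directly or, since the excerpt mentions an appendix fact that bounded random variables are sub-Gaussian, reference that.

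There is no real obstacle here; the only mild subtlety is making sure all expectations are correctly interpreted as conditional on $\mathcal{F}_t$ (so that $\nabla f(\bxt)$ is frozen and $\bPhit$ is a well-defined deterministic-given-$\mathcal{F}_t$ vector, per the paper's footnote), and making sure the Hoeffding-lemma constant is tracked so the stated $4C^2$ is genuinely implied. I would present the proof in two short labeled parts mirroring the two items of the lemma, keeping Step~1 to a couple of lines and Step~2 as a one-paragraph application of the boundedness-implies-sub-Gaussian fact.
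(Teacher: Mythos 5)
Your proposal is correct and follows essentially the same route as the paper: the first claim is proved identically (zero mean by definition of $\bPhit$, boundedness by the triangle inequality and Jensen), and the second claim is reduced to the standard fact that a conditionally zero-mean, bounded random vector has a sub-Gaussian moment-generating function. The only cosmetic difference is that you invoke Hoeffding's lemma (yielding the sharper constant $\exp(2C^2\|\bx\|^2)$), whereas the paper's Fact~\ref{fact:subgauss} uses the elementary inequality $e^x \leq x + e^{x^2}$ together with Cauchy--Schwarz to get exactly the stated $\exp(4C^2\|\bx\|^2)$; both arguments are valid and your bound implies the claimed one.
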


\subsection{Main Results}\label{subsec:theory-nonconv}

In this section we establish convergence in high probability of the proposed framework. Our results are facilitated by a novel result on the interplay of $\bPhi(\bx)$ and the original vector $\bx$, which is presented next. 

\begin{lemma}\label{lm:huber}
    Let Assumptions \ref{asmpt:nonlin} and \ref{asmpt:noise} hold. Then $\langle \bPhi(\bx),\bx\rangle \geq \min\left\{\eta_1\|\bx\|,\eta_2\|\bx\|^2 \right\}$, for any $\bx \in \R^d$, where $\eta_1,\eta_2 > 0$, are noise, nonlinearity and problem dependent constants.
\end{lemma}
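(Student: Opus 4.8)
The plan is to split the argument according to the two forms the nonlinearity can take under Assumption~\ref{asmpt:nonlin}, and in each case reduce the inner product $\langle \bPhi(\bx),\bx\rangle$ to a one-dimensional integral that can be lower-bounded by exploiting the symmetry of the noise PDF.

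Let me think about the component-wise case first. Here $\bPhi(\bx) = (\phi_1(x_1),\dots,\phi_d(x_d))^\top$ with $\phi_i(x_i) = \E_{z_i}[\calN_1(x_i+z_i)]$, where $z_i$ has the marginal PDF $\rho_i$ (obtained by integrating out the other coordinates; this marginal is symmetric and positive on a neighbourhood of zero). So $\langle\bPhi(\bx),\bx\rangle = \sum_i x_i\phi_i(x_i)$, and it suffices to show a scalar inequality $x\phi(x) \ge \min\{\eta_1|x|, \eta_2 x^2\}$ for a generic coordinate, then sum. Since $\calN_1$ is odd and $\rho$ is symmetric, $\phi$ is odd, so WLOG take $x>0$. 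Write $\phi(x) = \int \calN_1(x+z)\rho(z)\,dz = \tfrac12\int [\calN_1(x+z)-\calN_1(z-x)]\rho(z)\,dz \ge 0$ using monotonicity of $\calN_1$ and symmetry of $\rho$. For the quantitative bound I would restrict the integral to $z \in [-b,b]$ for a suitable $b \le B_0/\sqrt d$ (so $\rho(z)\ge \rho_{\min}>0$ there, using "positive around zero''), and on that window $\calN_1(x+z)-\calN_1(z-x) \ge$ (something like) the increment of $\calN_1$ across an interval of length $2x$ near the origin. The third property of Assumption~\ref{asmpt:nonlin} — $\calN_1$ is either discontinuous at $0$ or strictly increasing on $(-c_1,c_1)$ — is exactly what makes this increment strictly positive: in the discontinuous case the jump contributes a constant amount, giving the $\eta_1|x|$ branch (the "sign-like'' regime, bounded below away from $0$ for $|x|$ small, up to the factor $|x|$ coming from multiplying by $x$); in the strictly-increasing case, for $x$ small the increment is at least linear in $x$, giving the $\eta_2 x^2$ branch. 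For large $x$, boundedness $|\calN_1|\le C_1$ means $\phi(x)$ stays bounded below by a constant (the full mass of $\rho$ on the positives eventually sees $\calN_1$ at its "saturated'' positive value), and $x\phi(x) \ge \eta_1 x$; taking the minimum of the two regimes over all $x$ yields the stated form. I would carry this out by considering the cases $0<x\le x_0$ and $x>x_0$ for an explicitly chosen threshold $x_0$ (e.g. $x_0 = c_1/2$ or $x_0=b$), checking each branch.

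For the joint case $\bPsi(\bx) = \bx\calN_2(\|\bx\|)$, I would decompose the noise into its radial/projected parts relative to the direction $\bx/\|\bx\|$. Write $\bPhi(\bx) = \E_{\bz}[(\bx+\bz)\calN_2(\|\bx+\bz\|)]$, so $\langle\bPhi(\bx),\bx\rangle = \E_{\bz}[\langle\bx+\bz,\bx\rangle\,\calN_2(\|\bx+\bz\|)]$. Pairing $\bz$ with its reflection $-\bz$ (legitimate since $P$ is symmetric) gives $\langle\bPhi(\bx),\bx\rangle = \tfrac12\E_{\bz}\big[\|\bx\|^2(\calN_2(\|\bx+\bz\|)+\calN_2(\|\bx-\bz\|)) + \langle\bz,\bx\rangle(\calN_2(\|\bx+\bz\|)-\calN_2(\|\bx-\bz\|))\big]$. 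The key sign fact is that $\langle\bz,\bx\rangle$ and $\calN_2(\|\bx+\bz\|)-\calN_2(\|\bx-\bz\|)$ have the same sign: if $\langle\bz,\bx\rangle\ge0$ then $\|\bx+\bz\|\ge\|\bx-\bz\|$, and since $\calN_2$ is non-increasing the bracketed difference is $\le 0$ — wait, that's opposite; so the cross term is $\le 0$ in general, which is the wrong direction. Instead I would use the "$a\mapsto a\calN_2(a)$ non-decreasing'' hypothesis: $\langle\bx+\bz,\bx\rangle\calN_2(\|\bx+\bz\|) = \langle\tfrac{\bx+\bz}{\|\bx+\bz\|},\bx\rangle\cdot\|\bx+\bz\|\calN_2(\|\bx+\bz\|)$, and bound the scalar function $\|\bx+\bz\|\calN_2(\|\bx+\bz\|)$ below. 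Concretely, I would restrict attention to the event $\|\bz\|\le \|\bx\|/2$ (plus, for small $\|\bx\|$, the event $\|\bz\|\le B_0/2$ where $P$ is bounded below), on which $\langle\bx+\bz,\bx\rangle \ge \|\bx\|^2/2 > 0$ and $\|\bx+\bz\|$ is comparable to $\|\bx\|$, so $\calN_2(\|\bx+\bz\|) \ge \calN_2(3\|\bx\|/2) > 0$; this gives $\langle\bPhi(\bx),\bx\rangle \ge \Pr(\|\bz\|\le\|\bx\|/2)\cdot \tfrac12\|\bx\|^2\calN_2(3\|\bx\|/2) + (\text{negative-but-controlled remainder})$. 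The remainder over the complementary event is controlled because $\|\bx\calN_2(\|\bx\|)\|\le C_2$ means the integrand is bounded by $C_2\|\bx\|$ in absolute value. Splitting $\|\bx\|\le R_0$ (small regime, yielding $\eta_2\|\bx\|^2$) versus $\|\bx\| > R_0$ (large regime, where $\Pr(\|\bz\|\le\|\bx\|/2)\to$ a constant bounded below, $\|\bx\|\calN_2(\cdot)$ is bounded below by saturation / monotonicity of $a\calN_2(a)$, and the whole thing is $\ge\eta_1\|\bx\|$) produces the claimed $\min\{\eta_1\|\bx\|,\eta_2\|\bx\|^2\}$.

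The main obstacle I anticipate is the joint case: handling the cross term correctly and verifying that the "small noise'' event carries enough probability mass even when $P$ is heavy-tailed — but positivity of $P$ on $\{\|\bz\|\le B_0\}$ guarantees a strictly positive (problem-dependent) mass there, which suffices for the small-$\|\bx\|$ regime, and for large $\|\bx\|$ the event $\{\|\bz\|\le\|\bx\|/2\}$ eventually contains $\{\|\bz\|\le B_0\}$ so its probability is bounded below by that same constant. Making the threshold choices ($x_0$, $R_0$, $b$) explicit enough that the two branches genuinely patch together at their crossover is the fiddly part; conceptually everything reduces to the pairing-with-$-\bz$ symmetrization plus the three structural properties (monotonicity, oddness/non-increase, strict behaviour near $0$, and boundedness) of Assumption~\ref{asmpt:nonlin}.
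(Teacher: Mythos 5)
Your component-wise argument is essentially workable and is a reasonable alternative to the paper's route (the paper instead invokes a cited Polyak--Tsypkin lemma giving that each $\phi_i$ is odd, non-decreasing and differentiable at zero with $\phi_i^\prime(0)>0$, then Taylor-expands $\phi_i(x)=\phi_i^\prime(0)x+h_i(x)x$ to get the quadratic branch for $|x|\le\xi$ and monotonicity to get the linear branch for $|x|>\xi$). Two small corrections there: in the discontinuous case the jump does \emph{not} contribute a constant amount to $\phi(x)$ for small $x$ --- it contributes an amount proportional to the probability that $z$ lands within distance $x$ of the jump, i.e.\ $\phi(x)\gtrsim x$, so you again land in the $\eta_2x^2$ branch (which is the binding one for small $x$ anyway, so the conclusion survives); and ``then sum'' is not quite enough to pass from $\sum_i\min\{\xi|x_i|,x_i^2\}$ to $\min\{\eta_1\|\bx\|,\eta_2\|\bx\|^2\}$ --- the paper lower-bounds the sum by the single largest coordinate and uses $\|\bx\|_\infty\ge\|\bx\|/\sqrt d$, which is where the $\sqrt d$ and $d$ in $\eta_1,\eta_2$ come from.

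The joint case, however, has a genuine gap. Having correctly observed that the naive $\bz\leftrightarrow-\bz$ pairing produces a cross term of the unfavourable sign, you abandon the pairing and instead split on the event $\{\|\bz\|\le\|\bx\|/2\}$, bounding the complementary contribution in absolute value by $C_2\|\bx\|$. But that remainder is then $-C_2\|\bx\|\,\Pr(\|\bz\|>\|\bx\|/2)$, which for small $\|\bx\|$ is of order $\|\bx\|$ (the probability tends to $1$), while your main term $\Pr(\|\bz\|\le\|\bx\|/2)\cdot\tfrac12\|\bx\|^2\calN_2(3\|\bx\|/2)$ is of order $\|\bx\|^{2+d}$ at best (the good event is a shrinking ball). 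The lower bound is therefore negative throughout the small-$\|\bx\|$ regime, which is precisely where the $\eta_2\|\bx\|^2$ branch must be established; no choice of thresholds patches this. The paper's resolution is the opposite of yours: it \emph{keeps} the reflection pairing, so that the integrand becomes $M_2(\bx,\bz)=(\|\bx\|^2+\langle\bz,\bx\rangle)\calN_2(\|\bx+\bz\|)+(\|\bx\|^2-\langle\bz,\bx\rangle)\calN_2(\|\bx-\bz\|)$ on the half-space $\{\langle\bz,\bx\rangle\ge0\}$, and then controls the adverse cross term via the structural inequality $|\calN_2(\|\bx+\bz\|)-\calN_2(\|\bx-\bz\|)|\le\frac{\|\bx\|}{\|\bz\|}\bigl[\calN_2(\|\bx+\bz\|)+\calN_2(\|\bx-\bz\|)\bigr]$ (Lemma~\ref{lm:jakovetic-joint}). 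Restricted to the cone where $\langle\bz,\bx\rangle\le\tfrac12\|\bz\|\|\bx\|$ this shows $M_2\ge\tfrac12\|\bx\|^2[\calN_2(\|\bx+\bz\|)+\calN_2(\|\bx-\bz\|)]$, while $M_2\ge0$ everywhere on the half-space, so discarding the rest of the half-space loses nothing negative; the positivity of $P$ near zero then supplies a uniform mass on the cone intersected with a fixed small ball, and $a\calN_2(a)$ non-decreasing converts $\calN_2(\|\bx\|+\|\bz\|)$ into the $\min\{\|\bx\|^{-1},1\}$ factor that yields the two branches. Without that pairing-plus-cone argument (or an equivalent mechanism forcing the off-event contribution to be nonnegative rather than merely bounded), your joint-case bound does not close.
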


Lemma~\ref{lm:huber} provides a novel characterization of the inner product between the ``denoised'' nonlinearity $\bPhi$ at vector $\bx$ and the vector $\bx$ itself. We specialize the value of constants $\eta_1,\eta_2$ for different nonlinearities in the Appendix. We are now ready to state our high-probability convergence bounds for non-convex costs.

\begin{theorem}\label{thm:non-conv}
    Let Assumptions \ref{asmpt:nonlin}, \ref{asmpt:L-smooth} and \ref{asmpt:noise} hold. Let $\{\bxt\}_{t \in \N}$ be the sequence generated by Algorithm \ref{alg:nonlin-sgd}, with step-size $\alpha_t = \frac{a}{(t + 1)^\delta}$, for any $\delta \in (\nicefrac{2}{3},1)$ and $a > 0$. Then, for any $t \geq 1$ and $\beta \in (0,1)$, with probability at least $1 - \beta$, the following hold.
    \begin{enumerate}
        \item For the choice $\delta \in (\nicefrac{2}{3},\nicefrac{3}{4})$, we have 
        \begin{align*}
            & \quad \min_{k \in [t]}\|\nabla f(\bxk)\|^2 = \frac{2R_1(\beta)/\eta_2}{(t+2)^{1-\delta}-2^{1-\delta}} + \frac{2R_2/\eta_2}{(t+2)^{3\delta-2}-2^{3\delta-2}}
            \\ & \quad \quad\quad\quad+ \left(\frac{2R_1(\beta)/\eta_1}{(t+2)^{1-\delta}-2^{1-\delta}}\right)^2 + \left(\frac{2R_2/\eta_1}{(t+2)^{3\delta-2}-2^{3\delta-2}}\right)^2,    
        \end{align*} where $R_1(\beta) \triangleq (1-\delta)\big[\left(f(\bx^{(1)}) - f^\star + \log(\nicefrac{1}{\beta})\right)/a + aLC^2(\nicefrac{1}{2} + 8LD_{\mathcal{X}})/(2\delta-1)\big]$ and $R_2 \triangleq \frac{8a^3C^4L^2}{(1-\delta)(3-4\delta)}$.

        \item For the choice $\delta = \nicefrac{3}{4}$, we have
        \begin{align*}
            &\min_{k \in [t]}\|\nabla f(\bxk)\|^2 = \frac{2R_3(t,\beta)/\eta_2}{(t+2)^{1/4} - 2^{1/4}} + \left(\frac{\sqrt{2}R_3(t,\beta)/\eta_1}{(t+2)^{1/4} - 2^{1/4}}\right)^2,
        \end{align*} where $R_3(t,\beta) \triangleq \left(f(\bx^{(1)}) - f^\star + \log(\nicefrac{1}{\beta})\right)/4a + aLC^2(\nicefrac{1}{4} + 4LD_{\mathcal{X}}) + 32a^3C^4L^2\log(t+1)$.

        \item For the choice $\delta \in (\nicefrac{3}{4},1)$, we have 
        \begin{align*}
            &\min_{k \in [t]}\|\nabla f(\bxk)\|^2 = \frac{2R_4(\beta)/\eta_2}{(t+2)^{1-\delta} - 2^{1-\delta}} + \left(\frac{\sqrt{2}R_4(\beta)/\eta_1}{(t+2)^{1-\delta} - 2^{1-\delta}}\right)^2,
        \end{align*} where $R_4(\beta) \triangleq (1-\delta)\big[\left(f(\bx^{(1)}) - f^\star + \log(\nicefrac{1}{\beta})\right)/a + aLC^2(\nicefrac{1}{2} + 8LD_{\mathcal{X}})/(2\delta-1)\big] + \nicefrac{8a^3C^4L^2}{(1-\delta)(4\delta-3)}$.
    \end{enumerate}
\end{theorem}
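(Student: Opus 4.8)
\emph{Proof proposal.} My plan is a descent-lemma argument combined with an exponential-supermartingale concentration bound, relying on the two structural facts already in hand: the effective noise $\bet = \bPhit - \bPsit$ is conditionally mean-zero and sub-Gaussian (Lemma~\ref{lm:error_component}), and $\langle \bPhi(\bx),\bx\rangle \geq \min\{\eta_1\|\bx\|,\eta_2\|\bx\|^2\}$ (Lemma~\ref{lm:huber}).

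\textbf{Step 1 (descent and telescoping).} Apply $L$-smoothness to $f(\bxtp)$ and substitute $\bxtp = \bxt - \alpha_t\bPhit + \alpha_t\bet$ from~\eqref{eq:nonlin-sgd2}. Since $\bPhit - \bet = \bPsit$ with $\|\bPsit\|\leq C$, the quadratic term is at most $\tfrac{L}{2}\alpha_t^2 C^2$; applying Lemma~\ref{lm:huber} with $\bx = \nabla f(\bxt)$ bounds $\langle\nabla f(\bxt),\bPhit\rangle$ below by $\min\{\eta_1\|\nabla f(\bxt)\|,\eta_2\|\nabla f(\bxt)\|^2\}$. Telescoping the resulting per-step inequality over $k=1,\dots,t$ and using $f^\star \leq f(\bx^{(t+1)})$ yields
\[
\sum_{k=1}^t \alpha_k\min\{\eta_1\|\nabla f(\bxk)\|,\eta_2\|\nabla f(\bxk)\|^2\} \;\leq\; f(\bx^{(1)}) - f^\star + \sum_{k=1}^t \alpha_k\langle\nabla f(\bxk),\bek\rangle + \tfrac{LC^2}{2}\sum_{k=1}^t\alpha_k^2 .
\]

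\textbf{Step 2 (a priori gradient envelope and concentration).} Because $\|\bPsi\|\leq C$, the iterate moves by at most $\alpha_k C$ per step, so $\|\bxk - \bx^{(1)}\| \leq C\sum_{j<k}\alpha_j \leq \tfrac{aC}{1-\delta}(k+1)^{1-\delta}$; combining with $L$-smoothness and the stationary point nearest $\bx^{(1)}$ (so $\|\bx^{(1)}-\bx^\star\|^2 = D_{\mathcal{X}}$) gives a \emph{deterministic} envelope $\|\nabla f(\bxk)\|^2 \leq \bar g_k^2 := 2L^2D_{\mathcal{X}} + \tfrac{2a^2L^2C^2}{(1-\delta)^2}(k+1)^{2-2\delta}$ that holds for every noise realization. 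Since $\nabla f(\bxk)$ is $\mathcal{F}_k$-measurable, Lemma~\ref{lm:error_component} gives $\E[\exp(\alpha_k\langle\nabla f(\bxk),\bek\rangle)\mid\mathcal{F}_k] \leq \exp(4C^2\alpha_k^2\|\nabla f(\bxk)\|^2) \leq \exp(4C^2\alpha_k^2\bar g_k^2)$, so $M_t := \exp\big(\sum_{k=1}^t [\alpha_k\langle\nabla f(\bxk),\bek\rangle - 4C^2\alpha_k^2\bar g_k^2]\big)$ is a nonnegative supermartingale with mean at most $1$. Markov's inequality then gives: with probability at least $1-\beta$, $\sum_{k=1}^t \alpha_k\langle\nabla f(\bxk),\bek\rangle \leq 4C^2\sum_{k=1}^t\alpha_k^2\bar g_k^2 + \log(\tfrac1\beta)$.

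\textbf{Step 3 (deterministic sums, case split, conversion).} Substituting Step 2 into Step 1 leaves only deterministic quantities. Using $\sum_{k=1}^t\alpha_k^2 \leq \tfrac{a^2}{2\delta-1}$ and $\sum_{k=1}^t\alpha_k^2\bar g_k^2 \leq \tfrac{2a^2L^2D_{\mathcal{X}}}{2\delta-1} + \tfrac{2a^4L^2C^2}{(1-\delta)^2}\sum_{k=1}^t(k+1)^{2-4\delta}$, the three regimes of $\delta$ come entirely from $\sum_{k=1}^t(k+1)^{2-4\delta}$: it grows like $\tfrac{(t+2)^{3-4\delta}}{3-4\delta}$ for $\delta<\tfrac34$ (producing the $R_2$ term with its slower decay rate $(t+2)^{-(3\delta-2)}$), like $\log(t+1)$ for $\delta=\tfrac34$ (producing the $\log(t+1)$ inside $R_3$), and stays bounded by $\tfrac1{4\delta-3}$ for $\delta>\tfrac34$ (producing the last term of $R_4$). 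Dividing by $\sum_{k=1}^t\alpha_k \geq \tfrac{a}{1-\delta}[(t+2)^{1-\delta}-2^{1-\delta}]$ bounds $\min_{k\in[t]}\min\{\eta_1\|\nabla f(\bxk)\|,\eta_2\|\nabla f(\bxk)\|^2\}$ by some explicit $c=c(t,\beta)$; since $\min\{\eta_1 g,\eta_2 g^2\}\leq c$ forces $g^2 \leq (c/\eta_1)^2 + c/\eta_2$, this yields the stated bounds on $\min_{k\in[t]}\|\nabla f(\bxk)\|^2$ (for $\delta<\tfrac34$ one first splits $c$ into its $(t+2)^{-(1-\delta)}$ and $(t+2)^{-(3\delta-2)}$ parts, which accounts for the extra factors of $2$; the choice $\delta=\tfrac34$ balances the two exponents $1-\delta$ and $3\delta-2$, giving the overall $\widetilde{\mathcal O}(t^{-1/4})$ rate).

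\textbf{Main obstacle.} The delicate point is that the conditional sub-Gaussian proxy of $\langle\nabla f(\bxk),\bek\rangle$ is proportional to $\|\nabla f(\bxk)\|^2$, which is random and, in the non-convex regime, has no a priori bound, so the supermartingale in Step 2 cannot even be written down without first securing a deterministic envelope $\bar g_k$. The escape hatch is the uniform boundedness of $\bPsi$, which confines the iterates to a slowly growing ball around $\bx^{(1)}$; the radius of that ball is precisely what injects $D_{\mathcal{X}}$ into $R_1,R_3,R_4$. Everything downstream — the deterministic sums, the three-way split on $\delta$, and matching the explicit constants — is then bookkeeping.
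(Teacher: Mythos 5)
Your proposal is correct and follows essentially the same route as the paper: the same descent-plus-telescoping step, the same deterministic gradient envelope $\|\nabla f(\bxk)\|\leq L(\|\bx^{(1)}-\bx^\star\|+C\sum_{j<k}\alpha_j)$ obtained from the uniform bound on $\bPsi$, the same MGF/Markov concentration (your supermartingale $M_t$ is just the paper's iterated conditional-expectation bound repackaged), and the same three-way split driven by $\sum_k(k+1)^{2-4\delta}$. The only cosmetic difference is the final conversion from the weighted-sum bound to $\min_{k\in[t]}\|\nabla f(\bxk)\|^2$ — you argue pointwise at the minimizing index via $\min\{\eta_1 g,\eta_2 g^2\}\leq c\Rightarrow g^2\leq (c/\eta_1)^2+c/\eta_2$, while the paper splits indices by whether $\|\nabla f(\bxk)\|\leq\eta_1/\eta_2$ and applies Jensen, which is where its factors of $2$ and $\sqrt{2}$ come from; both are valid.
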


\begin{remark}
    Theorem \ref{thm:non-conv} provides convergence in high-probability of nonlinear SGD in the online setting, for a broad range of nonlinearities and step-sizes, with the best rate achieved for the choice $\delta = \nicefrac{3}{4}$. Compared to \cite{nguyen2023improved}, who achieve the rate $\mathcal{O}(t^{\nicefrac{2(1-p)}{(3p-2)}}\log(\nicefrac{t}{\beta}))$ for clipped SGD, with the exponent explicitly depending on $p$ and vanishing as $p \rightarrow 1$, our results apply to a broad range of nonlinearities and are strictly better whenever $p < 6/5$.  
\end{remark}

\begin{remark}
    Note that for both step-size choices $\delta_1 \in (\nicefrac{2}{3},\nicefrac{3}{4})$ and $\delta_2 \in (\nicefrac{3}{4},1)$, we can get arbitrarily close to the rate $t^{-\nicefrac{1}{4}}$, by choosing $\delta_1 = \nicefrac{3}{4} - \epsilon_1$, for $\epsilon_1 \in (0,\nicefrac{1}{12})$ and $\delta_2 = \nicefrac{3}{4} + \epsilon_2$, for $\epsilon_2 \in (0,\nicefrac{1}{4})$. In both cases, the rate incurs a constant multiplicative factor $1/\epsilon_i$, $i \in [2]$.
\end{remark}

\begin{remark}
    For the choice of $\delta = 3/4$, our rate incurs an additional $\log(t+1)$ factor. This additional factor is unavoidable in online learning, where the time horizon is unknown and a time-varying step-size is required. The rate in \cite{nguyen2023improved} incurs the same logarithmic factor for the ``unknown $T$'' regime (see Theorem B.2 in their work). The logarithmic factor can be removed if the time horizon $T \in \N$ is preset, by using a fixed step-size inversely proportional to the time horizon, i.e., $\alpha_t \equiv \alpha \propto T^{-3/4}$. Our analysis readily goes through when a fixed step-size is used.     
\end{remark}

\begin{remark}\label{rmk:metric}
    The guarantees in Theorem \ref{thm:non-conv} (and Theorem \ref{thm:non-sym} ahead) are stated in terms of the metric $\min_{k \in [t]}\|\nabla f(\bxk)\|^2$, widely used in non-convex optimization. However, in our proof, we provide guarantees of the same order for $\sum_{k = 1}^t\widetilde{\alpha}_k\min\{\|\nabla f(\bxk)\|,\|\nabla f(\bxk)\|^2\}$, which is more general, in the sense that the high-probability bounds on this metric imply the  bounds on the metric $\min_{k \in [t]}\|\nabla f(\bxk)\|^2$. For details, as well as comparisons with the metric used in \cite{nguyen2023improved}, see the Appendix.
\end{remark}

\begin{remark}
    The convergence bounds in Theorem \ref{thm:non-conv} depend on standard quantity, such as the initialization gap (through $f(\bx^{(1)})-f^\star$ and $D_{\mathcal{X}}$) and smoothness parameter $L$, as well as nonlinearity and noise dependent quantities, such as $C$, $\eta_1$ and $\eta_2$. These constants can be specialized for specific nonlinearities and noises. For example, consider the noise from Example \ref{example:1}, sign nonlinearity and step-size parameter $\delta = 3/4$. It can then be shown that $C = \sqrt{d}$, $\eta_1 = \nicefrac{(\alpha-1)}{2\alpha\sqrt{d}}$, $\eta_2 = \nicefrac{(\alpha-1)}{d}$ (see the Appendix), resulting in the following problem related constant (up to a logarithmic factor) in the leading term $\frac{ad^2L(\nicefrac{1}{4} + 4LD_{\mathcal{X}})}{\alpha-1} + \frac{32a^3d^3L^2\log(t+1)}{\alpha-1} +\frac{d\left(f(\bx^{(1)}) - f^\star + \log(\nicefrac{1}{\beta})\right)}{4a(\alpha-1)}$, where we recall that $\alpha > 2$. Choosing $\alpha = d^{-1/2}$, reduces the overall dependence on problem dimension to $d^{3/2}$. 
\end{remark}

We specialize the rates from Theorem \ref{thm:non-conv} for specific choices of nonlinearities and noise in the Appendix, showing that our rates predict that \emph{clipping is not always the optimal choice of nonlinearity} and confirm the findings of \cite{zhang2020adaptive}, namely that component-wise clipping demonstrates better dimension dependence that joint clipping, for some noise instances. This is further validated in our numerical experiments in Section \ref{sec:an-num}.

Next, if the cost is strongly convex, results of Theorem \ref{thm:non-conv} can be improved. Define $\widetilde{\alpha}_k \triangleq \nicefrac{\alpha_k}{\sum_{j = 1}^{t}\alpha_j}$, $k \in [t]$, so that $\sum_{k = 1}^{t}\widetilde{\alpha}_k = 1$ and define a weighted average of iterates as $\widehat{\bx}^{(t)} \triangleq \sum_{k = 1}^{t}\widetilde{\alpha}_k\bxk$. The estimator $\widehat{\bx}^{(t)}$ can be seen as generalized Polyak-Ruppert averaging, e.g., \citet{ruppert,polyak,polyak-ruppert}. We then have the following result.

\begin{corollary}\label{cor:cvx}
    Let Assumptions \ref{asmpt:nonlin}-\ref{asmpt:noise} hold. Let $\{\bxt\}_{t \in \N}$ be the sequence generated by Algorithm \ref{alg:nonlin-sgd}, with step-size $\alpha_t = \frac{a}{(t + 1)^\delta}$, for any $\delta \in (\nicefrac{2}{3},1)$ and $a > 0$. Then, for any $t \geq 1$ and any $\beta \in (0,1)$, with probability at least $1 - \beta$, the following hold.
    \begin{enumerate}
        \item For the choice $\delta \in (\nicefrac{2}{3},\nicefrac{3}{4})$, we have $\|\widehat{\bx}^{(t)} - \bx^\star\|^2 = \mathcal{O}\left((t^{\delta - 1} + t^{2-3\delta})\log(\nicefrac{1}{\beta}))\right)$.

        \item For the choice $\delta = \nicefrac{3}{4}$, we have $\|\widehat{\bx}^{(t)} - \bx^\star\|^2 = \mathcal{O}\left(t^{-\nicefrac{1}{4}}\log(\nicefrac{t}{\beta}) \right)$.

        \item For the choice $\delta \in (\nicefrac{3}{4},1)$, we have $\|\widehat{\bx}^{(t)} - \bx^\star\|^2 = \mathcal{O}\left(t^{\delta - 1}\log(\nicefrac{1}{\beta}) \right)$.
    \end{enumerate}
\end{corollary}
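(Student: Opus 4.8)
The plan is to leverage the strong convexity (Assumption~\ref{asmpt:cvx}) together with the high-probability machinery already developed for Theorem~\ref{thm:non-conv}. Recall from \eqref{eq:nonlin-sgd2} that $\bxtp = \bxt - \alpha_t\bPhit + \alpha_t\bet$, where $\bet$ is the effective (sub-Gaussian, mean-zero) noise of Lemma~\ref{lm:error_component}. First I would expand $\|\bxtp - \bx^\star\|^2$, obtaining
\[
\|\bxtp - \bx^\star\|^2 = \|\bxt - \bx^\star\|^2 - 2\alpha_t\langle \bPhit, \bxt - \bx^\star\rangle + 2\alpha_t\langle \bet, \bxt - \bx^\star\rangle + \alpha_t^2\|\bPhit - \bet\|^2.
\]
The last term is bounded by $4\alpha_t^2 C^2$ using $\|\bPhit\|\le C$ and $\|\bet\|\le 2C$. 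The key inner-product term $\langle \bPhit, \bxt - \bx^\star\rangle$ is where I would apply Lemma~\ref{lm:huber}: since $\bPhit = \bPhi(\nabla f(\bxt))$ (because the noise is i.i.d.\ and symmetric, so the denoised map is evaluated at $\nabla f(\bxt)$), and since $\nabla f(\bx^\star) = 0$, I would relate $\langle \bPhi(\nabla f(\bxt)), \bxt - \bx^\star\rangle$ to $\langle \bPhi(\nabla f(\bxt)), \nabla f(\bxt)\rangle$ via monotonicity/co-coercivity-type arguments and then invoke $\langle \bPhi(\nabla f(\bxt)), \nabla f(\bxt)\rangle \ge \min\{\eta_1\|\nabla f(\bxt)\|, \eta_2\|\nabla f(\bxt)\|^2\}$. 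Strong convexity gives $\|\nabla f(\bxt)\| \ge \mu\|\bxt - \bx^\star\|$ and $\langle \nabla f(\bxt), \bxt - \bx^\star\rangle \ge \mu\|\bxt-\bx^\star\|^2$, which converts the gradient-norm lower bound into a contraction factor $\min\{c_1\alpha_t, c_2\alpha_t^2\}$ (depending on whether $\|\bxt-\bx^\star\|$ is large or small) on $\|\bxt-\bx^\star\|^2$, for constants $c_1, c_2$ proportional to $\eta_1\mu, \eta_2\mu^2$ respectively.

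The second step is to control the noise term $\sum_t (\text{weights})\cdot 2\alpha_t\langle \bet, \bxt - \bx^\star\rangle$ in high probability. Here I would reuse the same approach as in the proof of Theorem~\ref{thm:non-conv}: since $\bet$ is conditionally mean-zero and sub-Gaussian with parameter $O(C^2)$, and $\bxt - \bx^\star$ is $\mathcal{F}_t$-measurable, the sequence of inner products forms a martingale difference sequence with sub-Gaussian increments; a time-uniform Freedman/Bernstein-type concentration inequality (of the kind used for Theorem~\ref{thm:non-conv}) yields, with probability $\ge 1-\beta$, a bound of the form $\sum_t \alpha_t^2 \cdot O(C^2\log(1/\beta)) + \tfrac12\sum_t(\text{contraction})\cdot\|\bxt-\bx^\star\|^2$, where the second piece is absorbed back into the left-hand side of the recursion. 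The standard trick is to split $\langle\bet,\bxt-\bx^\star\rangle$ using Young's inequality against the contraction term so that the adversarial part is quadratic-in-iterate and gets cancelled.

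The third step is to unroll the resulting scalar recursion $d_{t+1} \le (1 - \rho_t) d_t + \alpha_t^2 \cdot O(C^2 + C^2\log(1/\beta))$, where $d_t \triangleq \|\bxt-\bx^\star\|^2$ and $\rho_t = \min\{c_1\alpha_t, c_2\alpha_t^2\}$. For $\alpha_t = a/(t+1)^\delta$ with $\delta \in (2/3,1)$, eventually $\alpha_t$ is small enough that $\rho_t = c_2\alpha_t^2 = c_2 a^2/(t+1)^{2\delta}$. Since $2\delta > 1$, this is \emph{not} summable-to-infinity in the usual Robbins--Monro sense, so standard $O(1/t)$ rates do not apply; instead, unrolling a recursion with decay $\prod_{s=k}^t(1-c_2 a^2 s^{-2\delta})$ and a perturbation of size $a^2 k^{-2\delta}$ gives, by a standard lemma on such recursions (e.g.\ comparing the product to $\exp(-c_2 a^2 \sum s^{-2\delta})$ and noting $\sum_{s\le t} s^{-2\delta} \asymp t^{1-2\delta}$), that $d_t = O(t^{\delta-1} + t^{2-3\delta})$ — the first term from the ``bias'' of the slowly-vanishing contraction, the second from accumulated noise. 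Then, since $\widehat{\bx}^{(t)} = \sum_k \widetilde\alpha_k\bxk$ is a convex combination, Jensen gives $\|\widehat{\bx}^{(t)} - \bx^\star\|^2 \le \sum_k\widetilde\alpha_k d_k$, and plugging in the per-iterate bound together with $\sum_{j\le t}\alpha_j \asymp t^{1-\delta}$ produces the stated rates: $O((t^{\delta-1}+t^{2-3\delta})\log(1/\beta))$ for $\delta\in(2/3,3/4)$, the balanced $O(t^{-1/4}\log(t/\beta))$ at $\delta=3/4$ (the $\log t$ entering because the two exponents coincide and a harmonic-type sum appears), and $O(t^{\delta-1}\log(1/\beta))$ for $\delta\in(3/4,1)$ where the bias term dominates.

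The main obstacle I anticipate is \textbf{Step~1}: correctly translating Lemma~\ref{lm:huber}, which is stated for $\langle\bPhi(\bx),\bx\rangle$, into a usable lower bound for $\langle\bPhi(\nabla f(\bxt)), \bxt - \bx^\star\rangle$. The issue is that $\bxt - \bx^\star$ and $\nabla f(\bxt)$ are different vectors; one needs that $\bPhi$ is a ``monotone-like'' map (which follows from $\calN_1$ nondecreasing / $a\mapsto a\calN_2(a)$ nondecreasing in Assumption~\ref{asmpt:nonlin}) so that $\langle\bPhi(\nabla f(\bxt)) - \bPhi(\nabla f(\bx^\star)), \nabla f(\bxt) - \nabla f(\bx^\star)\rangle \ge 0$, combined with convexity of $f$ to sign-align $\nabla f(\bxt)$ with $\bxt-\bx^\star$. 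Making this rigorous — essentially showing the denoised nonlinear gradient field still ``points toward'' $\bx^\star$ with a quantitative margin governed by $\eta_1,\eta_2$ — is the technical crux; everything downstream is the concentration argument from Theorem~\ref{thm:non-conv} plus a deterministic recursion lemma.
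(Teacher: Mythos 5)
Your route is genuinely different from the paper's, and the obstacle you flag at the end is a real, unresolved gap rather than a technicality. Lemma~\ref{lm:huber} lower-bounds $\langle \bPhi(\bx),\bx\rangle$ for the \emph{same} vector $\bx$ in both slots; you need $\langle \bPhi(\nabla f(\bxt)),\bxt-\bx^\star\rangle$. The monotonicity argument you sketch does not go through: for a component-wise nonlinearity, $[\bPhi(\nabla f(\bxt))]_i$ has the sign of $[\nabla f(\bxt)]_i$, but convexity only guarantees $\langle\nabla f(\bxt),\bxt-\bx^\star\rangle\ge 0$ as a full inner product, not component-wise sign alignment, so the product $[\bPhi(\nabla f(\bxt))]_i\,[\bxt-\bx^\star]_i$ can be negative in many coordinates and there is no obvious quantitative lower bound. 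Two further problems downstream: (i) your contraction analysis is internally inconsistent --- in the quadratic branch of Lemma~\ref{lm:huber} the contraction on $d_t=\|\bxt-\bx^\star\|^2$ would be proportional to $\alpha_t$, not $\alpha_t^2$, so the ``not Robbins--Monro summable'' discussion is moot; (ii) the per-iterate bound $d_t=\mathcal{O}(t^{\delta-1}+t^{2-3\delta})$ you aim for would supersede Theorem~\ref{theorem:main}, whose last-iterate rate $t^{-\zeta}$ with $\zeta=\min\{2\delta-1,a\mu\gamma/2\}$ can be much slower; this is a strong signal that such a per-iterate bound is not obtainable this way. The whole point of the corollary is that the \emph{average} converges faster than the last iterate.

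The paper's proof avoids all of this and never touches $\|\bxt-\bx^\star\|^2$ iterate-by-iterate. It starts from the bound already established inside the proof of Theorem~\ref{thm:non-conv}, namely $\sum_{k=1}^t\widetilde{\alpha}_k Z_k\le Mt^{-\kappa}$ with $Z_k=\min\{\eta_1\|\nabla f(\bxk)\|,\eta_2\|\nabla f(\bxk)\|^2\}$, which holds with probability $1-\beta$. It then observes that $\min\{\eta_1 s,\eta_2 s^2\}\ge\eta_2 H_{\eta_1/\eta_2}(s)$ where $H_\lambda$ is the Huber loss, a convex non-decreasing function on $[0,\infty)$; applies the gradient-domination consequence $\|\nabla f(\bxk)\|\ge\mu\|\bxk-\bx^\star\|$ of strong convexity; and then uses Jensen's inequality (twice, exploiting convexity and monotonicity of $H$) to pull the weighted average inside: $\sum_k\widetilde{\alpha}_k Z_k\ge\mu^2\eta_2 H_{\eta_1/(\eta_2\mu)}(\|\widehat{\bx}^{(t)}-\bx^\star\|)$. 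A two-case analysis on whether $\|\widehat{\bx}^{(t)}-\bx^\star\|$ exceeds $\eta_1/(\eta_2\mu)$ then inverts the Huber loss and yields the stated rates directly. If you want to salvage your write-up, replace Steps 1--3 with this argument; your Step on the concentration of the martingale term is then unnecessary because the high-probability event is inherited wholesale from Theorem~\ref{thm:non-conv}.
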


\begin{remark}
    Corollary \ref{cor:cvx} maintains the rates from Theorem \ref{thm:non-conv}, while improving on the metric of interest, providing guarantees for the generalized Polyak-Ruppert average $\widehat{\bx}^{(t)}$. Compared to \cite{sadiev2023highprobability}, who show convergence of the last iterate for clipped SGD in the offline setting, with a rate $\mathcal{O}(T^{\nicefrac{2(1-p)}{p}})$, our results again apply to a much broader range of nonlinearities and the online setting, beating the rate from \cite{sadiev2023highprobability} whenever $p < \nicefrac{8}{7}$.
\end{remark}

For strongly convex functions it is of interest to characterize the convergence guarantees of the last iterate \citet{harvey2019tight,pmlr-v151-tsai22a,sadiev2023highprobability}. To that end, we first characterize the interplay between $\bPhit$ and $\nabla f(\bxt)$.  

\begin{lemma}\label{lm:key}
    Let Assumptions~\ref{asmpt:nonlin}-\ref{asmpt:noise} hold and $\{\bxt\}_{t \in \N}$ be the sequence generated by Algorithm \ref{alg:nonlin-sgd}, with step-size $\alpha_t = \frac{a}{(t + 1)^\delta}$, for any $\delta \in (\nicefrac{1}{2},1)$ and $a > 0$. Then $\langle \bPhit, \nabla f(\bxt) \rangle \geq \gamma(t+2)^{\delta - 1}\|\nabla f(\bxt)\|^2$, for some $\gamma = \gamma(a) > 0$ and any $t \geq 1$, almost surely.
\end{lemma}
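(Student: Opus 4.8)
The plan is to reduce the statement to Lemma~\ref{lm:huber} evaluated at the point $\bx = \nabla f(\bxt)$, and then to control the growth of $\|\nabla f(\bxt)\|$ so that the ``linear'' branch of that bound can be upgraded to a ``quadratic'' one carrying the prefactor $(t+2)^{\delta-1}$. Concretely, conditioning on $\mathcal{F}_t$ makes $\nabla f(\bxt)$ deterministic, so $\bPhit = \bPhi(\nabla f(\bxt))$, and Lemma~\ref{lm:huber} gives, almost surely,
$\langle \bPhit, \nabla f(\bxt)\rangle \geq \min\{\eta_1\|\nabla f(\bxt)\|,\ \eta_2\|\nabla f(\bxt)\|^2\}$.

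Next I bound the iterate displacement. Since $\|\bPsi(\cdot)\| \leq C$ by Assumption~\ref{asmpt:nonlin}, each update has norm at most $\alpha_k C$, hence $\|\bxt - \bx^{(1)}\| \leq C\sum_{k=1}^{t-1}\alpha_k = aC\sum_{m=2}^{t}m^{-\delta} \leq \tfrac{aC}{1-\delta}(t+2)^{1-\delta}$, where I used $\delta < 1$ and comparison with $\int_1^t x^{-\delta}\,dx$. Under Assumption~\ref{asmpt:cvx} the set $\mathcal{X} = \{\bx^\star\}$ is a singleton with $\nabla f(\bx^\star) = 0$, so combining the displacement bound with $\|\bx^{(1)} - \bx^\star\| = \sqrt{D_{\mathcal{X}}}$, $L$-smoothness, and $(t+2)^{1-\delta}\geq 1$ (to absorb the constant term) yields $\|\nabla f(\bxt)\| \leq L\|\bxt - \bx^\star\| \leq K(t+2)^{1-\delta}$ with $K \triangleq L\big(\tfrac{aC}{1-\delta} + \sqrt{D_{\mathcal{X}}}\big)$; note $K$ depends on $a$, which is the source of the dependence $\gamma = \gamma(a)$.

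Finally I do the case split. If $\nabla f(\bxt) = 0$ the claim is trivial. Otherwise, if the minimum above is $\eta_2\|\nabla f(\bxt)\|^2$, then since $(t+2)^{\delta-1} \leq 1$ we immediately get $\langle \bPhit, \nabla f(\bxt)\rangle \geq \eta_2(t+2)^{\delta-1}\|\nabla f(\bxt)\|^2$; if instead the minimum is $\eta_1\|\nabla f(\bxt)\|$, I write $\eta_1\|\nabla f(\bxt)\| = \tfrac{\eta_1}{\|\nabla f(\bxt)\|}\|\nabla f(\bxt)\|^2 \geq \tfrac{\eta_1}{K}(t+2)^{\delta-1}\|\nabla f(\bxt)\|^2$ using the bound from the previous step. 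Taking $\gamma \triangleq \min\{\eta_2,\ \eta_1/K\} > 0$ gives the lemma; since the displacement bound holds for the realized trajectory, the conclusion in fact holds surely, so the ``almost surely'' qualifier is automatic.

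The crux is the middle step: recognizing that because the nonlinearity caps each increment at $\alpha_k C$ and $\sum_{k<t}\alpha_k$ is of order $t^{1-\delta}$, the gradient norm can grow at most like $t^{1-\delta}$ — precisely the rate needed to trade the linear lower bound for a quadratic one at the cost of the factor $(t+2)^{\delta-1}$. Everything else is routine; in particular no moment assumptions on the noise enter here, only boundedness of $\bPsi$, $L$-smoothness, and strong convexity (to pin down $\bx^\star$).
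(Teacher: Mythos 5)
Your proof is correct, and it shares the paper's central observation --- that boundedness of $\bPsi$ forces $\|\nabla f(\bxt)\|\leq K(t+1)^{1-\delta}$ (the paper's Lemma~\ref{lm:gradient-bound}), which is exactly the growth rate needed to trade a linear lower bound for a quadratic one at the price of $(t+2)^{\delta-1}$. Where you diverge is in how the quadratic bound is produced: you treat Lemma~\ref{lm:huber} as a black box, apply it at $\bx=\nabla f(\bxt)$, and upgrade the $\eta_1\|\cdot\|$ branch via $\eta_1\|\nabla f(\bxt)\|\geq (\eta_1/K)(t+2)^{\delta-1}\|\nabla f(\bxt)\|^2$, whereas the paper proves two sharper intermediate results tailored to each nonlinearity class (Lemma~\ref{lemma:1.1} for component-wise maps, giving a \emph{per-coordinate} bound $|\phi_i^{(t)}|\geq |[\nabla f(\bxt)]_i|\,\phi_i^\prime(0)\xi/(2H_t)$, and Lemma~\ref{lm:10} for joint maps) and sums these directly. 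Your route is more modular and shorter, but it inherits the $\ell_\infty$-to-$\ell_2$ conversion losses already baked into $\eta_1=\phi^\prime(0)\xi/(2\sqrt{d})$ and $\eta_2=\phi^\prime(0)/(2d)$, so for component-wise nonlinearities your $\gamma=\min\{\eta_2,\eta_1/K\}$ carries extra factors of $d$ and $\sqrt{d}$ relative to the paper's $\gamma=(1-\delta)\phi^\prime(0)\xi/\bigl(2L(\|\bx^{(1)}-\bx^\star\|+aC)\bigr)$. Since the lemma only asserts existence of some $\gamma(a)>0$, this is immaterial for correctness here; but note that $\gamma$ feeds into the rate exponent $\zeta=\min\{2\delta-1,a\mu\gamma/2\}$ of Theorem~\ref{theorem:main} and the dimension-dependence discussion in Appendix~\ref{app:rate}, so the paper's finer per-coordinate argument is not merely cosmetic. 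Two small remarks: your ``surely'' claim should stay ``almost surely,'' since $\bPhit$ is a conditional expectation defined only up to null sets; and strong convexity is not actually needed for your gradient bound --- Assumption~\ref{asmpt:L-smooth} already supplies a stationary point, so $D_{\mathcal{X}}$ suffices without invoking uniqueness of $\bx^\star$.
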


We then have the following result.

\begin{theorem}
\label{theorem:main}
    Suppose Assumptions~\ref{asmpt:nonlin}-\ref{asmpt:noise} hold and $\{\bxt\}_{t \in \N}$ is the sequence generated by Algorithm \ref{alg:nonlin-sgd}, with step-size $\alpha_t = \frac{a}{(t + 1)^\delta}$, for any $\delta \in (\nicefrac{1}{2},1)$ and $a > 0$. Then, for any $t \geq 1$ and $\beta \in (0,1)$, with probability at least $1 - \beta$, it holds that
    \begin{equation*}
        \|\bxtp - \bx^\star\|^2 = \mathcal{O}\left(\log(\nicefrac{1}{\beta})(t+1)^{-\zeta}\right),
    \end{equation*} 
    where $\zeta = \min\left\{2\delta - 1, a\mu\gamma/2 \right\}$.
\end{theorem}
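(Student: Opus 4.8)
The plan is to establish a recursion for $a_t \triangleq \|\bxt - \bx^\star\|^2$ and then invoke a standard sequence lemma. Starting from the update \eqref{eq:nonlin-sgd2}, I would write
\begin{align*}
    \|\bxtp - \bx^\star\|^2 &= \|\bxt - \bx^\star\|^2 - 2\alpha_t\langle\bPhit,\bxt-\bx^\star\rangle + 2\alpha_t\langle\bet,\bxt-\bx^\star\rangle \\
    &\quad + \alpha_t^2\|\bPhit - \bet\|^2.
\end{align*}
The last term is bounded by $\alpha_t^2 C^2$ using $\|\bPsit\|\le C$ from Assumption~\ref{asmpt:nonlin}. For the first inner product, I would combine strong convexity (Assumption~\ref{asmpt:cvx}), which via $\langle\nabla f(\bxt),\bxt-\bx^\star\rangle \ge \mu\|\bxt-\bx^\star\|^2$ and $\nabla f(\bx^\star)=0$ controls the direction $\bxt-\bx^\star$, with Lemma~\ref{lm:key}, which gives $\langle\bPhit,\nabla f(\bxt)\rangle \ge \gamma(t+2)^{\delta-1}\|\nabla f(\bxt)\|^2$. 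The delicate point is that Lemma~\ref{lm:key} relates $\bPhit$ to $\nabla f(\bxt)$, not directly to $\bxt-\bx^\star$; I would bridge this using $\|\nabla f(\bxt)\| \ge \mu\|\bxt-\bx^\star\|$ (a consequence of strong convexity and $L$-smoothness, or of strong monotonicity of $\nabla f$) together with the fact that $\bPhit$ and $\nabla f(\bxt)$ are ``positively aligned'' in the sense captured by Lemmas~\ref{lm:huber}–\ref{lm:key}. This should yield $\langle\bPhit,\bxt-\bx^\star\rangle \ge c\,(t+2)^{\delta-1}\|\bxt-\bx^\star\|^2$ for a suitable constant $c$ proportional to $\mu\gamma$ (with the constant tracked carefully, since it controls $\zeta$). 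Taking conditional expectation kills the $\bet$ cross term since $\E[\bet\mid\mathcal{F}_t]=0$ by Lemma~\ref{lm:error_component}, but for a high-probability statement I must instead keep it and treat $\sum_k 2\alpha_k\langle\bek,\bxk-\bx^\star\rangle$ as a martingale.

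The core recursion becomes, almost surely,
\begin{equation*}
    \|\bxtp-\bx^\star\|^2 \le \big(1 - c'a(t+2)^{-1}\big)\|\bxt-\bx^\star\|^2 + 2\alpha_t\langle\bet,\bxt-\bx^\star\rangle + \alpha_t^2 C^2,
\end{equation*}
using $\alpha_t(t+2)^{\delta-1} = a(t+2)^{-1}$ up to constants, where $c' = c\mu\gamma/\text{(const)}$ matches the $a\mu\gamma/2$ appearing in $\zeta$. To handle the martingale term in high probability, I would set up a concentration argument: since $\|\bet\|\le 2C$ and $\bet$ is sub-Gaussian conditionally (Lemma~\ref{lm:error_component}), and $\|\bxk-\bx^\star\|$ is a priori bounded (one first shows $a_t = \mathcal{O}(1)$ deterministically, since each step moves by at most $\alpha_t C$ and the contraction dominates, or by a separate boundedness argument), the increments $2\alpha_k\langle\bek,\bxk-\bx^\star\rangle$ form a bounded-difference / sub-Gaussian martingale difference sequence with summable squared scale $\sum_k \alpha_k^2$. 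A Freedman- or Azuma-type inequality — more precisely, the kind of ``offset'' concentration bound tailored to recursions with a $(1-c/k)$ contraction, as in the analyses underlying the non-convex theorem — then shows that with probability at least $1-\beta$, the accumulated noise contribution is $\mathcal{O}(\log(1/\beta)\cdot(t+1)^{-\zeta})$ with $\zeta$ limited by both the contraction rate $a\mu\gamma$ and the step-size-squared decay rate $2\delta-1$.

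Finally, I would feed the recursion into a deterministic sequence lemma of the form: if $u_{t+1} \le (1 - \tfrac{p}{t+2})u_t + \tfrac{q_t}{(t+2)^{1+r}}$ with the stochastic part already bounded on the good event, then $u_t = \mathcal{O}((t+1)^{-\min\{p,r\}})$ up to logarithmic factors. Matching $p = a\mu\gamma$ (up to the factor giving $a\mu\gamma/2$) and $r = 2\delta-1$ (the rate at which $\alpha_t^2 \sim (t+1)^{-2\delta}$ decays relative to the $(t+1)^{-1}$ contraction) delivers $\zeta = \min\{2\delta-1,\, a\mu\gamma/2\}$. The main obstacle I anticipate is the high-probability control of the martingale term: unlike the MSE analysis where one simply takes expectations, here one must carefully exploit conditional sub-Gaussianity together with the (data-dependent) bound on $\|\bxk-\bx^\star\|$ without circularity — typically resolved by a stopping-time argument or by first proving an almost-sure uniform bound on the iterates, then refining. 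A secondary subtlety is keeping the constant in the contraction sharp enough that the resulting exponent is exactly $a\mu\gamma/2$ rather than something smaller, which requires the precise alignment estimate between $\bPhit$, $\nabla f(\bxt)$ and $\bxt - \bx^\star$ rather than a crude one.
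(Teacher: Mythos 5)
Your proposal diverges from the paper's proof in a way that exposes two genuine gaps. The paper does not work with $\|\bxt-\bx^\star\|^2$ at all: it runs the recursion on the function gap $F^{(t)}=f(\bxt)-f^\star$ via the $L$-smoothness descent inequality, applies Lemma~\ref{lm:key} directly to the term $\langle\nabla f(\bxt),\bPhit\rangle$, converts $\|\nabla f(\bxt)\|^2$ to $F^{(t)}$ by gradient domination, and only at the very end translates the high-probability bound on $F^{(t+1)}$ into one on $\|\bxtp-\bx^\star\|^2$ using strong convexity. Your route requires the inequality $\langle\bPhit,\bxt-\bx^\star\rangle\ge c\,(t+2)^{\delta-1}\|\bxt-\bx^\star\|^2$, and this does not follow from the ingredients you cite. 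Lemma~\ref{lm:key} aligns $\bPhit$ with $\nabla f(\bxt)$, and strong convexity aligns $\nabla f(\bxt)$ with $\bxt-\bx^\star$, but positive alignment is not transitive: $\langle u,v\rangle>0$ and $\langle v,w\rangle>0$ do not imply $\langle u,w\rangle>0$. Nothing in Assumption~\ref{asmpt:nonlin} or Lemmas~\ref{lm:huber}--\ref{lm:key} controls the component of $\bPhit$ orthogonal to $\nabla f(\bxt)$, so the ``bridge'' you describe is a missing lemma, not a routine step. This is precisely why the paper stays in function-value space, where only the $\langle\bPhit,\nabla f(\bxt)\rangle$ pairing is ever needed.

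The second gap is your treatment of the martingale term. The claim that $\|\bxt-\bx^\star\|^2=\mathcal{O}(1)$ deterministically is false: without the (yet-to-be-proven) contraction, the only a priori bound is $\|\bxt-\bx^\star\|\le\|\bx^{(1)}-\bx^\star\|+C\sum_{k<t}\alpha_k\sim t^{1-\delta}$, which diverges for $\delta<1$ (this is exactly what the paper's Lemma~\ref{lm:gradient-bound} records). So a bounded-difference Azuma/Freedman argument with summable scale $\sum_k\alpha_k^2$ is not available without circularity. The paper resolves this self-referential difficulty differently and without any stopping time: it defines $Y^{(t)}=t^{\zeta}F^{(t)}$, bounds the conditional MGF of $Y^{(t+1)}$ using the sub-Gaussianity of $\bet$ from Lemma~\ref{lm:error_component}, and absorbs the resulting noise-variance term $\nu^2 b_t^2 N\|\nabla f(\bxt)\|^2$ back into $Y^{(t)}$ via $\|\nabla f(\bxt)\|^2\le 2LF^{(t)}$, then proves $M_t(\nu)\le e^{\nu/B}$ by induction for $0\le\nu\le B$ and finishes with Markov's inequality. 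That absorption step is the key idea your sketch is missing; if you want to salvage your distance-based recursion you would need an analogous self-normalization (the variance of $\langle\bek,\bxk-\bx^\star\rangle$ scales with $\|\bxk-\bx^\star\|^2$, which can in principle be absorbed into the contraction the same way), but you would still be blocked by the alignment gap above.
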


We specialize $\gamma = \gamma(a)$ for different nonlinearities and discuss its impact on the rate in the Appendix. The value of $\zeta$ can be explicitly calculated for specific choices of nonlinearities and noise, as we show next.

\begin{example}\label{example:4}
    For the noise from Example \ref{example:1} and sign nonlinearity, it can be shown that $\zeta \approx \min\big\{2\delta - 1,\frac{\mu}{L}\frac{1-\delta}{\sqrt{d}}\frac{\alpha-1}{\alpha} \big\}$, while for component-wise clipping it can be shown that $\zeta \approx \min\big\{2\delta-1,\frac{\mu}{L\sqrt{d}}\frac{(1-\delta)(m-1)(1-(m+1)^{-\alpha})}{m} \big\}$, see \citet{jakovetic2023nonlinear}. On the other hand, the rate from \cite{sadiev2023highprobability} for joint clipping, adapted to the same noise, is $\nicefrac{2(r - 1)}{r}$, where $r \leq \min\{\alpha - 1,2 \}$. If $\alpha = 2 + \epsilon$, where $\epsilon \in (0,1]$, i.e., very heavy tails, then moments of order $1 < p < 1 + \epsilon \leq 2$ exist, satisfying the bounded $p$-th moment condition from \cite{sadiev2023highprobability}, with their best-case rate given by $2(r-1)/r = 2\epsilon/(1+\epsilon) < 2\epsilon$. On the other hand, consider the sign nonlinearity with step-size parameter $\delta = 3/4$. In this case, our rate is given by $\zeta = \min\big\{\frac{1}{2},\frac{\mu(1+\epsilon)}{4L\sqrt{d}(2+\epsilon)}\big\} = \frac{\mu(1+\epsilon)}{4L\sqrt{d}(2+\epsilon)} > \frac{\mu}{8L\sqrt{d}}$, where the last inequality follows from $\epsilon \in (0,1]$. Using the corresponding lower and upper bounds, it follows that our rate is strictly better than the one from \cite{sadiev2023highprobability}, i.e., $\zeta > 2(r-1)/r$, whenever $\epsilon < \frac{\mu}{16L\sqrt{d}}$. Therefore, for any heavy-tailed noise of the form given in Example 1, such that $\alpha = 2 + \epsilon$, with $0 < \epsilon < \frac{\mu}{16L\sqrt{d}}$, the noise condition in both our work and \cite{sadiev2023highprobability} is satisfied, with our rate being strictly better. Additionally, we can see that our rate for noises of this form is uniformly bounded below by a quantity constant with respect to $\alpha$, i.e., $\zeta > \frac{\mu}{8L\sqrt{d}}$. On the other hand, the rate from \cite{sadiev2023highprobability}, specialized to noises from Example 1 with $\alpha = 2+\epsilon$ and $\epsilon \in (0,1]$, is upper-bounded by a quantity strictly depending on the noise, i.e., $2(r-1)/r < 2\epsilon$, with $2(r-1)/r \rightarrow 0$ as $\alpha \rightarrow 2$ (i.e., as $\epsilon \rightarrow 0$). Similar results can be shown to hold for component-wise clipping.
\end{example}

\subsection{Beyond Symmetric Noise}\label{subsec:non-sym}

In this section we extend our results to the case when the noise is not necessarily symmetric. In particular, we make the following assumption on the noise vectors. 

\begin{assump}\label{asmpt:non-sym}
    The noise vectors $\{\bzt\}_{t \in \N}$ are independent, identically distributed, drawn from a mixture distribution with PDF $P(\bz) = (1 - \lambda)P_1(\bz) + \lambda P_2(\bz)$, where $P_1(\bz)$ is symmetric and $\lambda \in (0,1)$ is the mixture coefficient. Additionally, $P_1$ is positive around zero, i.e., $P_1(\bz) > 0$, for all $\|\bz\| \leq B_0$ and some $B_0 > 0$. 
\end{assump}

\begin{remark}
    Assumption \ref{asmpt:non-sym} relaxes Assumption \ref{asmpt:noise}, by allowing for a mixture of symmetric and non-symmetric noises. The resulting noise is non-symmetric and in general does not have to be zero mean, i.e., we allow for the oracle to send \emph{biased} gradient estimators. We again make no assumptions on noise moments.
\end{remark}

\begin{remark}\label{rmk:mixture}
    Assumption \ref{asmpt:non-sym} arises naturally in scenarios like training with a large batch size, in which the generalized CLT implies that the noise becomes more symmetric as the batch size grows \citet{simsekli2019tail,pmlr-v108-peluchetti20b,heavy-tail-phenomena,barsbey-heavy_tails_and_compressibility}. In such scenarios, the effect of the non-symmetric part decreases with batch size, resulting in small $\lambda$ for a large enough batch size.   
\end{remark}

We then have the following result.

\begin{theorem}\label{thm:non-sym}
    Let Assumptions \ref{asmpt:nonlin}, \ref{asmpt:L-smooth} and \ref{asmpt:non-sym} hold. Let $\{\bxt\}_{t \in \N}$ be the sequence generated by Algorithm \ref{alg:nonlin-sgd}, with step-size $\alpha_t = \frac{a}{(t + 1)^\delta}$, for any $\delta \in (\nicefrac{2}{3},1)$ and $a > 0$. If $\lambda < \frac{\eta_1}{C+\eta_1}$, then for any $t \geq 1$ and $\beta \in (0,1)$, with probability at least $1 - \beta$, the following hold.
    \begin{enumerate}
        \item For the choice $\delta \in (\nicefrac{2}{3},\nicefrac{3}{4})$, we have $\min_{k \in [t]}\|\nabla f(\bxk)\|^2 = \mathcal{O}\left((t^{\delta - 1} + t^{2-3\delta})\right) + \frac{\eta_1\lambda C }{\eta_2^2(1-\lambda)}$.

        \item For the choice $\delta = \nicefrac{3}{4}$, we have $\min_{k \in [t]}\|\nabla f(\bxk)\|^2 = \mathcal{O}\left(\log(t)/t^{\nicefrac{1}{4}} \right) + \frac{\eta_1 \lambda C }{\eta_2^2(1-\lambda)}$.

        \item For the choice $\delta \in (\nicefrac{3}{4},1)$, we have $\min_{k \in [t]}\|\nabla f(\bxk)\|^2 = \mathcal{O}\left(t^{\delta - 1} \right) + \frac{\eta_1\lambda C }{\eta_2^2(1-\lambda)}$.
    \end{enumerate}
\end{theorem}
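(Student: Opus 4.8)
The plan is to reduce Theorem~\ref{thm:non-sym} to the analysis behind Theorem~\ref{thm:non-conv}, the only genuinely new ingredient being a biased, mixture-aware analogue of Lemma~\ref{lm:huber}. Write $\bPhi(\bx) = (1-\lambda)\bPhi_1(\bx) + \lambda\bPhi_2(\bx)$, where $\bPhi_j(\bx) \triangleq \E_{\bz\sim P_j}[\bPsi(\bx+\bz)]$. Since $P_1$ is symmetric and positive around zero, Lemma~\ref{lm:huber} applies to $\bPhi_1$ (with $P_1$ playing the role of the noise PDF and $\eta_1,\eta_2$ the associated constants), giving $\langle\bPhi_1(\bx),\bx\rangle \ge \min\{\eta_1\|\bx\|,\eta_2\|\bx\|^2\}$, while property~4 of Assumption~\ref{asmpt:nonlin} gives $\|\bPhi_2(\bx)\|\le C$, hence $\langle\bPhi_2(\bx),\bx\rangle \ge -C\|\bx\|$. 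Combining, for all $\bx\in\R^d$,
\[
  \langle\bPhi(\bx),\bx\rangle \ \ge\ (1-\lambda)\min\{\eta_1\|\bx\|,\eta_2\|\bx\|^2\} - \lambda C\|\bx\|.
\]

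Next I would turn this into a clean descent condition valid outside a neighbourhood. Put $\nu \triangleq \frac{\eta_1\lambda C}{\eta_2^2(1-\lambda)}$ and $\theta \triangleq 1 - \frac{\lambda C}{(1-\lambda)\eta_1}$; the hypothesis $\lambda < \frac{\eta_1}{C+\eta_1}$ is \emph{exactly} the condition $\theta\in(0,1)$. The claim is that for every $\bx$ with $\|\bx\|^2\ge\nu$ one has $\langle\bPhi(\bx),\bx\rangle \ge c\min\{\eta_1\|\bx\|,\eta_2\|\bx\|^2\}$ with $c\triangleq(1-\lambda)(1-\sqrt{1-\theta})>0$. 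This is verified by splitting on the active branch of the $\min$: if $\|\bx\|\ge\eta_1/\eta_2$ the bias is absorbed outright since $(1-\lambda)\eta_1-\lambda C = \theta(1-\lambda)\eta_1 \ge c$, and $(\eta_1/\eta_2)^2\ge\nu$ holds automatically under the standing bound on $\lambda$; if $\|\bx\|<\eta_1/\eta_2$ one checks $(1-\lambda)\eta_2\|\bx\|^2-\lambda C\|\bx\| \ge (1-\sqrt{1-\theta})(1-\lambda)\eta_2\|\bx\|^2$, which, after substituting the value of $\theta$, holds once $\|\bx\|^2\ge\nu$. In short, outside the $\nu$-ball this is Lemma~\ref{lm:huber} with $\eta_1,\eta_2$ scaled down by $c$.

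The third step is to re-run the proof of Theorem~\ref{thm:non-conv} with this substitute. First, Lemma~\ref{lm:error_component} carries over verbatim to mixture noise: the effective noise $\bet = \bPhit - \bPsi^{(t)}$ is conditionally zero mean by the very definition of $\bPhit$, norm-bounded by $2C$, hence sub-Gaussian with parameter $4C^2$ --- symmetry is never used in that argument. Applying $L$-smoothness to~\eqref{eq:nonlin-sgd2} gives $f(\bxtp) \le f(\bxt) - \alpha_t\langle\nabla f(\bxt),\bPhit\rangle + \alpha_t\langle\nabla f(\bxt),\bet\rangle + \tfrac{L\alpha_t^2C^2}{2}$. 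Now dichotomize: if $\min_{k\in[t]}\|\nabla f(\bxk)\|^2\le\nu$ the asserted bound is immediate; otherwise $\|\nabla f(\bxk)\|^2>\nu$ for every $k\in[t]$, so step two supplies $\langle\nabla f(\bxk),\bPhi^{(k)}\rangle \ge c\min\{\eta_1\|\nabla f(\bxk)\|,\eta_2\|\nabla f(\bxk)\|^2\}$ at each step, and the remaining telescoping, the sub-Gaussian concentration of the martingale $\sum_k\alpha_k\langle\nabla f(\bxk),\bek\rangle$ (controlled via $\|\nabla f(\bxk)\|\le L\|\bxk-\bx^\star\|$ together with the inductive high-probability bound on $\|\bxk-\bx^\star\|^2$, precisely as in Theorem~\ref{thm:non-conv}), and the step-size sums are literally those of Theorem~\ref{thm:non-conv} with $\eta_1,\eta_2$ replaced by $c\eta_1,c\eta_2$. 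This yields $\min_{k\in[t]}\|\nabla f(\bxk)\|^2 = \mathcal{O}(\cdot)$ with the same $\delta$-dependent rate, and restoring $\nu$ in the first branch of the dichotomy gives the stated bounds.

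The main obstacle is not the mixture structure but the same one already present in Theorem~\ref{thm:non-conv}: bounding the martingale $\sum_k\alpha_k\langle\nabla f(\bxk),\bek\rangle$ in high probability without an a~priori bound on $\|\nabla f(\bxk)\|$, which forces one to carry along an inductive high-probability bound on $\|\bxk-\bx^\star\|^2$ (and is why $D_{\mathcal{X}}$ enters the constants). The mixture-specific content --- that the bias costs only the additive neighbourhood $\nu$ and that $\lambda<\eta_1/(C+\eta_1)$ is exactly the threshold for $c>0$ --- is the short computation in step two; its one delicate point is arranging the branch split so that both branch thresholds collapse to the single quantity $\nu$, which is what pins down $\theta$ and $c$.
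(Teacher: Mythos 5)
Your proposal is correct and shares the paper's overall architecture --- decompose $\bPhi = (1-\lambda)\bPhi_1 + \lambda\bPhi_2$, apply Lemma~\ref{lm:huber} to the symmetric part, bound the non-symmetric part by $C$, and re-run the machinery of Theorem~\ref{thm:non-conv} --- but it handles the bias term differently. The paper bounds $\langle \bx,\bPhi_2(\bx)\rangle \le C\max\{\|\bx\|,\eta_1/\eta_2\}$ and folds the resulting \emph{additive} offset $\lambda C\eta_1/\eta_2$ into a modified quantity $\widetilde{Z}_t = \min\{(\eta_1(1-\lambda)-\lambda C)\|\nabla f(\bxt)\|,\ \eta_2(1-\lambda)\|\nabla f(\bxt)\|^2 - \lambda C\eta_1/\eta_2\}$, which satisfies the per-step descent inequality \emph{almost surely}; the neighbourhood $\nu = \frac{\eta_1\lambda C}{\eta_2^2(1-\lambda)}$ then falls out only at the final $U/U^c$ split. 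You instead absorb the bias \emph{multiplicatively}, proving a clean scaled Huber inequality $\langle\bPhi(\bx),\bx\rangle \ge c\min\{\eta_1\|\bx\|,\eta_2\|\bx\|^2\}$ valid outside the ball $\|\bx\|^2\ge\nu$ (your computation pinning down $\theta$ and $c$ so that both branches collapse to the single threshold $\nu$ is correct, and your identification of $\lambda < \eta_1/(C+\eta_1)$ with $\theta\in(0,1)$ matches the paper's positivity condition exactly), and dispose of the inside of the ball by dichotomy. What your route buys is a more transparent statement of ``outside the neighbourhood, the method behaves exactly as in the symmetric case with $\eta_i$ replaced by $c\eta_i$''; what it costs is that the per-step descent inequality now holds only on the random event $E = \{\|\nabla f(\bxk)\|^2 > \nu,\ \forall k\in[t]\}$, so the MGF/Markov step must be carried out on $E$ (i.e., bound $\E[\exp(cG_t)\mathbb{1}_E]$ and combine $\mathbb{P}(E^c \Rightarrow \min_k\|\nabla f(\bxk)\|^2\le\nu)$ with the tail bound on $E$). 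This is routine since the bound on $\E[\exp(B_2)]$ is unconditional, but it is the one place where your sketch glosses over a step the paper's almost-sure formulation avoids entirely.

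Two minor points. First, in your asides you describe the control of $\|\nabla f(\bxk)\|$ inside the martingale MGF as requiring an ``inductive high-probability bound on $\|\bxk-\bx^\star\|^2$''; in fact the paper's bound \eqref{eq:grad-norm} is deterministic (each step moves the iterate by at most $\alpha_k C$), so no induction over high-probability events is needed there. Second, in your case~(a) the displayed comparison should read $\theta(1-\lambda)\eta_1 \ge c\,\eta_1$ (coefficients of $\|\bx\|$), not $\ge c$; the conclusion is unaffected.
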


\begin{remark}
    All three step-size regimes in Theorem \ref{thm:non-sym} again achieve exponential tail decay, i.e., a $\log(\nicefrac{1}{\beta})$ dependence on the failure probability $\beta$, which is hidden under the big O notation, for ease of exposition. 
\end{remark}

\begin{remark}
    Theorem \ref{thm:non-sym} provides convergence guarantees to a neighbourhood of stationarity, for mixtures of symmetric and non-symmetric components, if the contribution of the non-symmetric component is sufficiently small. As discussed in Remark \ref{rmk:mixture}, this can be guaranteed, e.g., by using a sufficiently large batch size. As the neighborhood size is determined by the mixture component via $\nicefrac{\lambda}{(1-\lambda)}$, it follows that, as the noise becomes more symmetric (i.e., $\lambda \rightarrow 0$), we recover exact convergence from Theorem \ref{thm:non-conv}.    
\end{remark}

\begin{remark}
    While \cite{nguyen2023improved} guarantee convergence of gradient norm-squared to zero under condition \eqref{eq:bounded-moment}, which allows for non-symmetric noises, it is important to note that they explicitly require that the oracle sends \emph{unbiased} gradient estimators. On the other hand, we allow for the oracle to send \emph{biased} gradient estimators. Without incorporating a correction mechanism (e.g., momentum or error-feedback), in general, it is not possible to guarantee exact convergence with a biased oracle.  
\end{remark}

The size of the neighbourhood and the condition on the mixture coefficient provided in Theorem \ref{thm:non-sym} are both determined by the noise and choice of nonlinearity. We can specialize the constants $\eta_1,\eta_2$ and $C$ for specific choices of nonlinearities and noises. We now give some examples. For full derivations, see the Appendix.

\begin{example}\label{example:5}
    Consider the noise from Example \ref{example:1}. For the sign nonlinearity it can be shown that $\eta_1 = \nicefrac{(\alpha-1)}{2\alpha\sqrt{d}}$, $\eta_2 = \nicefrac{(\alpha - 1)}{2d}$ and $C = \sqrt{d}$, resulting in convergence to a neighborhood of size $\nicefrac{2d^2\lambda}{[\alpha(\alpha-1)(1-\lambda)]}$ and $\lambda < \nicefrac{(\alpha-1)}{[\alpha(2d + 1)- 1] }$. As $\alpha > 2$, we can see that $\lambda < \nicefrac{1}{(2d+1)}$, at best and $\lambda < \nicefrac{1}{(4d+1)}$, at worst.
\end{example} 

\begin{example}
    For component-wise clipping with $m > 1$, we have $\eta_1 = \nicefrac{[1 - (m+1)^{-\alpha}](m-1)}{2\sqrt{d}}$, $\eta_2 = \nicefrac{[1 - (m+1)^{-\alpha}]}{2d}$ and $C = m\sqrt{d}$, resulting in convergence to a neighborhood of size $\nicefrac{2d^2m(m-1)\lambda}{[1-(m+1)^{-\alpha}](1-\lambda)}$ and $\lambda < \nicefrac{(m-1)[1-(m+1)^{-\alpha}]}{[(m-1)[1-(m+1)^{-\alpha}] + 2md]}$. While taking $m \rightarrow 1$ results in full convergence, we can see that this simultaneously implies $\lambda \rightarrow 0$, i.e., requiring the noise to be symmetric.
\end{example} 

\begin{example}\label{example:7}
    For joint clipping with threshold $M > 0$, we have $\eta_1 = [\nicefrac{(\alpha - 1)}{2}]^d\min\{1,M\}/2$, $\eta_2 = [\nicefrac{(\alpha-1)}{2}]^d\min\{1,M\}$ and $C = M$, resulting in convergence to a neighborhood of size $\nicefrac{2^{d-1}\lambda M}{[(\alpha-1)^d(1-\lambda)\min\{1,M\}]}$ and $\lambda < \frac{(\alpha - 1)^d\min\{1,M\}}{2(\alpha - 1)^d\min\{1,M\} + 2^{d+1}M}$. Choosing $M \leq 1$, results in converging to a neighborhood of size $\nicefrac{2^{d-1}\lambda}{(\alpha - 1)^d(1 - \lambda)}$ and condition $\lambda < \nicefrac{(\alpha - 1)^d}{2(\alpha - 1)^d + 2^{d+1}}$. Similar observations hold for $M > 1$.
\end{example}

\section{Numerical Results}\label{sec:an-num}

In this section we present numerical results. The first set of experiments demonstrates the noise symmetry phenomena on a deep learning model with real data. The second set of experiments compares the behaviour of different nonlinearities on a toy example. Additional experiments can be found in the Appendix.

\paragraph{Noise Symmetry - Setup.} We train a Convolutional Neural Network (CNN) \citet{lecun2015deep} on the MNIST dataset \citet{lecun1998mnist}, using PyTorch\footnote{https://github.com/pytorch/examples/tree/main/mnist}. The CNN we use consists of two convolutional layers, followed by $2 \times 2$ max pooling with a stride of 2, and two fully connected layers, with all layers using ReLU activations. The network is trained using the Adadelta optimizer \citet{zeiler2012adadelta} with $\ell_2$ gradient clipping threshold $M = 1$. For full details on the network and hyperparameter tuning, see the Appendix.

\paragraph{Noise Symmetry - Visualization.} Similar to the visualization method used in \citet{chen2020understanding}, we evaluate the symmetry of gradient distribution by projecting all per-sample gradients into a 2-D space using random Gaussian matrices. For any symmetric distribution, its 2-D projection remains symmetric under any projection matrix. Conversely, if the projected gradient distribution is symmetric for every projection matrix, the original gradient distribution is also symmetric. In Figure \ref{fig:proj-epochs}, we present a 2-D plot of the random projections of all per-sample gradients after training for several epochs, with epoch 0 showing the gradient distribution at the initialization. We can observe that all the random projections exhibit a high degree of symmetry over the duration of the entire training process. 

\begin{figure*}[htbp]
    \centering
    \begin{subfigure}[b]{0.3\textwidth}
        \centering
        \includegraphics[width=\textwidth]{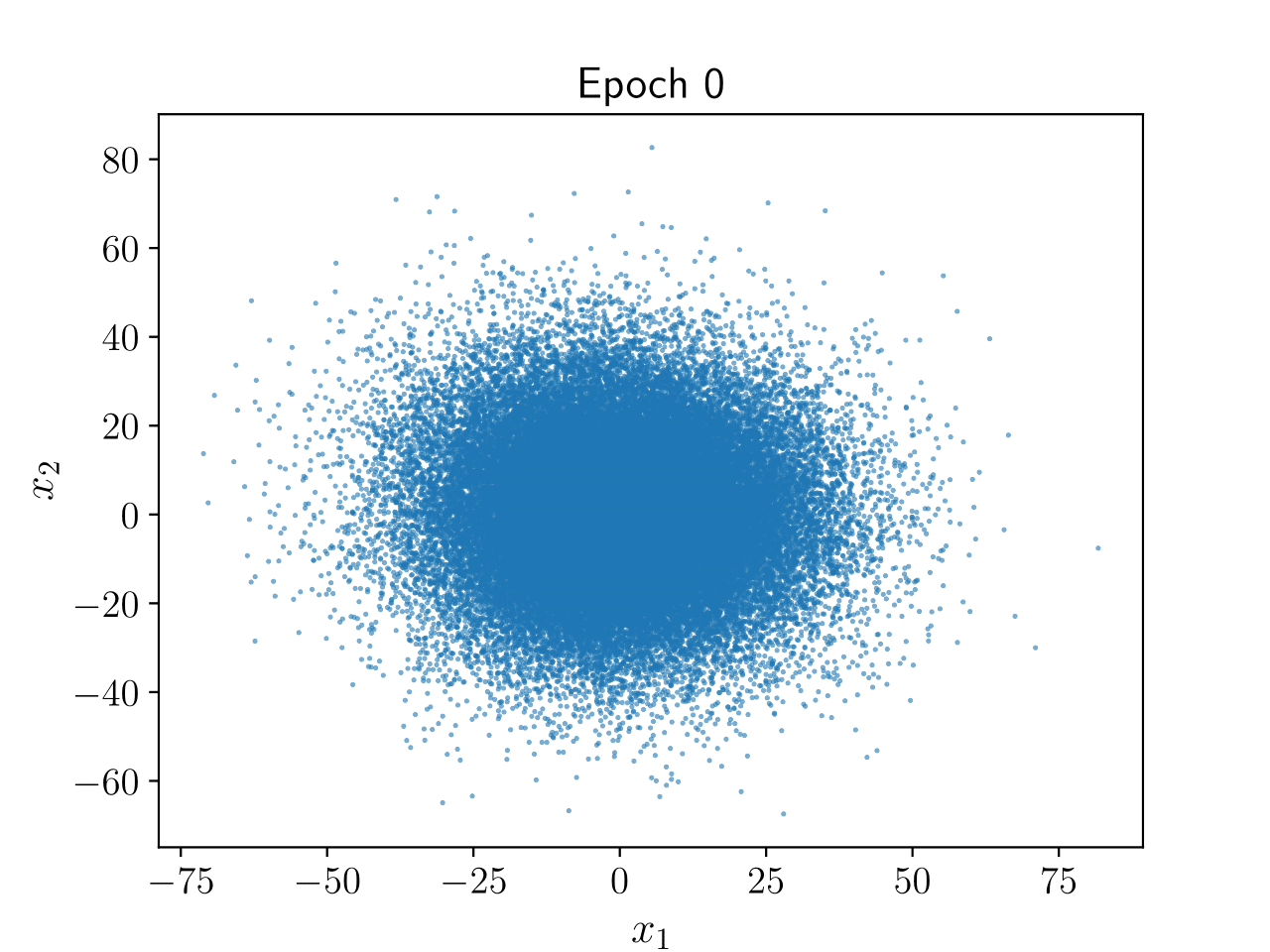} 
    \end{subfigure}
    \hfill
    \begin{subfigure}[b]{0.3\textwidth}
        \centering
        \includegraphics[width=\textwidth]{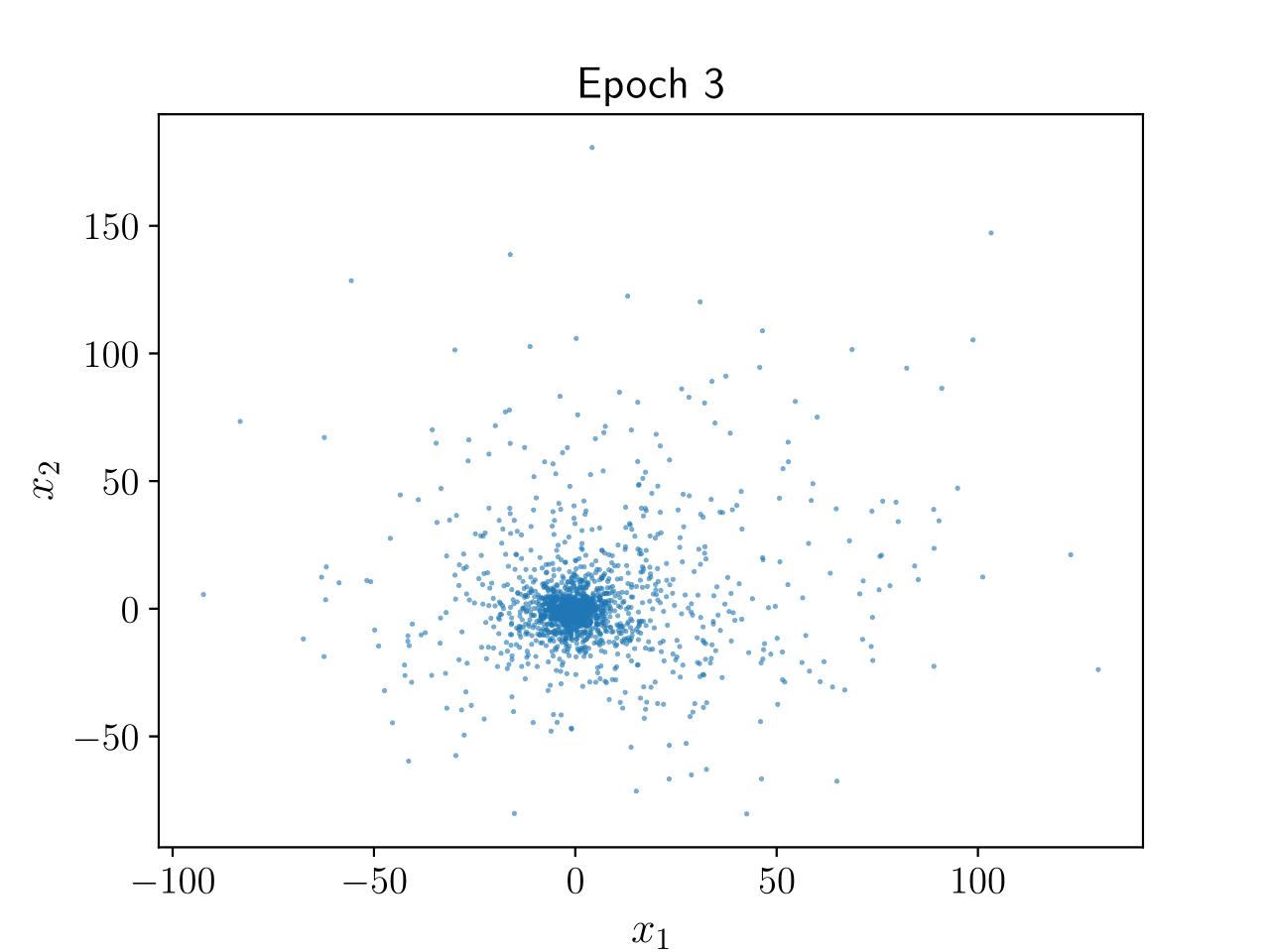} 
    \end{subfigure}
    \hfill
    \begin{subfigure}[b]{0.3\textwidth}
        \centering
        \includegraphics[width=\textwidth]{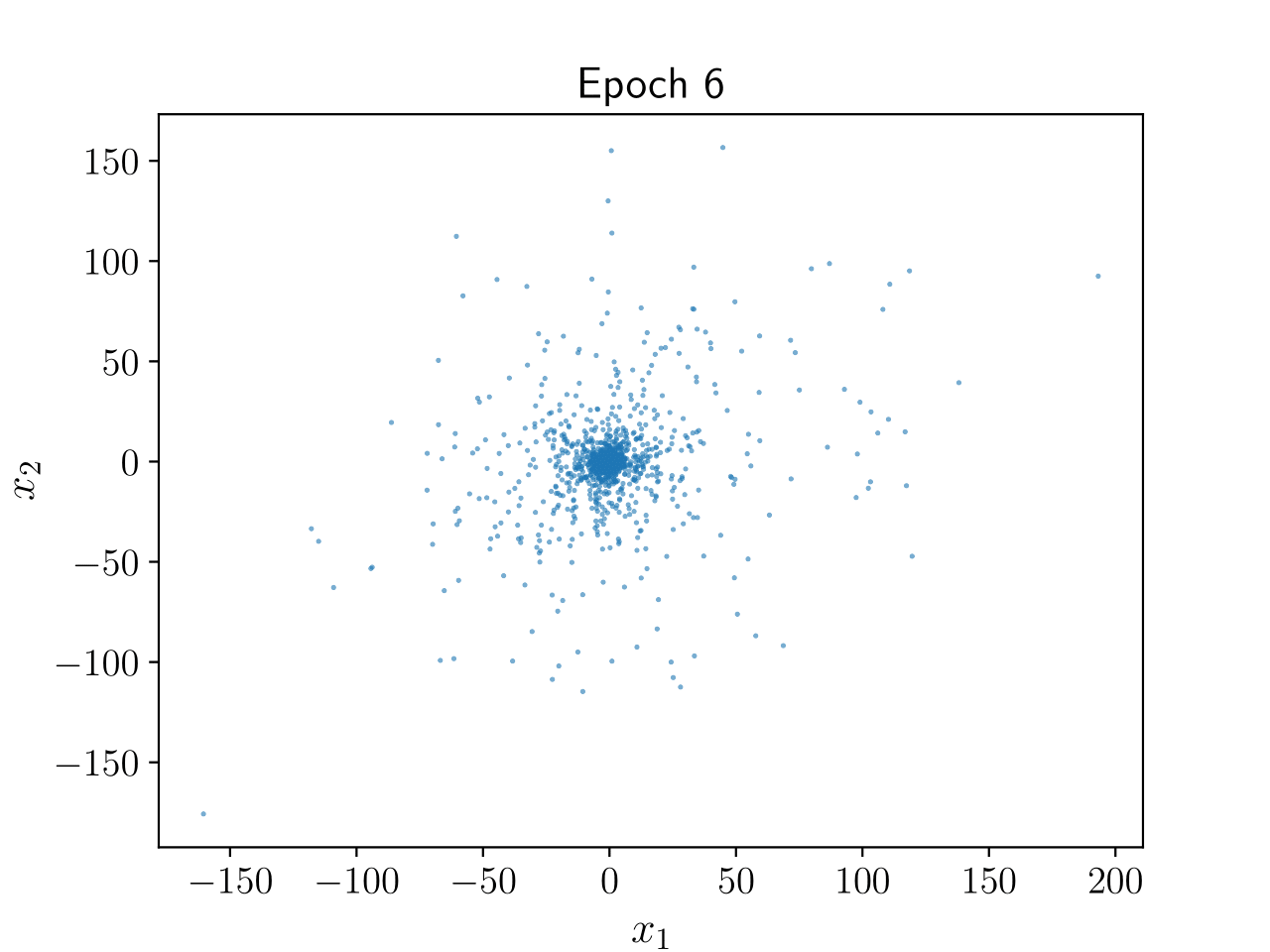} 
    \end{subfigure}
    \vskip\baselineskip
    \begin{subfigure}[b]{0.3\textwidth}
        \centering
        \includegraphics[width=\textwidth]{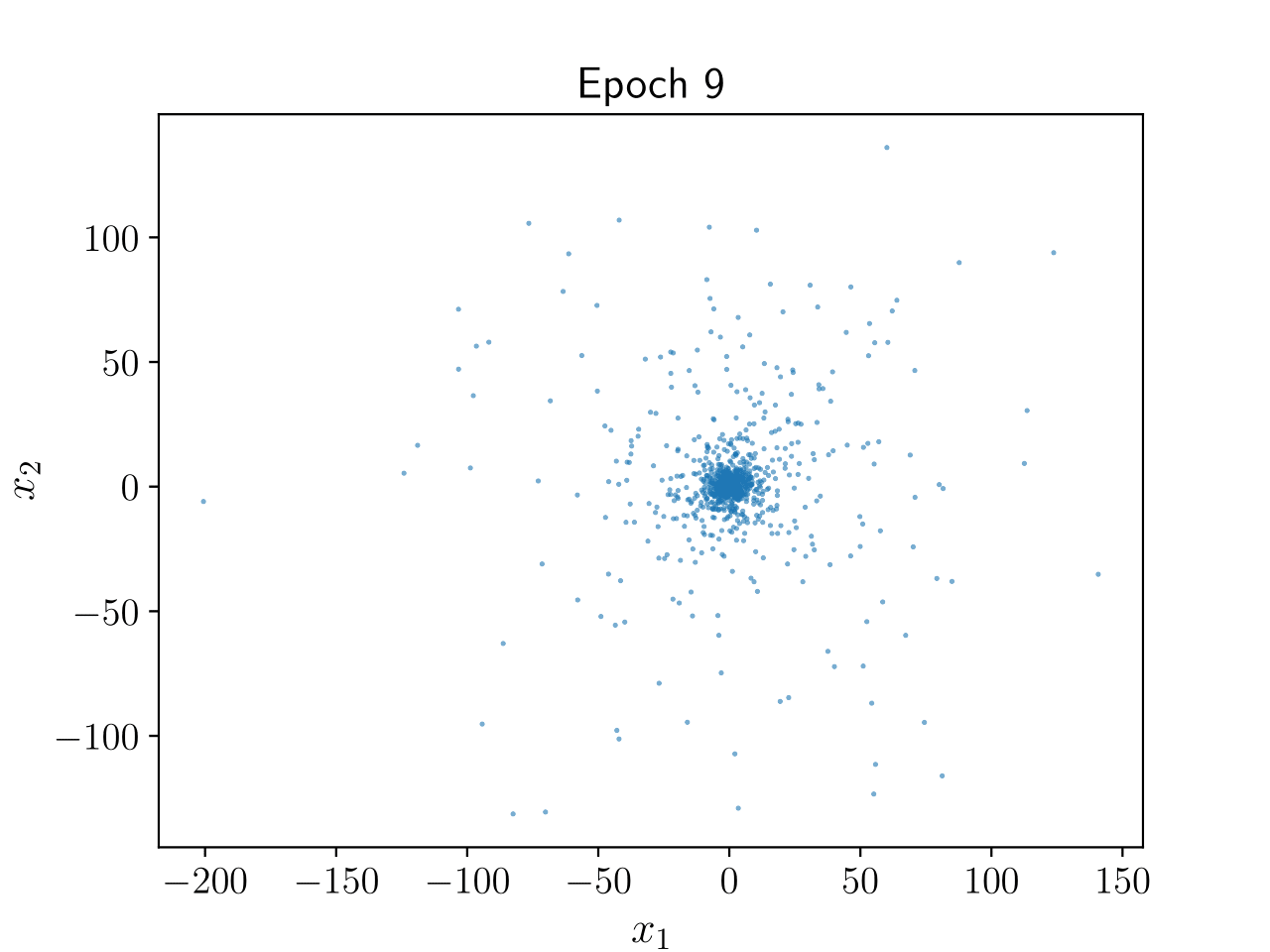} 
    \end{subfigure}
    \hfill
    \begin{subfigure}[b]{0.3\textwidth}
        \centering
        \includegraphics[width=\textwidth]{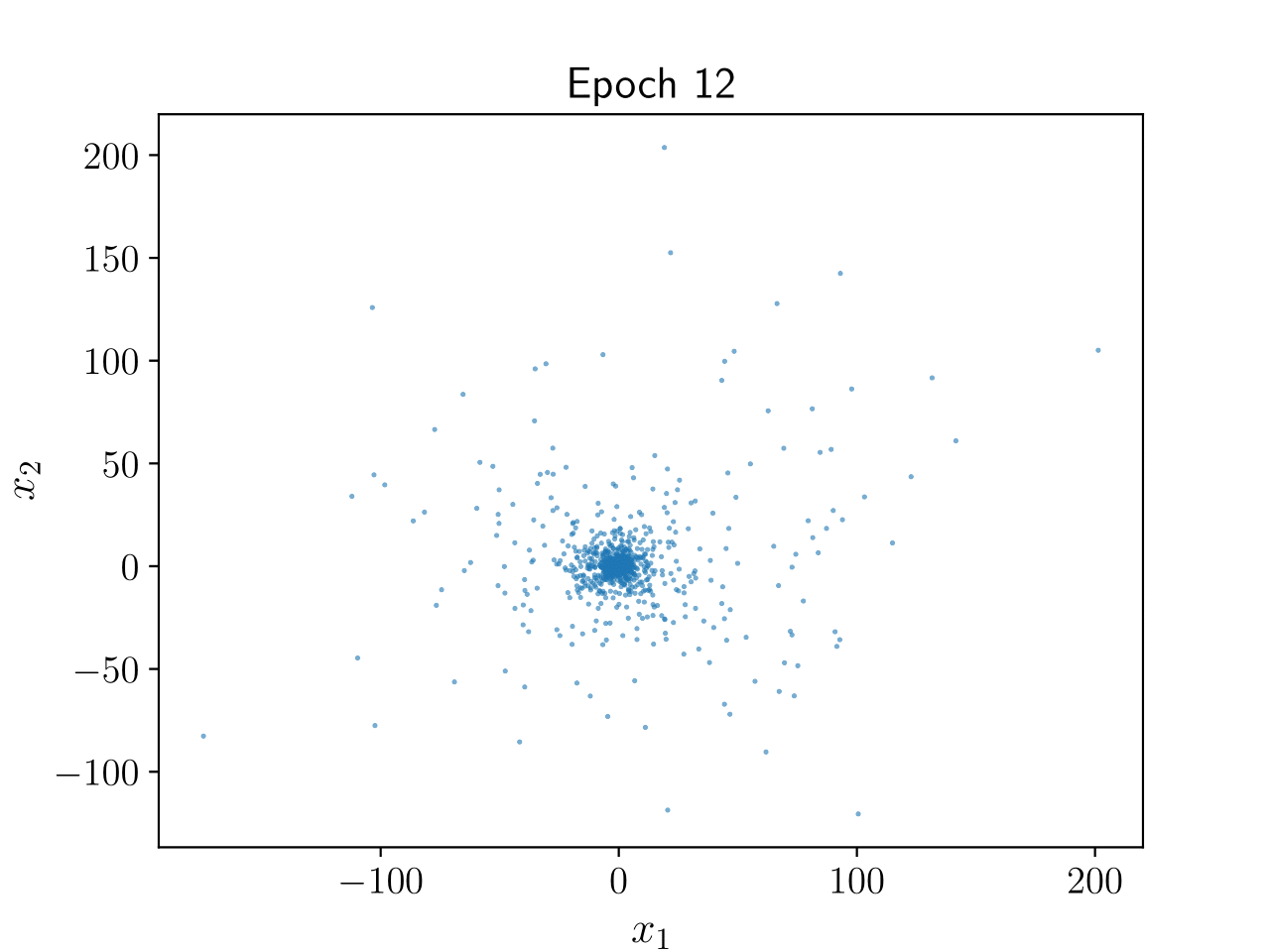}
    \end{subfigure}
    \hfill
    \begin{subfigure}[b]{0.3\textwidth}
        \centering
        \includegraphics[width=\textwidth]{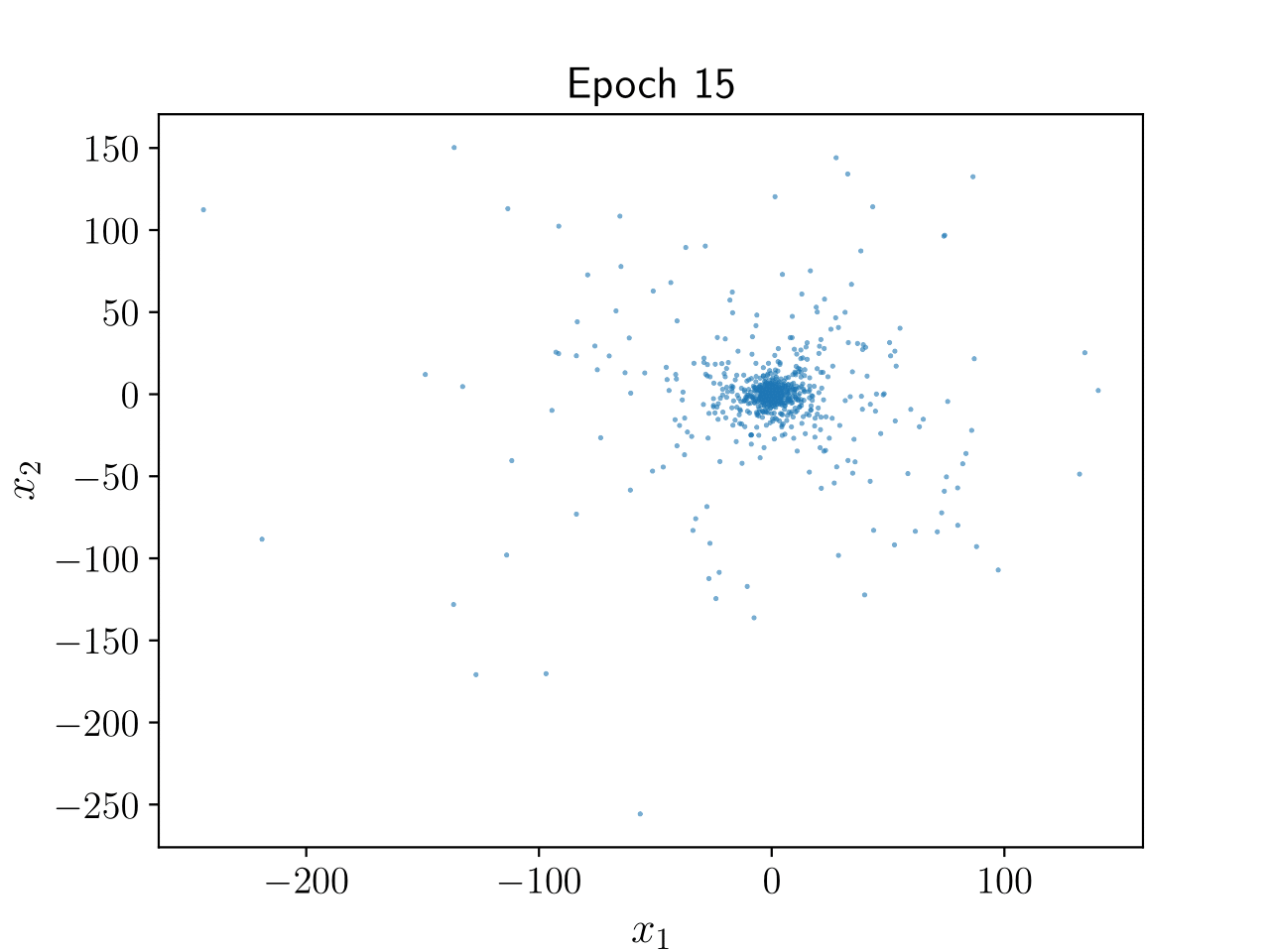} 
    \end{subfigure}
    \caption{Random projections of per-sample gradients across epochs.}
    \label{fig:proj-epochs}
\end{figure*}

\paragraph{Nonlinearity Comparison.} In this set of experiments, we compare the performance of multiple nonlinear SGD methods across different metrics, using a toy example. We consider a quadratic problem $f(\bx) = \frac{1}{2}\bx^\top A\bx + \mathbf{b}^\top\bx$, where $A \in \R^{d \times d}$ is positive definite and $\mathbf{b} \in \R^d$ is fixed. We set $d = 100$. The stochastic noise is generated according to the PDF from Example~\ref{example:1}, with $\alpha = 2.05$. We compare the performance of sign, component-wise and joint clipped SGD, with all three using the step-size schedule $\alpha_t = \frac{1}{t+1}$. We choose the clipping thresholds $m$ and $M$ for which component-wise and joint clipping performed the best, those being $m = 1$ and $M = 100$. All three algorithms are initialized at the zero vector and perform $T = 25000$ iterations, across $R = 5000$ runs. To evaluate the performance of the methods, we use the following criteria:\\
1. \emph{Mean-squared error}: we present the MSE of the algorithms, by evaluating the gap $\|\bx^{(t)} - \bx^\star\|^2$ in each iteration, averaged across all runs, i.e., the final estimator at iteration $t = 1,\ldots,T$, is given by $MSE^t = \frac{1}{R}\sum_{r = 1}^R\|\bx^{(t)}_r - \bx^\star\|^2$, where $\bx^{(t)}_r$ is the $t$-th iterate in the $r$-th run, generated by the algorithms.\\
2. \emph{High-probability estimate}: we evaluate the high-probability behaviour of the methods, as follows. We consider the events $A^t = \{\|\bx^{(t)} - \bx^\star\|^2 > \varepsilon \}$, for a fixed $\varepsilon > 0$. To estimate the probability of $A^t$, for each $t = 1,\ldots, T$, we construct a Monte Carlo estimator of the empirical probability, by sampling $n = 3000$ instances from the $R = 5000$ runs, uniformly with replacement. We then obtain the empirical probability estimator as $\mathbb{P}_n(A^t) = \frac{1}{n}\sum_{i = 1}^n\mathbb{I}_i(A^{t}) = \frac{1}{n}\sum_{i = 1}^n\mathbb{I}\big(\{\|\bx_i^{(t)} - \bx^\star\|^2 > \varepsilon\}\big)$, where $\mathbb{I}(\cdot)$ is the indicator function and $\bx^{(t)}_i$ is the $i$-th Monte Carlo sample. 

The results are presented in Figure~\ref{fig:fig1}. We can see that component-wise nonlinearities outperform joint clipping, both in terms of MSE and high-probability performance and demonstrate exponential tail decay, validating our theoretical results and further underlining the usefulness of considering a general framework beyond only clipping. Additional experiments can be found in the Appendix. 

\begin{figure}[!ht]
\centering
\begin{tabular}{lll}
\includegraphics[scale=0.3]{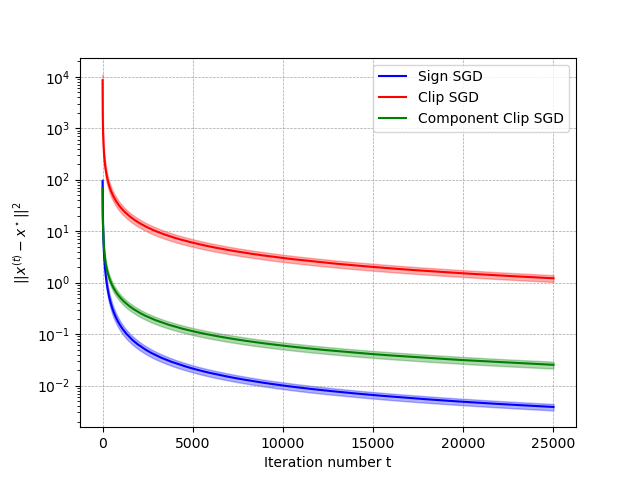}
&
\includegraphics[scale=0.3]{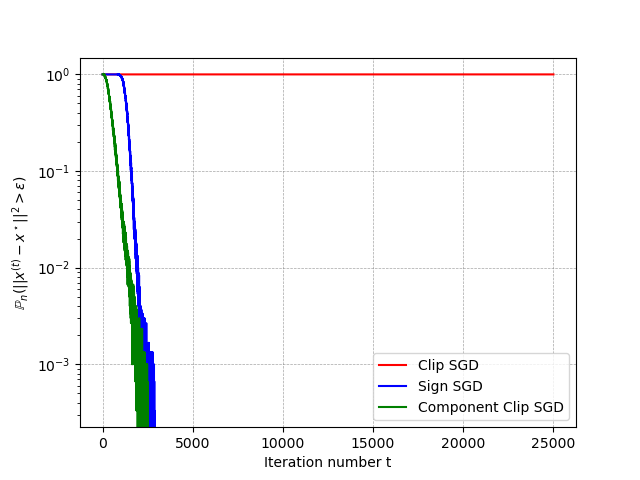}
&
\includegraphics[scale=0.3]{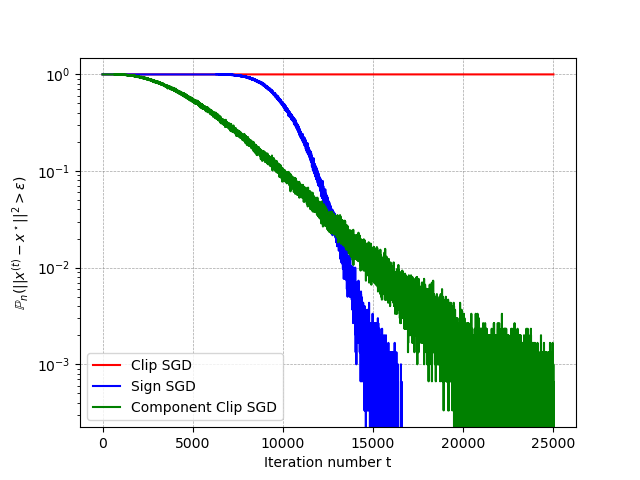}
\end{tabular}
\caption{Performance of sign, component-wise clipping and joint clipping. Left to right: MSE performance and high-probability performance for $\varepsilon = \{0.1, 0.01 \}$, respectively. We can see that both component-wise nonlinearities converge faster in the MSE sense and achieve exponential tail decay. Note that clipping does not achieve exponentially decaying tails in second and third figures, as it does not reach the required accuracy in the allocated number of iterations.}
\label{fig:fig1}
\end{figure}

\section{Conclusion}\label{sec:conclusion}

We present high-probability guarantees for a broad class of nonlinear SGD algorithms in the online setting and the presence of heavy-tailed noise with symmetric PDF. We establish near-optimal $\widetilde{\mathcal{O}}(t^{-\nicefrac{1}{4}})$ convergence rates for non-convex costs, and similar rates for the weighted average of iterates for strongly convex costs. Additionally, for the last iterate of strongly convex costs we establish convergence at a rate $\mathcal{O}(t^{-\zeta})$, where $\zeta \in (0,1)$ depends on noise and other problem parameters. We extend our analysis to noises that are mixtures of symmetric and non-symmetric components, showing convergence to a neighbourhood of stationarity, where the size of the neighborhood depends on choice of nonlinearity, noise and mixture coefficient. Compared to state-of-the-art works~\citet{nguyen2023improved,sadiev2023highprobability}, we extend the high-probability convergence guarantees to a broad class of nonlinearities, relax the noise moment condition, and demonstrate regimes in which our convergence rates are strictly better. This is made possible by a novel result on the interplay between the ``denoised'' nonlinearity and the gradient. Numerical results confirm the theory and demonstrate that clipping, exclusively considered in prior works, is not always the optimal choice of nonlinearity, further highlighting the importance and usefulness of our general framework.

\bibliography{bibliography}

\appendix

\section{Introduction}

Appendix contains results omitted from the main body. Section \ref{app:facts} provides some useful facts and results used in the proofs, Section \ref{app:proofs} provides the proofs omitted from the main body, Section \ref{app:rate} specializes the rate exponent $\zeta$ from Theorem \ref{theorem:main} for component-wise and joint nonlinearities, Section \ref{app:derivations} details the derivations for Examples \ref{example:5}-\ref{example:7}, Section \ref{app:analytical} provides an analytical example for which our rates predict that joint clipping is not the optimal choice of nonlinearity, Section \ref{app:num} provides additional experiments, Section \ref{app:noise} provides a detailed discussion on the noise assumption used in our work, while Section \ref{app:metric} provides a discussion on the metric used in our work.

\section{Useful Facts}\label{app:facts}

In this section we present some useful facts and results, concerning $L$-smooth, $\mu$-strongly convex functions, bounded random vectors and the behaviour of nonlinearities.

\begin{fact}
    Let $f: \R^d \mapsto \R$ be $L$-smooth, $\mu$-strongly convex, and let $\bx^\star = \argmin_{\bx \in \R^d}f(\bx)$. Then, for any $\bx \in \R^d$, we have
    \begin{equation*}
        2\mu\left(f(\bx) - f(\bx^\star)\right)\leq \|\nabla f(\bx)\|^2 \leq 2L\left(f(\bx) - f(\bx^\star)\right).
    \end{equation*}
\end{fact}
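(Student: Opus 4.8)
The plan is to obtain each inequality by minimizing one of the two quadratic bounds that $L$-smoothness and $\mu$-strong convexity provide, evaluated against the global minimizer $\bx^\star$ (which exists and attains $f^\star$ by the discussion following Assumption~\ref{asmpt:cvx}).

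For the right-hand inequality, I would start from the $L$-smoothness inequality recorded in the second Remark after Assumption~\ref{asmpt:L-smooth}, namely $f(\by) \leq f(\bx) + \langle \nabla f(\bx), \by - \bx\rangle + \frac{L}{2}\|\by - \bx\|^2$ for all $\bx,\by \in \R^d$. Fixing $\bx$ and minimizing the right-hand side over $\by$ — completing the square shows the minimizer is $\by = \bx - \frac{1}{L}\nabla f(\bx)$ with minimal value $f(\bx) - \frac{1}{2L}\|\nabla f(\bx)\|^2$ — gives $f(\bx^\star) = \inf_{\by} f(\by) \leq f(\bx) - \frac{1}{2L}\|\nabla f(\bx)\|^2$, and rearranging yields $\|\nabla f(\bx)\|^2 \leq 2L\left(f(\bx) - f(\bx^\star)\right)$.

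For the left-hand inequality, I would use Assumption~\ref{asmpt:cvx} directly: $f(\by) \geq f(\bx) + \langle \nabla f(\bx), \by - \bx\rangle + \frac{\mu}{2}\|\by - \bx\|^2$ for all $\bx,\by$. Viewing the right-hand side as a function of $\by$, it is minimized at $\by = \bx - \frac{1}{\mu}\nabla f(\bx)$ with value $f(\bx) - \frac{1}{2\mu}\|\nabla f(\bx)\|^2$, so $f(\by) \geq f(\bx) - \frac{1}{2\mu}\|\nabla f(\bx)\|^2$ for every $\by$. Specializing to $\by = \bx^\star$ and rearranging gives $2\mu\left(f(\bx) - f(\bx^\star)\right) \leq \|\nabla f(\bx)\|^2$. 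There is no genuine obstacle here: the only routine steps are the two ``complete the square'' computations identifying the minimizers of the quadratics, and invoking that strong convexity guarantees $f$ attains its infimum at $\bx^\star$, already established in the preliminaries.
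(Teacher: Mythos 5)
Your proposal is correct and complete. The paper itself does not write out a derivation: it simply plugs $y=\bx$, $x=\bx^\star$ (so that $\nabla f(\bx^\star)=0$) into two inequalities quoted from Nesterov's textbook (equations (2.1.10) and (2.1.24)), which directly read off $f(\bx)\geq f(\bx^\star)+\frac{1}{2L}\|\nabla f(\bx)\|^2$ and $f(\bx)\leq f(\bx^\star)+\frac{1}{2\mu}\|\nabla f(\bx)\|^2$. You instead derive both bounds from first principles: minimizing the $L$-smoothness upper quadratic over $\by$ gives $f(\bx^\star)\leq f(\bx)-\frac{1}{2L}\|\nabla f(\bx)\|^2$, and minimizing the strong-convexity lower quadratic over $\by$ before specializing to $\by=\bx^\star$ gives the Polyak--{\L}ojasiewicz-type bound $2\mu(f(\bx)-f(\bx^\star))\leq\|\nabla f(\bx)\|^2$. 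Both completions of the square are carried out correctly, and the argument is self-contained where the paper's is a citation; there is no gap.
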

\begin{proof}
    The proof of the upper bound follows by plugging $y = \bx$, $x = \bx^\star$ in equation (2.1.10) of Theorem~2.1.5 from~\citet{nesterov-lectures_on_cvxopt}. The proof of the lower bound similarly follows by plugging $y = \bx$, $x = \bx^\star$ in equation (2.1.24) of Theorem~2.1.10 from~\citet{nesterov-lectures_on_cvxopt}.
\end{proof}

\begin{fact}\label{fact:subgauss}
    Let $X \in \R^d$ be a zero-mean, bounded random vector, i.e., $\E X = 0$ and $\|X\| \leq \sigma$, for some $\sigma > 0$. Then, $X$ is $\sigma\sqrt{2}$-sub-Gaussian, i.e., for any $v \in \R^d$, we have
    \begin{equation*}
        \E e^{\langle X,v \rangle} \leq e^{\sigma^2\|v\|^2}.
    \end{equation*}
\end{fact}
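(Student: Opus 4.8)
The plan is to reduce the claim to a one‑dimensional statement and then apply an elementary convexity argument, with no external results needed. First I would fix $v \in \R^d$ and set $Y \triangleq \langle X, v \rangle$. By the Cauchy--Schwarz inequality and the boundedness assumption, $|Y| \leq \|X\|\,\|v\| \leq \sigma\|v\|$ almost surely, and since $X$ is zero‑mean, $\E Y = \langle \E X, v \rangle = 0$. Writing $c \triangleq \sigma\|v\|$, it thus suffices to show that any scalar random variable $Y$ with $\E Y = 0$ and $|Y| \leq c$ a.s. satisfies $\E e^{Y} \leq e^{c^2}$; in fact I would prove the stronger bound $\E e^{Y} \leq e^{c^2/2}$.

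For the scalar bound I would use convexity of $t \mapsto e^t$ on $[-c,c]$. For any $y \in [-c,c]$, write $y$ as the convex combination $y = \tfrac{c-y}{2c}\,(-c) + \tfrac{c+y}{2c}\,c$, whose weights are nonnegative and sum to one, so that
\begin{equation*}
    e^{y} \leq \frac{c-y}{2c}\,e^{-c} + \frac{c+y}{2c}\,e^{c}.
\end{equation*}
Taking expectations and using $\E Y = 0$ to annihilate the terms linear in $y$ gives $\E e^{Y} \leq \tfrac12\!\left(e^{c} + e^{-c}\right) = \cosh(c)$. Since $Y$ is bounded, every moment generating function in sight is finite, so exchanging expectation with the finite convex combination is legitimate.

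It then remains to bound $\cosh(c) \leq e^{c^2/2}$, which follows by comparing power series termwise: $\cosh(c) = \sum_{k \geq 0} \frac{c^{2k}}{(2k)!} \leq \sum_{k \geq 0} \frac{c^{2k}}{2^{k}\,k!} = e^{c^2/2}$, using $(2k)! \geq 2^{k}\,k!$. Substituting back $c = \sigma\|v\|$ yields $\E e^{\langle X, v\rangle} \leq e^{\sigma^2\|v\|^2/2} \leq e^{\sigma^2\|v\|^2}$, which is the claimed inequality.

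I do not expect a genuine obstacle here: this is the standard Hoeffding‑type sub‑Gaussianity bound for bounded random variables, which is precisely why the paper records it as a fact. The only points worth a line of care are the reduction to the scalar case (turning ``zero‑mean and norm‑bounded'' into ``zero‑mean and $|\cdot|$‑bounded'') and noting that boundedness makes all the exponential moments finite so the manipulations are valid. The very same computation is what is invoked for the bounded effective‑noise vectors $\bet$ in Lemma~\ref{lm:error_component}.
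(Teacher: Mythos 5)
Your proof is correct, but it takes a genuinely different route from the paper. The paper's proof is a two-line argument: it invokes the pointwise inequality $e^x \leq x + e^{x^2}$ (valid for all $x \in \R$), sets $x = \langle X, v\rangle$, kills the linear term using $\E X = 0$, and then bounds $\langle X, v\rangle^2 \leq \|X\|^2\|v\|^2 \leq \sigma^2\|v\|^2$ deterministically via Cauchy--Schwarz, so that $\E[\exp(\langle X,v\rangle^2)] \leq e^{\sigma^2\|v\|^2}$ without ever computing an expectation of a genuinely random exponential. You instead run the classical Hoeffding-lemma argument: reduce to the scalar variable $Y = \langle X, v\rangle$ with $|Y| \leq c = \sigma\|v\|$, use convexity of the exponential on $[-c,c]$ to get $\E e^Y \leq \cosh(c)$, and then the termwise series comparison $\cosh(c) \leq e^{c^2/2}$. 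Both arguments are complete and elementary; the trade-off is that the paper's is shorter (at the cost of relying on the slightly non-obvious inequality $e^x \leq x + e^{x^2}$, which it leaves unverified), while yours is longer but entirely self-contained and yields the strictly sharper constant $e^{\sigma^2\|v\|^2/2}$ rather than $e^{\sigma^2\|v\|^2}$ --- a factor of two in the sub-Gaussian variance proxy that would propagate harmlessly (indeed favourably) into Lemma~\ref{lm:error_component} and the constants $N$ in the downstream theorems, though the paper does not need this improvement anywhere.
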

\begin{proof}
    The proof follows a similar idea to the one of proving sub-Gaussian properties in, e.g., \cite{vershynin_2018}. Using the general inequality $e^x \leq x + e^{x^2}$, which holds for any $x \in \R$, setting $x = \langle X, v \rangle$, we get
    \begin{equation*}
        \E\left[\exp\left(\langle X,v \rangle\right)\right] \leq \E\left[\exp\left(\langle X,v \rangle^2\right)\right] \leq e^{\sigma^2 \|v\|^2}, 
    \end{equation*} where the first inequality follows from the fact that $X$ is zero mean, while the second follows from the Cauchy-Schwartz inequality and $\|X\| \leq \sigma$. 
\end{proof}

\section{Missing Proofs}\label{app:proofs}

In this section we provide proofs omitted from the main body. Subsection \ref{sec:proof-thm1} proves results pertinent to Theorem \ref{thm:non-conv}, Subsection \ref{sec:proof-thm2} proves results relating to Theorem \ref{theorem:main}, while Subsection \ref{sec:proof-thm3} proves Theorem \ref{thm:non-sym}.

\subsection{Proof of Theorem~\ref{thm:non-conv}}\label{sec:proof-thm1}

In this section we prove Lemmas \ref{lm:error_component}, \ref{lm:huber}, Theorem \ref{thm:non-conv} and Corollary \ref{cor:cvx}. We begin by proving Lemma~\ref{lm:error_component}.

\begin{proof}[Proof of Lemma~\ref{lm:error_component}]
    Recall the definition of the error vector $\bet \triangleq \boldsymbol{\Phi}^{(t)} - \boldsymbol{\Psi}^{(t)}$, where $\bPhi^{(t)} \triangleq \mbe_\bzt \left[\bPsi (\nabla f(\bxt)+\bzt) \: \vert \: \mathcal{F}_t \right]$ is the denoised version of $\bPsi^{(t)}$. By definition, it then follows that
    \begin{equation*}
        \E\left[\bet \vert \: \mathcal{F}_t\right] = \E\left[\bPhi^{(t)} - \bPsi^{(t)} \vert \: \mathcal{F}_t\right] = \bPhi^{(t)} - \E\left[ \bPsi^{(t)} \vert \: \mathcal{F}_t \right] = 0.
    \end{equation*} Moreover, by Assumption~\ref{asmpt:nonlin}, we have $\|\bet \| = \|\bPhi^{(t)} - \bPsi^{(t)}\| \leq \|\bPhi^{(t)}\| + \|\bPsi^{(t)}\| \leq \E\|\bPsi^{(t)}\| +  C \leq 2C$, which proves the first claim. The second claim readily follows by using the fact that $\bet$ is a bounded random variable and applying Fact~\ref{fact:subgauss}.
\end{proof}

Prior to proving Lemma \ref{lm:huber}, we state two results used in the proof. The first result, due to \cite{polyak-adaptive-estimation}, provides some properties of the mapping $\bPhi$ for component-wise nonlinearities under symmetric noise.

\begin{lemma}\label{lm:polyak-tsypkin}
    Let Assumptions~\ref{asmpt:nonlin} and~\ref{asmpt:noise} hold, with the nonlinearity $\bPsi: \R^d \mapsto \R^d$ being component wise, i.e., of the form $\bPsi(\bx) = \begin{bmatrix} \calN_1(x_1),\ldots,\calN_1(x_d)\end{bmatrix}^\top$. Then, the function $\bPhi: \R^d \mapsto \R^d$ is of the form $\bPhi(\bx) = \begin{bmatrix} \phi_1(x_1),\ldots,\phi_d(x_d) \end{bmatrix}^\top$, where $\phi_i(x_i) = \E_{z_i}\left[\calN_1(x_i + z_i)\right]$ is the marginal expectation of the $i$-th noise component, $i \in [d]$, with the following properties: 
    \begin{enumerate}
        \item $\phi_i$ is non-decreasing and odd, with $\phi_i(0) = 0$;
        
        \item $\phi_i$ is differentiable in zero, with $\phi_i^\prime(0) > 0$.
    \end{enumerate}
\end{lemma}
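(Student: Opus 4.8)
The plan is to establish the two claimed properties of the marginal maps $\phi_i$ directly from the definition $\phi_i(x) = \mbe_{z_i}[\calN_1(x + z_i)]$, using the structural hypotheses on $\calN_1$ (odd, monotone non-decreasing, bounded, strictly increasing near zero or discontinuous at zero) and on the noise (i.i.d.\ with symmetric PDF, positive in a neighbourhood of zero). First I would note that since $\bPsi$ is component-wise and the noise coordinates are independent, the $i$-th coordinate of $\bPhi(\bx) = \mbe_{\bz}[\bPsi(\bx + \bz)]$ is exactly $\mbe_{z_i}[\calN_1(x_i + z_i)]$, which depends only on $x_i$; this is the decomposition $\bPhi(\bx) = [\phi_1(x_1),\ldots,\phi_d(x_d)]^\top$. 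Boundedness of $\calN_1$ guarantees all these expectations are finite, so $\phi_i$ is well-defined on all of $\R$.

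For property 1, oddness of $\phi_i$: write $\phi_i(-x) = \mbe_{z_i}[\calN_1(-x + z_i)]$, substitute $z_i \mapsto -z_i$ using the symmetry $P_i(-z) = P_i(z)$ of the marginal PDF, and then use that $\calN_1$ is odd to get $\calN_1(-x - z_i) = -\calN_1(x + z_i)$, yielding $\phi_i(-x) = -\phi_i(x)$; in particular $\phi_i(0) = 0$. Monotonicity of $\phi_i$ is immediate: if $x \le x'$ then $\calN_1(x + z) \le \calN_1(x' + z)$ pointwise in $z$ because $\calN_1$ is non-decreasing, and expectation preserves the inequality.

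The substantive part is property 2: differentiability of $\phi_i$ at $0$ with $\phi_i'(0) > 0$. Here I would write the difference quotient $\frac{\phi_i(x) - \phi_i(0)}{x} = \frac{1}{x}\int \calN_1(x + z)\,P_i(z)\,dz$ and change variables ($u = x + z$) to move the $x$-dependence onto the density: $\phi_i(x) = \int \calN_1(u) P_i(u - x)\,du$. Since $\calN_1$ is bounded and $P_i$ is a PDF that (being a marginal of a symmetric PDF positive near zero) is itself symmetric, one can differentiate under the integral sign, or more carefully handle the possible discontinuity of $\calN_1$ at $0$ by splitting the integral: the contribution near $u = 0$, where $\calN_1$ may jump, combines with the positivity $P_i(0) > 0$ to produce a strictly positive term, while away from zero $\calN_1$ is piecewise differentiable and contributes non-negatively (by monotonicity). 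In the alternative case where $\calN_1$ is continuous but strictly increasing on $(-c_1, c_1)$, the strict increase together with $P_i > 0$ on a neighbourhood forces $\phi_i'(0) = \int \calN_1'(u) P_i(u)\,du$ (interpreting the derivative appropriately) to be strictly positive. I expect this to be the main obstacle — the technical care needed to justify differentiation under the integral and to extract strict positivity of $\phi_i'(0)$ in both the ``discontinuous at zero'' and ``strictly increasing near zero'' cases — and the cleanest route is likely to cite the corresponding statement from \citet{polyak-adaptive-estimation} (as the excerpt already signals), reproducing only the short symmetry and monotonicity arguments and referring to that reference for the delicate regularity claim.
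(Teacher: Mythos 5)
Your proposal is correct and matches the paper's treatment: the paper states this lemma as a known result due to \citet{polyak-adaptive-estimation} and gives no proof at all, which is precisely the route you converge on for the delicate claim that $\phi_i$ is differentiable at zero with $\phi_i'(0)>0$, while your arguments for the product decomposition, oddness and monotonicity are the standard (and correct) ones. One small remark: coordinate-wise independence of the noise is neither assumed in Assumption~\ref{asmpt:noise} nor needed for the decomposition --- the $i$-th component of $\bPhi$ equals the marginal expectation $\E_{z_i}[\calN_1(x_i+z_i)]$ simply by integrating out the other coordinates, and the marginal density inherits symmetry from the symmetric joint density, which is all your oddness argument requires.
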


The second result, due to \cite{jakovetic2023nonlinear}, gives a useful property of $\bPhi$ for joint nonlinearities.

\begin{lemma}\label{lm:jakovetic-joint}
    Let Assumption \ref{asmpt:nonlin} hold, with the nonlinearity $\bPsi: \R^d \mapsto \R^d$ being joint, i.e., of the form $\bPsi(\bx) = \bx\calN_2(\|\bx\|)$. Then for any $\bx, \bz \in \R^d$ such that $\|\bz\| > \|\bx\|$
    \begin{equation*}
        \left|\calN_2(\|\bx + \bz\|) - \calN_2(\|\bx - \bz\|)\right| \leq \nicefrac{\|\bx\|}{\|\bz\|}\left[\calN_2(\|\bx + \bz\|) + \calN_2(\|\bx - \bz\|) \right].
    \end{equation*} 
\end{lemma}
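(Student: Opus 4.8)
The plan is to reduce the claimed inequality to a single chain of elementary estimates built from the two monotonicity properties of $\calN_2$ in Assumption \ref{asmpt:nonlin}: $\calN_2$ is non-increasing, and $a \mapsto a\calN_2(a)$ is non-decreasing. If $\bx = \mathbf{0}$ then $\|\bx+\bz\| = \|\bx-\bz\| = \|\bz\|$ and both sides vanish, so assume $\|\bx\| > 0$. Write $u \triangleq \|\bx + \bz\|$ and $v \triangleq \|\bx - \bz\|$. Since $\|\bz\| > \|\bx\|$, the reverse triangle inequality gives $u, v \geq \|\bz\| - \|\bx\| > 0$, so $\calN_2(u)$ and $\calN_2(v)$ are well-defined and strictly positive by Assumption \ref{asmpt:nonlin}. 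The statement is symmetric under $\bz \mapsto -\bz$, which swaps $u$ and $v$ and leaves both sides unchanged, so without loss of generality I would assume $u \leq v$; then $\calN_2(u) \geq \calN_2(v) > 0$ because $\calN_2$ is non-increasing, and the left-hand side equals $\calN_2(u) - \calN_2(v) \geq 0$.

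The key step is to observe that the target inequality $\calN_2(u) - \calN_2(v) \leq \frac{\|\bx\|}{\|\bz\|}\left(\calN_2(u) + \calN_2(v)\right)$ is, after cross-multiplying by $\|\bz\| > 0$ and collecting terms, equivalent to $\calN_2(u)(\|\bz\| - \|\bx\|) \leq \calN_2(v)(\|\bz\| + \|\bx\|)$. To prove this form, I would use that $a \mapsto a\calN_2(a)$ is non-decreasing together with $u \leq v$ to get $u\calN_2(u) \leq v\calN_2(v)$, i.e. $\calN_2(u) \leq \frac{v}{u}\calN_2(v)$, and it then suffices to show $\frac{v}{u}(\|\bz\| - \|\bx\|) \leq \|\bz\| + \|\bx\|$. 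This last inequality follows from the triangle inequalities $v = \|\bx - \bz\| \leq \|\bx\| + \|\bz\|$ and $u = \|\bx + \bz\| \geq \|\bz\| - \|\bx\| > 0$, which give $\frac{v}{u}(\|\bz\|-\|\bx\|) \leq \frac{(\|\bx\|+\|\bz\|)(\|\bz\|-\|\bx\|)}{\|\bz\|-\|\bx\|} = \|\bz\| + \|\bx\|$. Chaining the two bounds yields $\calN_2(u)(\|\bz\| - \|\bx\|) \leq \calN_2(v)(\|\bz\| + \|\bx\|)$; rearranging and dividing by $\|\bz\|$ gives the claim when $u \leq v$, and the case $u > v$ is identical with the roles of $u$ and $v$ interchanged.

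I do not expect a genuine obstacle here: once the inequality is put into the cross-multiplied form, the proof is driven entirely by the two monotonicity assumptions plus the triangle inequality, and the constant comes out as exactly $\|\bx\|/\|\bz\|$. The only real decision point is finding that reduction — a natural first attempt is to bound $|u-v|$ via $u^2 - v^2 = 4\langle\bx,\bz\rangle$ and $u+v \geq 2(\|\bz\|-\|\bx\|)$ and then invoke some Lipschitz-type behaviour of $\calN_2$, but $\calN_2$ need not be Lipschitz and this route is lossy and does not close; directly comparing $\calN_2(u)$ with $\calN_2(v)$ through the product monotonicity is what makes it work. The two edge points worth flagging explicitly in the write-up are the trivial case $\bx=\mathbf{0}$ and the role of the hypothesis $\|\bz\|>\|\bx\|$, which is used both to keep $u,v$ strictly positive and to make the factor $\|\bz\|-\|\bx\|$ positive so that the division is legitimate.
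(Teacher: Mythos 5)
Your proof is correct. The paper itself does not prove this lemma --- it imports it from \citet{jakovetic2023nonlinear} --- but your argument is a valid, self-contained derivation: the reduction to $(\|\bz\|-\|\bx\|)\calN_2(u)\leq(\|\bz\|+\|\bx\|)\calN_2(v)$ for $u\leq v$, followed by $\calN_2(u)\leq \frac{v}{u}\calN_2(v)$ from the monotonicity of $a\mapsto a\calN_2(a)$ and the triangle-inequality bounds $v\leq\|\bx\|+\|\bz\|$, $u\geq\|\bz\|-\|\bx\|>0$, uses exactly the two hypotheses the lemma is meant to encode and handles the edge cases ($\bx=\mathbf{0}$, positivity of $u,v$) correctly.
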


Define $\phi^\prime(0) \triangleq \min_{i \in [d]}\phi_i^\prime(0)$ and $p_0 \triangleq P(\mathbf{0})$. We are now ready to prove Lemma \ref{lm:huber}. For convenience, we restate the full lemma below.

\begin{lemma}\label{lm:huber-app}
    Let Assumptions \ref{asmpt:nonlin} and \ref{asmpt:noise} hold. Then, for any $\bx \in \R^d$, we have $\langle \bPhi(\bx),\bx\rangle \geq \min\left\{\eta_1\|\bx\|,\eta_2\|\bx\|^2 \right\}$, where $\eta_1,\eta_2 > 0$, are noise, nonlinearity and problem dependent constants. In particular, if the nonlinearity $\bPsi$ is component-wise, we have $\eta_1 = \nicefrac{\phi^\prime(0)\xi}{2\sqrt{d}}$ and $\eta_2 = \nicefrac{\phi^\prime(0)}{2d}$, where $\xi > 0$ is a constant that depends only on the noise and choice of nonlinearity. If $\bPsi$ is a joint nonlinearity, then $\eta_1 = p_0\calN_2(1)/2$ and $\eta_2 = p_0\calN_2(1)$.
\end{lemma}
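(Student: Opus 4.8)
The plan is to treat the two structural forms of $\bPsi$ in Assumption~\ref{asmpt:nonlin} separately; in each case I reduce $\langle\bPhi(\bx),\bx\rangle$ to a one-dimensional ``Huber-type'' estimate (quadratic in $\|\bx\|$ for small $\|\bx\|$, linear for large $\|\bx\|$) and then patch the two regimes together into the claimed minimum. For the \textbf{component-wise} case, Lemma~\ref{lm:polyak-tsypkin} gives $\bPhi(\bx)=(\phi_1(x_1),\dots,\phi_d(x_d))^\top$ with each $\phi_i$ odd, non-decreasing, $\phi_i(0)=0$, and differentiable at $0$ with $\phi_i'(0)>0$; in particular $\phi_i(a)a\ge 0$. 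First I establish a per-coordinate bound: for each $i$ choose $c_i>0$ with $\phi_i(a)/a\ge\phi_i'(0)/2$ on $0<|a|\le c_i$ (possible since $\phi_i'(0)>0$), which yields $\phi_i(a)a\ge\tfrac{\phi_i'(0)}{2}a^2$ for $|a|\le c_i$, while for $|a|>c_i$ monotonicity and oddness give $|\phi_i(a)|\ge\phi_i(c_i)\ge\tfrac{\phi_i'(0)}{2}c_i$, hence $\phi_i(a)a\ge\tfrac{\phi_i'(0)}{2}c_i|a|$. Writing $\phi'(0)=\min_i\phi_i'(0)$ and $\xi=\min_i c_i$ (both depending only on noise and nonlinearity), this gives $\phi_i(a)a\ge\min\{\tfrac{\phi'(0)\xi}{2}|a|,\tfrac{\phi'(0)}{2}a^2\}$ for every $i$. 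Summing over $i$, keeping only the coordinate $j$ with $|x_j|=\max_i|x_i|\ge\|\bx\|/\sqrt d$ (every discarded term is nonnegative), and using that $a\mapsto\min\{ca,c'a^2\}$ is non-decreasing on $a\ge0$, yields $\langle\bPhi(\bx),\bx\rangle\ge\min\{\tfrac{\phi'(0)\xi}{2\sqrt d}\|\bx\|,\tfrac{\phi'(0)}{2d}\|\bx\|^2\}$, which is the claim with the stated $\eta_1,\eta_2$.

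For the \textbf{joint} case, write $g(\by)=\by\calN_2(\|\by\|)=\bPsi(\by)$, an odd, bounded map, so $\bPhi(\bx)=\E_\bz[g(\bx+\bz)]$ and $\bPhi(\mathbf 0)=0$. Using $P(\bz)=P(-\bz)$ I symmetrize:
\[
\langle\bPhi(\bx),\bx\rangle=\tfrac12\E_\bz\!\big[\langle\bx+\bz,\bx\rangle\calN_2(\|\bx+\bz\|)+\langle\bx-\bz,\bx\rangle\calN_2(\|\bx-\bz\|)\big].
\]
On $\{\|\bz\|\le\|\bx\|\}$ both inner products are $\ge0$ (they equal $\|\bx\|^2\pm\langle\bz,\bx\rangle$) and $\calN_2(\|\bx\pm\bz\|)\ge\calN_2(\|\bx\|+\|\bz\|)$, so the integrand is $\ge 2\|\bx\|^2\calN_2(\|\bx\|+\|\bz\|)\ge0$; on $\{\|\bz\|>\|\bx\|\}$, expanding $\langle\bx\pm\bz,\bx\rangle=\|\bx\|^2\pm\langle\bz,\bx\rangle$ and bounding the cross term $\langle\bz,\bx\rangle(\calN_2(\|\bx+\bz\|)-\calN_2(\|\bx-\bz\|))$ via $|\langle\bz,\bx\rangle|\le\|\bz\|\|\bx\|$ together with Lemma~\ref{lm:jakovetic-joint} shows its magnitude is at most the $\|\bx\|^2$-term, so the integrand is again $\ge0$. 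Hence $\langle\bPhi(\bx),\bx\rangle\ge\|\bx\|^2\,\E_\bz[\mathbf 1\{\|\bz\|\le\|\bx\|\}\calN_2(\|\bx\|+\|\bz\|)]$. For $\|\bx\|\ge1$, restricting to $\{\|\bz\|\le1\}$ and using $(\|\bx\|+1)\calN_2(\|\bx\|+1)\ge\calN_2(1)$ (monotonicity of $a\mapsto a\calN_2(a)$) gives $\langle\bPhi(\bx),\bx\rangle\ge\tfrac12 p_0\calN_2(1)\|\bx\|$, where $p_0$ is the noise mass of a fixed ball about the origin.

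The main obstacle is the regime of small $\|\bx\|$, where $\{\|\bz\|\le\|\bx\|\}$ carries too little mass for the above to give an $\Omega(\|\bx\|^2)$ bound. Here I integrate along the segment from $\mathbf 0$ to $\bx$: for a.e.\ $\bz$, $\langle g(\bx+\bz)-g(\bz),\bx\rangle=\int_0^1\bx^\top\nabla g(\bz+s\bx)\bx\,ds$, and since $\nabla g(\by)=\calN_2(\|\by\|)I+\|\by\|\calN_2'(\|\by\|)\tfrac{\by\by^\top}{\|\by\|^2}$ with $\|\by\|\calN_2'(\|\by\|)\ge-\calN_2(\|\by\|)$ a.e.\ (because $a\mapsto a\calN_2(a)$ is non-decreasing), one has the pointwise bound $\bx^\top\nabla g(\by)\bx\ge\calN_2(\|\by\|)\big(\|\bx\|^2-\langle\bx,\by/\|\by\|\rangle^2\big)\ge0$. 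Taking $\E_\bz$ (legitimate since $g$ is bounded and, $\calN_2$ being monotone, differentiable a.e., and the integrand is nonnegative), restricting $\bz$ to a fixed ball small enough that $\calN_2(\|\bz+s\bx\|)\ge\calN_2(1)$ for $s\in[0,1]$, and invoking compactness of $\{\|\bx\|\le1-r_0\}\times[0,1]$ to bound $\int_0^1\E_\bz[\mathbf 1\{\|\bz\|\le r_0\}(\|\bx\|^2-\langle\bx,(\bz+s\bx)/\|\bz+s\bx\|\rangle^2)]\,ds\ge\kappa_0\|\bx\|^2$ for a strictly positive $\kappa_0$ — positivity holds for $d\ge2$ since $\bz$ is not almost surely collinear with $\bx$, while $d=1$ joint maps are themselves component-wise and handled above — yields $\langle\bPhi(\bx),\bx\rangle\ge\kappa_0\calN_2(1)\|\bx\|^2$. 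Combining the two joint regimes and adjusting the threshold and constants gives $\langle\bPhi(\bx),\bx\rangle\ge\min\{\eta_1\|\bx\|,\eta_2\|\bx\|^2\}$ with $\eta_1=p_0\calN_2(1)/2$, $\eta_2=p_0\calN_2(1)$.

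The points I expect to require the most care are the a.e.\ validity of the segment identity (arguing via Fubini that the non-differentiability set of $g$ is met only for a null set of $\bz$) and the uniform positivity of $\kappa_0$; and the constant book-keeping needed so the two joint regimes splice into a single $\min\{\eta_1\|\bx\|,\eta_2\|\bx\|^2\}$ with exactly the stated $\eta_1,\eta_2$, which also pins down the precise meaning of $p_0$ and $\xi$.
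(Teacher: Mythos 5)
Your component-wise argument is, step for step, the paper's own proof (linearize $\phi_i$ at zero, pick the threshold $\xi$, keep only the largest coordinate, use $\|\bx\|_\infty\ge\nicefrac{\|\bx\|}{\sqrt d}$), and it correctly yields the stated $\eta_1,\eta_2$. The joint case is where you diverge, and where the gaps are. After symmetrizing, you retain a quantitative bound only on $\{\|\bz\|\le\|\bx\|\}$ and discard $\{\|\bz\|>\|\bx\|\}$ as merely nonnegative; that is precisely what forces you into the segment-integral argument for small $\|\bx\|$. The paper never needs that regime: it restricts the integral to the sector $J_2(\bx)=\{\bz:\,0\le\langle\bz,\bx\rangle\le\tfrac12\|\bz\|\|\bx\|\}$, on which Lemma~\ref{lm:jakovetic-joint} bounds the cross term by $\tfrac12\|\bx\|^2\bigl[\calN_2(\|\bx+\bz\|)+\calN_2(\|\bx-\bz\|)\bigr]$ \emph{uniformly}, including when $\|\bz\|>\|\bx\|$, so the symmetrized integrand is at least $\|\bx\|^2\calN_2(\|\bx\|+\|\bz\|)$ on all of $J_2(\bx)$. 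Intersecting with a fixed ball $\{\|\bz\|\le C_0\}$ of positive mass and using $\calN_2(a)\ge\calN_2(1)\min\{a^{-1},1\}$ then delivers the quadratic bound for small $\|\bx\|$ and the linear bound for large $\|\bx\|$ in one stroke, with the stated constants and no differentiation of $\calN_2$.

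Your substitute for the small-$\|\bx\|$ regime has three concrete problems. First, the identity $\langle g(\bx+\bz)-g(\bz),\bx\rangle=\int_0^1\bx^\top\nabla g(\bz+s\bx)\bx\,ds$ needs $s\mapsto\langle g(\bz+s\bx),\bx\rangle$ to be absolutely continuous, not just differentiable a.e.; a Fubini argument about the non-differentiability set does not supply this, since a continuous BV function with a negative singular part can strictly undershoot the integral of its a.e.\ derivative. (Local absolute continuity of $\calN_2$ on $(0,\infty)$ does in fact follow from the pair of monotonicity conditions in Assumption~\ref{asmpt:nonlin} — a downward singular part of $d\calN_2$ would contradict $a\calN_2(a)$ being non-decreasing — but that argument is absent from your sketch and is the crux of the step.) Second, your compactness constant $\kappa_0$ is unquantified, so your route does not produce the specific values $\eta_1=p_0\calN_2(1)/2$, $\eta_2=p_0\calN_2(1)$ that the lemma asserts (and your $p_0$ is the mass of a ball, not the paper's $P(\mathbf 0)$). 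Third, the argument excludes $d=1$, and the claim that a one-dimensional joint map is automatically an admissible component-wise nonlinearity (in particular that it satisfies property~3 of Assumption~\ref{asmpt:nonlin}, which is what guarantees $\phi^\prime(0)>0$ in Lemma~\ref{lm:polyak-tsypkin}) is not checked. The qualitative conclusion is plausibly reachable along your route for $d\ge2$, but as written the joint case does not prove the lemma as stated; the angular restriction to $J_2(\bx)$ repairs all three issues at once.
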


\begin{proof}[Proof of Lemma~\ref{lm:huber-app}]
    First, consider the case when $\bPhi(\bx) = [\calN_1(x_1), \ldots, \calN_15(x_d)]^\top$ is component-wise. From Lemma \ref{lm:polyak-tsypkin} it follows that, for any $x \in \R$, and any $i \in [d]$, we have
    \begin{equation*}
        \phi_i(x) = \phi_i(0) + \phi_i^\prime(0)x + h_i(x)x = \phi_i^\prime(0)x + h_i(x)x,
    \end{equation*} where $h_i: \R \mapsto \R$ is such that $\lim_{x \rightarrow 0}h_i(x) = 0$. Recalling that $\phi^\prime(0) = \min_{i \in [d]}\phi_i^\prime(0) > 0$, it follows that there exists a $\xi > 0$ (depending only on the nonlinearity $\mathcal{N}_1$) such that, for each $x \in \R$ and all $i \in [d]$, we have $|h_i(x)| \leq \phi^\prime(0) / 2$, if $|x| \leq \xi$. Therefore, for any $0 \leq x \leq \xi$, we have $\phi_i(x) \geq \frac{\phi^\prime(0)x}{2}$. On the other hand, for $x > \xi$, since $\phi_i$ is non-decreasing, we have from the previous relation that $\phi_i(x) \geq \phi_i(\xi) \geq \frac{\phi^\prime(0)\xi}{2}$. Therefore, it follows that $\phi_i(x) \geq \frac{\phi^\prime(0)}{2}\min\{x,\xi \}$, for any $x \geq 0$. Combined with the oddity of $\phi_i$, we get $x\phi_i(x) = |x|\phi_i(|x|) \geq \frac{\phi^\prime(0)}{2}\min\{\xi|x|,x^2\}$, for any $x \in \R$. Using the previously established relations, we then have, for any vector $\bx \in \R^d$
    \begin{align*}
        \langle \bx, \bPhi(\bx) \rangle &= \sum_{i = 1}^d x_i\phi_i(x_i)  = \sum_{i = 1}^d |x_i|\phi(|x_i|) \geq \max_{i \in [d]}|x_i|\phi_i(|x_i|) \geq \frac{\phi^\prime(0)}{2}\max_{i \in [d]}\min\{\xi|x_i|,|x_i|^2\} \nn \\ &= \frac{\phi^\prime(0)}{2}\min\{\xi\|\bx\|_{\infty},\|\bx\|_{\infty}^2\} \geq \frac{\phi^\prime(0)}{2}\min\{\nicefrac{\xi\|\bx\|}{ \sqrt{d}},\nicefrac{\|\bx\|^2}{d}\} ,
    \end{align*} where the last inequality follows from the fact that $\|\bx\|_{\infty} \geq \|\bx\| / \sqrt{d}$. Next, consider the case when $\bPhi(\bx) = \bx\calN_2(\|\bx\|)$ is joint. The proof follows a similar idea to the one in \cite[Lemma 6.2]{jakovetic2023nonlinear}, with some important differences due to the different noise assumption. Fix an arbitrary $\bx \in \R^d \setminus \{\mathbf{0}\}$. By the definition of $\bPsi$, we have
    \begin{align*}
        \langle \bPhi(\bx),\bx \rangle = \int_{\bz \in \R^d}\underbrace{(\bx + \bz)^\top \bx \calN_2(\|\bx + \bz\|)}_{\triangleq M(\bx,\bz)}P(\bz)d\bz = \int_{\{\bz \in \R^d: \langle\bz,\bx\rangle \geq 0\} \cup \{\bz \in \R^d: \langle\bz,\bx\rangle < 0\}}\hspace{-10em}M(\bx,\bz)P(\bz)d\bz. 
    \end{align*} Next, by symmetry of $P$, it readily follows that $\langle \bPhi(\bx),\bx \rangle = \int_{J_1(\bx)}M_2(\bx,\bz)P(\bz)d\bz,$ where $J_1(\bx) \triangleq \{\bz \in \R^d: \langle \bz,\bx\rangle \geq 0 \}$ and $M_2(\bx,\bz) \triangleq (\|\bx\|^2 + \langle \bz,\bx\rangle)\calN_2(\|\bx + \bz\|) + (\|\bx\|^2 - \langle \bz,\bx\rangle)\calN_2(\|\bx - \bz\|)$. Consider the set $J_2(\bx) \triangleq \left\{\bz \in \R^d: \frac{\langle \bz,\bx\rangle}{\|\bz\|\|\bx\|} \in [0,0.5] \right\} \cup \{\mathbf{0}\}$. Clearly $J_2(\bx) \subset J_1(\bx)$. Note that on $J_1(\bx)$ we have $\|\bx + \bz \| \geq \|\bx - \bz\|$, which, together with the fact that $\calN_2$ is non-increasing, implies
    \begin{equation}\label{eq:identity}
        \calN_2(\|\bx - \bz\|) - \calN_2(\|\bx + \bz\|) = \left|\calN_2(\|\bx - \bz\|) - \calN_2(\|\bx + \bz\|) \right|, 
    \end{equation} for any $\bz \in J_1(\bx)$. For any $\bz \in J_2(\bx)$ such that $\|\bz\| > \|\bx\|$, we then have
    \begin{align*}
        M_2&(\bx,\bz) = \|\bx\|^2[\calN_2(\|\bx - \bz\|) + \calN_2(\|\bx + \bz\|)] - \langle \bz,\bx\rangle[\calN_2(\|\bx - \bz\|) - \calN_2(\|\bx + \bz\|)] \\ &\stackrel{(a)}{=} \|\bx\|^2[\calN_2(\|\bx - \bz\|) + \calN_2(\|\bx + \bz\|)] - \langle \bz,\bx\rangle\left|\calN_2(\|\bx - \bz\|) - \calN_2(\|\bx + \bz\|)\right| \\ &\stackrel{(b)}{\geq} \|\bx\|^2[\calN_2(\|\bx - \bz\|) + \calN_2(\|\bx + \bz\|)] - \langle \bz,\bx\rangle\nicefrac{\|\bx\|}{\|\bz\|}[\calN_2(\|\bx - \bz\|) + \calN_2(\|\bx + \bz\|)] \\ &\stackrel{(c)}{\geq} 0.5\|\bx\|^2[\calN_2(\|\bx - \bz\|) + \calN_2(\|\bx + \bz\|)],
    \end{align*} where $(a)$ follows from \eqref{eq:identity}, $(b)$ follows from Lemma \ref{lm:jakovetic-joint}, while $(c)$ follows from the definition of $J_2(\bx)$. Next, consider any $\bz \in J_2(\bx)$, such that $0 < \|\bz\| \leq \|\bx\|$. We have
    \begin{align*}
        M_2&(\bx,\bz) = \|\bx\|^2[\calN_2(\|\bx - \bz\|) + \calN_2(\|\bx + \bz\|)] - \langle \bz,\bx\rangle[\calN_2(\|\bx - \bz\|) - \calN_2(\|\bx + \bz\|)] \\ &\stackrel{(a)}{=} \|\bx\|^2[\calN_2(\|\bx - \bz\|) + \calN_2(\|\bx + \bz\|)] - \langle \bz,\bx\rangle\left|\calN_2(\|\bx - \bz\|) - \calN_2(\|\bx + \bz\|)\right| \\ &\stackrel{(b)}{\geq} \|\bx\|^2[\calN_2(\|\bx - \bz\|) + \calN_2(\|\bx + \bz\|)] - 0.5\|\bx\|^2\left|\calN_2(\|\bx - \bz\|) - \calN_2(\|\bx + \bz\|)\right| \\ &\stackrel{(c)}{\geq} 0.5\|\bx\|^2[\calN_2(\|\bx - \bz\|) + \calN_2(\|\bx + \bz\|)],
    \end{align*} where $(a)$ again follows from \eqref{eq:identity}, $(b)$ follows from the definition of $J_2(\bx)$ and the fact that $0 < \|\bz\| \leq \|\bx\|$, while $(c)$ follows from $\calN_2$ being non-negative and $\left|\calN_2(\|\bx - \bz\|) - \calN_2(\|\bx + \bz\|)\right| \leq \calN_2(\|\bx - \bz\|) + \calN_2(\|\bx + \bz\|)$. Finally, if $\bz = \mathbf{0}$, we have $M_2(\bx,\mathbf{0}) = 2\|\bx\|^2\calN_2(\|\bx\|) > 0.5\|\bx\|^2[\calN_2(\|\bx + \mathbf{0}\|) + \calN_2(\|\bx - \mathbf{0}\|)]$. Therefore, for any $\bz \in J_2(\bx)$, we have $M_2(\bx,\bz) \geq 0.5\|\bx\|^2[\calN_2(\|\bx - \bz\|) + \calN_2(\|\bx + \bz\|)] \geq \|\bx\|^2\calN_2(\|\bx\| + \|\bz\|)$, where the second inequality follows from the fact that $\calN_2$ is non-increasing and $\|\bx \pm \bz\| \leq \|\bx\| + \|\bz\|$. Note that following a similar argument as above, it can be shown that $M_2(\bx,\bz) \geq 0$, for any $\bx \in \R^d$ and $\bz \in J_1(\bx)$. Combining everything, it readily follows that
    \begin{align}\label{eq:semi-done}
        \langle \bPhi(\bx),\bx \rangle \geq \int_{J_2(\bx)}M_2(\bx,\bz)P(\bz)d\bz \geq \|\bx\|^2 \int_{J_2(\bx)}\calN_2(\|\bx\| + \|\bz\|)P(\bz)d\bz,
    \end{align} where the first inequality follows from the fact that $M_2(\bx,\bz) \geq 0$ on $J_1(\bx)$. Define $C_0 \triangleq \min\left\{B_0,0.5 \right\}$ and consider the set $J_3(\bx) \subset J_2(\bx)$, defined as 
    \begin{equation*}
        J_3(\bx) \triangleq \left\{\bz \in \R^d: \frac{\langle \bz,\bx\rangle}{\|\bz\|\|\bx\|} \in [0,0.5], \: \|\bz\| \leq C_0 \right\} \cup \{\mathbf{0}\}.
    \end{equation*} Since $a\calN_2(a)$ is non-decreasing, it follows that $\calN_2(a) \geq \calN_2(1)\min\left\{a^{-1},1 \right\}$, for any $a > 0$. For any $\bz \in J_3(\bx)$, it then holds that $\calN_2(\|\bz\| + \|\bx\|) \geq \calN_2(1)\min\left\{1/(\|\bx\|+C_0),1 \right\}$. Plugging in \eqref{eq:semi-done}, we then have
    \begin{align}
        &\langle \bPhi(\bx),\bx \rangle \geq \|\bx\|^2 \int_{J_3(\bx)}\calN_2(\|\bx\| + \|\bz\|)P(\bz)d\bz \nonumber \\ &\geq \|\bx\|^2\calN_2(1)\min\left\{(\|\bx\| + C_0)^{-1},1 \right\}\hspace{-0.45em}\int_{J_3(\bx)}\hspace{-1.5em}P(\bz)d\bz \geq \|\bx\|^2\calN_2(1)\min\left\{(\|\bx\| + C_0)^{-1},1 \right\}p_0 \label{eq:intermed}.
    \end{align} If $\|\bx\| \leq C_0$, it follows that $\|\bx\| + C_0 \leq 2C_0$, therefore $\min\left\{1/(\|\bx\|+C_0),1\right\} \geq \min\left\{1/(2C_0),1 \right\}.$ Define $\kappa \triangleq \min\left\{1/(2C_0),1 \right\}$. If $\|\bx\| \geq C_0$, it follows that $\|\bx\| + C_0 \leq 2\|\bx\|$, therefore $\min\left\{1/(\|\bx\|+C_0),1 \right\} \geq \min\left\{1/(2\|\bx\|),1 \right\} \geq \min\left\{1/(2\|\bx\|),\kappa \right\}.$ Combining the observations, we get $\langle \bPhi(\bx),\bx \rangle \geq p_0\calN_2(1)\min\left\{\|\bx\|/2,\kappa\|\bx\|^2 \right\}$. Consider $\kappa = \min\left\{\nicefrac{1}{(2C_0)},1 \right\}$. If $B_0 \geq 0.5$, it follows that $C_0 = 0.5$ and therefore $\kappa = 1$. On the other hand, if $B_0 < 0.5$, it follows that $C_0 = B_0$ and therefore $\kappa = \min\left\{\nicefrac{1}{(2B_0)},1 \right\} = 1$, as $2B_0 < 1$.
\end{proof}

We are now ready to prove Theorem~\ref{thm:non-conv}.

\begin{proof}[Proof of Theorem~\ref{thm:non-conv}]
    For ease of notation, let $Z_t \triangleq \min\{\eta_1\|\nabla f(\bxt) \|,\eta_2\|\nabla f(\bxt)\|^2\}$. Applying the $L$-smoothness property of $f$ and the update rule~\eqref{eq:nonlin-sgd2}, to get
    \begin{align*}
        f(\bxtp) &\leq f(\bxt) - \alpha_t\langle \nabla f(\bxt),\bPhit - \bet \rangle + \frac{\alpha_t^2L}{2}\|\bPsi^{(t)}\|^2 \\ &\leq f(\bxt) - \alpha_tZ_t + \alpha_t\langle \nabla f(\bxt),\bet\rangle + \frac{\alpha_t^2LC^2}{2}, 
    \end{align*} where the second inequality follows from Lemma~\ref{lm:huber} and Assumption~\ref{asmpt:nonlin}. Rearranging and summing up the first $t$ terms, we get
    \begin{equation}\label{eq:5}
        \sum_{k = 1}^{t}\alpha_kZ_k \leq \underbrace{f(\bx^{(1)}) - f^\star + \frac{LC^2}{2}\sum_{k = 1}^{t}\alpha_k^2}_{\triangleq B_1} + \underbrace{\sum_{k = 1}^{t}\alpha_k\langle \nabla f(\bxk), \bek \rangle}_{\triangleq B_2}. 
    \end{equation} Denote the left-hand side of \eqref{eq:5} by $G_t$, i.e., $G_t \triangleq \sum_{k = 1}^{t}\alpha_kZ_k$ and note that $B_1$ is independent of the noise, i.e., is a deterministic quantity. We then have
    \begin{equation*}
        \E\left[\exp(G_t) \right] \stackrel{\eqref{eq:5}}{\leq} \E\left[\exp\left(B_1 + B_2\right) \right] = \exp\left(B_1 \right)\E\left[\exp(B_2)\right]. 
    \end{equation*} We now bound $\E[\exp(B_2)]$. Denote by $\E_t[\cdot] \triangleq \E[ \cdot \: \vert \: \mathcal{F}_t]$ the expectation conditioned on history up to time $t$. We then have
    \begin{align}
        \E[\exp(B_2)] &= \E\left[\exp\left(\sum_{k = 1}^{t}\alpha_k\langle \nabla f(\bxk), \bek \rangle\right)\right] \nonumber \\ &= \E\left[\exp\left(\sum_{k = 1}^{t-1}\alpha_k\langle \nabla f(\bxk), \bek \rangle \right)\E_{t}\left[\exp(\alpha_{t}\langle \nabla f(\bx^{(t)}),\be^{(t)} \rangle) \right] \right] \nonumber \\ &\leq \E\left[\exp\left(\sum_{k = 1}^{t-1}\alpha_k\langle \nabla f(\bxk), \bek \rangle \right) \exp\left(4C^2\alpha_{t}^2\|\nabla f(\bx^{(t)})\|^2 \right) \right], \label{eq:b2}
    \end{align} where the last inequality follows from Lemma~\ref{lm:error_component}. Next, consider $\|\nabla f(\bxk)\|$, for any $k \geq 0$. Define $A_t \triangleq \sum_{k = 1}^{t}\alpha_k$ and use $L$-smoothness, to get
    \begin{align}
        \|\nabla f(\bxk)\| &\leq L\|\bxk - \bx^\star\| = L\|\bx^{(k-1)} - \alpha_{k-1}\bPsi^{(k-1)} - \bx^\star\| \leq L\left(\|\bx^{(k-1)} - \bx^\star\| +  \alpha_{k-1}C\right) \nonumber \\ &\leq \ldots \leq L\left(\|\bx^{(1)} - \bx^\star\| + C\sum_{s = 1}^{k-1}\alpha_s\right) \leq L\left(\|\bx^{(1)} - \bx^\star\| + CA_k\right), \label{eq:grad-norm}
    \end{align} where we recall that $\bx^\star \in \mathcal{X}$ is any stationary point of $f$. Combining~\eqref{eq:b2} and~\eqref{eq:grad-norm}, we get
    \begin{align*}
        \E[\exp(B_2)] &\leq \exp\left(8C^2L^2D_{\mathcal{X}}\alpha_{t}^2 + 8C^4L^2\alpha_{t}^2A_{t}^2 \right)\E\left[\exp\left(\sum_{k = 1}^{t-1}\alpha_k\langle \nabla f(\bxk), \bek \rangle \right) \right],
    \end{align*} where $D_{\mathcal{X}} = \inf_{\bx^\star \in \mathcal{X}}\|\bx^{(1)} - \bx^\star\|^2$ is the distance of the initial model estimate from the set of stationary points. Repeating the same arguments recursively, we then get
    \begin{equation*}
        \E\left[\exp(B_2) \right] \leq \exp\left(8C^2L^2D_{\mathcal{X}}\sum_{k = 1}^{t}\alpha_k^2 + 8C^4L^2\sum_{k = 1}^{t}\alpha_k^2A_{k}^2 \right),
    \end{equation*}  Combining everything, we get
    \begin{equation*}
        \E\left[\exp(G_t) \right] \leq \exp\left(f(\bx^{(1)}) - f^\star + LC^2\left(\nicefrac{1}{2} + 8LD_{\mathcal{X}}\right)\sum_{k = 1}^{t}\alpha_k^2 + 8C^4L^2\sum_{k = 1}^{t}\alpha_k^2A_{k}^2 \right).
    \end{equation*} Define $N_t \triangleq f(\bx^{(1)}) - f^\star + LC^2\left(\nicefrac{1}{2} + 8LD_{\mathcal{X}}\right)\sum_{k = 1}^{t}\alpha_k^2 + 8C^4L^2\sum_{k = 1}^{t}\alpha_k^2A_{k}^2$. Using Markov's inequality, it then follows that, for any $\epsilon > 0$
    \begin{equation*}
        \mathbb{P}(G_t > \epsilon) \leq \exp(-\epsilon)\E[\exp(G_t)] \leq \exp(-\epsilon + N_t) \iff \mathbb{P}(G_t > \epsilon + N_t) \leq \exp(-\epsilon).
    \end{equation*} Finally, for any $\beta \in (0,1)$, with probability at least $1 - \beta$, we have
    \begin{equation}\label{eq:6}
        G_t \leq \log(\nicefrac{1}{\beta}) + N_t \iff A_t^{-1}G_t \leq A_t^{-1}\left(\log(\nicefrac{1}{\beta}) + N_t \right).
    \end{equation} Note that for the step-size schedule $\alpha_t = \frac{a}{(t + 1)^\delta}$ and any $\delta \in (\nicefrac{2}{3},1)$, using lower and upper Darboux sums, we have
    \begin{equation}\label{eq:Darboux}
    \begin{aligned}
        \frac{a}{1-\delta}((t+2)^{1-\delta} - 2^{1-\delta})&\leq A_t \leq \frac{a}{1-\delta}((t+1)^{1-\delta} - 1), \\
        \frac{a^2}{2\delta - 1}(2^{1-2\delta} - (t+2)^{1-2\delta})&\leq \sum_{k = 1}^{t}\alpha_k^2 \leq \frac{a^2}{2\delta - 1}(1 - (t+1)^{1-2\delta}).
    \end{aligned}
    \end{equation} Plugging \eqref{eq:Darboux} in \eqref{eq:6}, we then get, with probability at least $1 - \beta$
    \begin{align}
        \sum_{k = 1}^{t}&\widetilde{\alpha}_kZ_k \leq \frac{(1-\delta)\left(f(\bx^{(1)}) - f^\star + \log(\nicefrac{1}{\beta})\right)}{a((t+2)^{1-\delta}-2^{1-\delta})} \nonumber \\ &+ \frac{a(1-\delta)LC^2(\nicefrac{1}{2} + 8LD_{\mathcal{X}})}{(2\delta-1)((t+2)^{1-\delta} - 2^{1-\delta})} + \frac{8a^3C^4L^2\sum_{k = 1}^{t}(k+1)^{2-4\delta}}{(1-\delta)((t+2)^{1-\delta} - 2^{1 - \delta})}. \label{eq:semi-final}
    \end{align} To bound the last sum, we consider different step-size schedules.
    \begin{enumerate}
        \item First, consider $\alpha_t = \frac{a}{(t + 1)^\delta}$, for $\delta \in (\nicefrac{2}{3},\nicefrac{3}{4})$. Using the lower Darboux sum, we have
    \begin{equation*}
        \sum_{k = 1}^{t}(k+1)^{2-4\delta} \leq \int_{1}^{t+1}k^{2-4\delta}dk \leq \frac{(t+1)^{3-4\delta}}{3 - 4\delta}.
    \end{equation*} Combining with \eqref{eq:semi-final}, we get
    \begin{equation}\label{eq:bound-st1}
        \sum_{k = 1}^{t}\widetilde{\alpha}_kZ_k \leq \frac{R_1}{(t+2)^{1-\delta} - 2^{1-\delta}} + \frac{R_2(t+1)^{3-4\delta}}{(t+2)^{1-\delta} - 2^{1-\delta}} \leq \frac{R_1}{(t+2)^{1-\delta} - 2^{1-\delta}} + \frac{R_2}{(t+2)^{3\delta-2} - 2^{3\delta-2}},
    \end{equation} where $R_1 \triangleq (1-\delta)\left[\frac{\left(f(\bx^{(1)}) - f^\star + \log(\nicefrac{1}{\beta})\right)}{a} + \frac{aLC^2(\nicefrac{1}{2} + 8LD_{\mathcal{X}})}{(2\delta-1)}\right]$ and $R_2 \triangleq \frac{8a^3C^4L^2}{(1-\delta)(3-4\delta)}$.

    \item Next, consider $\alpha_t = \frac{a}{(t + 1)^\delta}$, for $\delta = \nicefrac{3}{4}$. Using the lower Darboux sum, we have
    \begin{equation*}
        \sum_{k = 1}^{t}(k+1)^{2-4\delta} = \sum_{k = 1}^{t}\frac{1}{(k+1)} \leq \int_{1}^{t+1}\frac{1}{k}dk \leq \log(t+1).
    \end{equation*} Combining with \eqref{eq:semi-final}, we get
    \begin{equation}\label{eq:bound-st2}
        \sum_{k = 1}^{t}\widetilde{\alpha}_kZ_k \leq \frac{R_1  + R_3\log(t+1)}{(t+2)^{\nicefrac{1}{4}} - 2^{\nicefrac{1}{4}}},
    \end{equation} where $R_3 \triangleq 32a^3C^4L^2$.

    \item Finally, for $\alpha_t = \frac{a}{(t + 1)^\delta}$, where $\delta \in (\nicefrac{3}{4},1)$, we have
    \begin{equation*}
        \sum_{k = 1}^{t}(k+1)^{2-4\delta} \leq \int_{1}^{t+1}k^{2-4\delta}dk \leq \frac{1}{4\delta-3},
    \end{equation*} therefore, combining with \eqref{eq:semi-final}, we get
    \begin{equation}\label{eq:bound-st3}
        \sum_{k = 1}^{t}\widetilde{\alpha}_kZ_k \leq \frac{R_1 + R_4}{(t+2)^{1-\delta} - 2^{1-\delta}},
    \end{equation} where $R_4 \triangleq \frac{8a^3C^4L^2}{(1-\delta)(4\delta-3)}$.
    \end{enumerate} To obtain a bound on the quantity of interest $\min_{k \in [t]}\|\nabla f(\bxk)\|^2$, we proceed as follows. Notice that the bounds in \eqref{eq:bound-st1}-\eqref{eq:bound-st3} can be represented in a unified manner as
    \begin{equation}\label{eq:bound}
        \sum_{k = 1}^t\widetilde{\alpha}_kZ_k \leq Mt^{-\kappa},
    \end{equation} for appropriately selected constants $M,\kappa > 0$.\footnote{Note that for $\delta = \nicefrac{3}{4}$ we might have an additional factor of $\log(t)$ in the right-hand side of \eqref{eq:bound}. However, this can be easily incorporated, by allowing $M$ to depend on $t$, e.g., by defining $M_t = M\log(t)$.} Next, define $U \triangleq \{k \in [t]: \: \|\nabla f(\bxk)\| \leq \eta_1/\eta_2 \}$, with $U^c \triangleq [t] \setminus U$. From~\eqref{eq:bound}, we then have 
    \begin{align*}
        \sum_{k \in U^c}\widetilde{\alpha}_k\|\nabla f(\bxk)\| \leq M_1t^{-\kappa} \text{ and } \sum_{k \in U}\widetilde{\alpha}_k\|\nabla f(\bxk)\|^2 \leq M_2t^{-\kappa},
    \end{align*} where $M_1 = M/\eta_1$, $M_2 = M/\eta_2$. It then readily follows that
    \begin{align*}
        \min_{k \in [t]}\|\nabla f(\bxk)\| \leq \sum_{k \in U}\widetilde{\alpha}_k\|\nabla f(\bxk)\| + \sum_{k \in U^c}\widetilde{\alpha}_k\|\nabla f(\bxk)\| \leq \sum_{k = 1}^{t}\widetilde{\alpha}_kz_k + M_1t^{-\kappa},
    \end{align*} where $z_k = \|\nabla f(\bxk)\|$, for $k \in U$, otherwise $z_k = 0$. Using Jensen's inequality, we get
    \begin{align*}
        \min_{k \in [t]}\|\nabla f(\bxk)\| \leq \sqrt{\sum_{k = 1}^{t}\widetilde{\alpha}_kz_k^2} + M_1t^{-\kappa} = \sqrt{\sum_{k \in U}\widetilde{\alpha}_k\|\nabla f(\bxk)\|^2} + M_1t^{-\kappa} \leq \sqrt{M_2t^{-\kappa}} + M_1t^{-\kappa}.
    \end{align*} Squaring both sides and using $(a+b)^2 \leq 2a^2 + 2b^2$, gives the desired result. 
\end{proof}

We next prove Corollary \ref{cor:cvx}.

\begin{proof}[Proof of Corollary \ref{cor:cvx}]
    Recall the definition of the Huber loss function $H_{\lambda}: \R \mapsto [0,\infty)$, parametrized by $\lambda > 0$, e.g., \citet{huber_loss}, given by 
    \begin{equation*}
        H_{\lambda}(x) \triangleq \begin{cases}
            \frac{1}{2}x^2, & |x| \leq \lambda, \\
            \lambda|x| - \frac{\lambda^2}{2}, & |x| > \lambda.
        \end{cases}
    \end{equation*} By the definition of Huber loss, it is not hard to see that it is a convex, non-decreasing function on $[0,\infty)$. Moreover, by the definition of Huber loss, we have, for any $k \geq 1$
    \begin{equation}\label{eq:7}
    \begin{aligned}
        Z_k = \min\{\eta_1\|\nabla f(\bxk)\|,\eta_2\|\nabla f(\bxk)\|^2 \} \geq \eta_2H_{\eta_1/\eta_2}(\|\nabla f(\bxk)\|).
    \end{aligned}
    \end{equation} Next, recall that Assumption~\ref{asmpt:cvx} implies the \emph{gradient domination property}, i.e., $\|\nabla f(\bx)\|^2 \geq 2\mu(f(\bx) - f^\star)$, for any $\bx \in \R^d$, see, e.g., \citet{nesterov-lectures_on_cvxopt}. Combined with the definition of strong convexity, we have $\|\nabla f(\bx) \| \geq \mu\|\bx - \bx^\star\|$, for any $\bx \in \R^d$. Combining \eqref{eq:7} with the gradient domination property, we get 
    \begin{equation*}
        \sum_{k = 1}^{t}\widetilde{\alpha}_k Z_k \geq \eta_2\sum_{k = 1}^{t}\widetilde{\alpha}_kH_{\eta_1/\eta_2}(\mu\|\bxk - \bx^\star\|) \geq \mu^2\eta_2H_{\eta_1/(\eta_2\mu)}(\|\widehat{\bx}^{(t)} - \bx^\star\|),
    \end{equation*} where $\widehat{\bx}^{(t)} \triangleq \sum_{k = 1}^{t}\widetilde{\alpha}_k\bxk$ is the weighted average of the first $t$ iterates, the first inequality follows from~\eqref{eq:7}, the gradient domination property and the fact that $H$ is non-decreasing, while the second inequality follows from the fact that $H$ is convex and non-decreasing, applying Jensen's inequality twice and noticing that $H_{\lambda}(\mu x) = \mu^2H_{\lambda/\mu}(x)$. Using \eqref{eq:bound}, it readily follows that 
    \begin{equation}\label{eq:bound2}
        H_{\eta_1/(\eta_2\mu)}(\|\widehat{\bx}^{(t)} - \bx^\star\|) \leq \frac{M}{\eta_2^2\mu^2t^\kappa},
    \end{equation} where $M,\kappa$ depend on the step-size schedule and other problem parameters. By the definition of Huber loss and~\eqref{eq:bound2}, if $\|\widehat{\bx}^{(t)} - \bx^\star\| \leq \nicefrac{\eta_1}{\eta_2\mu}$, we have
    \begin{equation}\label{eq:8}
        \|\widehat{\bx}^{(t)} - \bx^\star\|^2 \leq \frac{2M}{\eta_2^2\mu^2t^{\kappa}}.
    \end{equation} Otherwise, if $\|\widehat{\bx}^{(t)} - \bx^\star\| > \nicefrac{\eta_1}{\eta_2\mu}$, by~\eqref{eq:bound2}, we have
    \begin{equation*}
        \frac{\eta_1\|\widehat{\bx}^{(t)} - \bx^\star\|}{2\eta_2\mu} < \frac{\eta_1\|\widehat{\bx}^{(t)} - \bx^\star\|}{\eta_2\mu} - \frac{\eta_1^2}{2\eta_2^2\mu^2} = H_{\eta_1/(\eta_2\mu)}(\|\widehat{\bx}^{(t)} - \bx^\star\|) \leq \frac{M}{\eta_2^2\mu^2t^\kappa},
    \end{equation*} implying that 
    \begin{equation}\label{eq:9}
        \|\widehat{\bx}^{(t)} - \bx^\star\|^2 \leq \frac{4M^2}{\eta_1^2\eta_2^2\mu^2t^{2\kappa}}.
    \end{equation} Combining~\eqref{eq:8} and~\eqref{eq:9}, it then follows that
    \begin{equation*}
        \|\widehat{\bx}^{(t)} - \bx^\star\|^2 \leq \max\left\{\frac{2M}{\eta_2^2\mu^2t^\kappa}, \frac{4M^2}{\eta_1^2\eta_2^2\mu^2t^{2\kappa}} \right\},
    \end{equation*} completing the proof.
\end{proof}

\subsection{Proof of Theorem~\ref{theorem:main}}\label{sec:proof-thm2}

In this section we prove Lemma~\ref{lm:key} and Theorem~\ref{theorem:main}. In order to prove Lemma~\ref{lm:key}, we first state and prove some intermediate results.

\begin{lemma}\label{lm:gradient-bound}
    Let Assumptions~\ref{asmpt:nonlin}-\ref{asmpt:noise} hold, with the step-size given by $\alpha_t = \frac{a}{(t+1)^\delta}$, for any $\delta \in (0.5,1)$ and $a > 0$. Then, for any $t \geq 1$, we have
    \begin{equation*}
        \|\nabla f(\bxt)\| \leq H_t \triangleq L\left(\|\bx^{(1)} - \bx^\star\| + aC\right)\frac{(t+1)^{1-\delta}}{1 - \delta}.
    \end{equation*}
\end{lemma}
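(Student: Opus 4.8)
The plan is to unroll the iteration and bound the accumulated step-sizes; this is essentially a cleaner, deterministic restatement of the bound~\eqref{eq:grad-norm} already used inside the proof of Theorem~\ref{thm:non-conv}. First I would note that, since $\nabla f(\bx^\star) = 0$ by Assumption~\ref{asmpt:L-smooth}, the $L$-smoothness of $f$ gives $\|\nabla f(\bxt)\| = \|\nabla f(\bxt) - \nabla f(\bx^\star)\| \leq L\|\bxt - \bx^\star\|$, so it suffices to control $\|\bxt - \bx^\star\|$.

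Next, I would iterate the update rule $\bx^{(s+1)} = \bx^{(s)} - \alpha_s\bPsi(\nabla \ell(\bx^{(s)};\upsilon^{(s)}))$ together with the uniform bound $\|\bPsi(\cdot)\| \leq C$ from Assumption~\ref{asmpt:nonlin} (fourth property), and apply the triangle inequality repeatedly to obtain $\|\bxt - \bx^\star\| \leq \|\bx^{(1)} - \bx^\star\| + C\sum_{s=1}^{t-1}\alpha_s$.

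The only quantitative step is bounding the partial sum $\sum_{s=1}^{t-1}\alpha_s$. Since $s \mapsto (s+1)^{-\delta}$ is decreasing, an upper Darboux (integral) comparison gives $\sum_{s=1}^{t-1}\alpha_s \leq \sum_{s=1}^{t}\tfrac{a}{(s+1)^\delta} \leq a\int_0^t \tfrac{du}{(u+1)^\delta} = \tfrac{a}{1-\delta}\big((t+1)^{1-\delta} - 1\big) \leq \tfrac{a(t+1)^{1-\delta}}{1-\delta}$. Combining the three displays yields $\|\nabla f(\bxt)\| \leq L\big(\|\bx^{(1)} - \bx^\star\| + \tfrac{aC(t+1)^{1-\delta}}{1-\delta}\big)$. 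Finally, since $\delta \in (\nicefrac{1}{2},1)$ forces $\tfrac{(t+1)^{1-\delta}}{1-\delta} \geq \tfrac{1}{1-\delta} \geq 1$ for every $t \geq 1$, I can enlarge the first term to $\|\bx^{(1)} - \bx^\star\|\tfrac{(t+1)^{1-\delta}}{1-\delta}$, which gives exactly $H_t$.

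I do not expect any genuine obstacle: the statement is a purely deterministic consequence of the boundedness of the nonlinear map and $L$-smoothness, and holds regardless of the noise. The only point requiring mild care is matching the final estimate to the stated closed form $H_t$, i.e., absorbing the constant term $\|\bx^{(1)} - \bx^\star\|$ into the growing prefactor, which relies on $\tfrac{(t+1)^{1-\delta}}{1-\delta}\geq 1$.
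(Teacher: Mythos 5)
Your proposal is correct and follows essentially the same route as the paper's proof: bound $\|\nabla f(\bxt)\|$ by $L\|\bxt-\bx^\star\|$ via smoothness and $\nabla f(\bx^\star)=0$, unroll the update with the triangle inequality and the uniform bound $\|\bPsi(\cdot)\|\leq C$, control $\sum_k\alpha_k$ by an integral comparison, and absorb the constant term using $\tfrac{(t+1)^{1-\delta}}{1-\delta}\geq 1$. No gaps.
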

\begin{proof} Using $L$-smoothness of $f$ and the update~\eqref{eq:nonlin-sgd}, we have
    \begin{align}
        \|\nabla f(\bxt)\| &\leq L\|\bxt - \bx^\star\| = L\|\bx^{(t-1)} - \alpha_{t-1}\bPsi^{(t-1)} - \bx^\star\| \nn \\ 
        &\leq L\left(\|\bx^{(t-1)} - \bx^\star\| + \alpha_{t-1}\|\boldsymbol{\Psi}^{(t-1)}\| \right) \nn \\ 
        &\leq L\left(\|\bx^{(t-1)} - \bx^\star\| + \alpha_{t-1}C \right). \label{eq_proof:thm:nonL_cw_1}
    \end{align}
    Unrolling the recursion in \eqref{eq_proof:thm:nonL_cw_1}, we get
    \begin{align*}
        \|\nabla f(\bxt)\| \leq L \|\bx^{(1)} - \bx^\star\| + LC\sum_{k = 1}^t\alpha_{k} \leq L\left(\|\bx^{(1)} - \bx^\star\| + aC\right)\frac{(t+1)^{1-\delta}}{1 - \delta}, 
    \end{align*} completing the proof.
\end{proof}

The next result characterizes the behaviour of the nonlinearity, when it takes the form $\bPsi(\bx) = \lbr \calN_1(x_1), \dots, \calN_1(x_d) \rbr^\top$. It follows a similar idea to Lemma~5.5 from~\citet{jakovetic2023nonlinear}, with the main difference due to allowing for potentially different marginal PDFs of each noise component. Since the proof follows the same steps, we omit it for brevity.

\begin{lemma}
\label{lemma:1.1}
    Let Assumptions~\ref{asmpt:nonlin}-\ref{asmpt:noise} hold and the nonlinearity $\bPsi$ be component-wise, i.e., of the form $\bPsi(\bx) = \lbr \calN_1(x_1), \dots, \calN_1(x_d) \rbr^\top$. Then, there exists a positive constant $\xi$ such that, for any $t \geq 1$, there holds almost surely for each $j = 1,\ldots,d$, that $|\phi^{(t)}_i| \geq |[\nabla f(\bxt)]_i|\frac{\phi_i^\prime(0)\xi}{2H_t}$, where $H_t$ is defined in Lemma~\ref{lm:gradient-bound}, while $\phi_i^\prime(0) = \frac{\partial}{\partial x_i}\E_{z_i}\calN_1(x_i + z_i)\:\big\vert_{x_i = 0}$.
\end{lemma}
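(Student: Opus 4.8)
The plan is to follow the component-wise part of the proof of Lemma~\ref{lm:huber-app}, but to keep track of each coordinate separately and to use the deterministic gradient bound $H_t$ from Lemma~\ref{lm:gradient-bound} to convert the ``$\min\{\cdot,\cdot\}$'' into a clean linear lower bound. First I would invoke Lemma~\ref{lm:polyak-tsypkin}: since $\bPsi$ is component-wise, $\bPhi$ is component-wise too, so $\phi^{(t)}_i = \phi_i\big([\nabla f(\bxt)]_i\big)$, where $\phi_i(x) = \E_{z_i}[\calN_1(x+z_i)]$ (with $\E_{z_i}$ the possibly coordinate-dependent marginal expectation) is odd and non-decreasing, with $\phi_i(0) = 0$ and $\phi_i^\prime(0) > 0$. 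Differentiability at zero lets me write $\phi_i(x) = \phi_i^\prime(0)x + h_i(x)x$ with $h_i(x) \to 0$ as $x \to 0$; since there are only finitely many coordinates, I can choose a single $\xi > 0$ such that $|h_i(x)| \leq \phi_i^\prime(0)/2$ for all $i \in [d]$ whenever $|x| \leq \xi$, and shrink $\xi$ further so that $\xi \leq H_1$ (possible since $H_1 > 0$ is a fixed constant; this is the only place the precise size of $\xi$ is used).

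Next I would establish the scalar inequality $|\phi_i(x)| \geq \tfrac{\phi_i^\prime(0)}{2}\min\{|x|,\xi\}$ for all $x \in \R$: for $0 \leq x \leq \xi$ the bound on $h_i$ gives $\phi_i(x) \geq \phi_i^\prime(0)x/2$, for $x > \xi$ monotonicity gives $\phi_i(x) \geq \phi_i(\xi) \geq \phi_i^\prime(0)\xi/2$, and the case $x < 0$ follows by oddity. Finally, I would apply this with $x = [\nabla f(\bxt)]_i$, noting that $|[\nabla f(\bxt)]_i| \leq \|\nabla f(\bxt)\| \leq H_t$ almost surely by Lemma~\ref{lm:gradient-bound}, and that $H_t \geq H_1 \geq \xi$ by the choice of $\xi$ and because $(t+1)^{1-\delta}$ is non-decreasing in $t$. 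Then $\min\{|[\nabla f(\bxt)]_i|,\xi\} \geq \tfrac{\xi}{H_t}\,|[\nabla f(\bxt)]_i|$ in both cases ($|[\nabla f(\bxt)]_i| \leq \xi$ uses $\xi \leq H_t$; $|[\nabla f(\bxt)]_i| > \xi$ uses $|[\nabla f(\bxt)]_i| \leq H_t$), which yields $|\phi^{(t)}_i| \geq |[\nabla f(\bxt)]_i|\,\tfrac{\phi_i^\prime(0)\xi}{2H_t}$, as claimed; the ``almost surely'' is automatic, since the bound holds pointwise for every realization of $\bxt$.

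I expect the only delicate point --- the ``main obstacle'' --- to be the bookkeeping around the choice of $\xi$: it must be simultaneously small enough, uniformly over the finitely many coordinates, for the Taylor remainder bound to hold, and small enough relative to $H_1$ so that the $\min$ collapses to the linear term on the entire admissible range $|[\nabla f(\bxt)]_i| \leq H_t$. Everything else is a coordinate-by-coordinate repetition of the arguments already used for the component-wise case of Lemma~\ref{lm:huber-app}, combined with the gradient bound of Lemma~\ref{lm:gradient-bound}.
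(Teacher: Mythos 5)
Your proposal is correct and is essentially the argument the paper intends: the paper omits the proof, deferring to Lemma~5.5 of \citet{jakovetic2023nonlinear}, and the component-wise portion of the proof of Lemma~\ref{lm:huber-app} already carries out the same Taylor-expansion-plus-monotonicity step that yields $|\phi_i(x)| \geq \tfrac{\phi_i^\prime(0)}{2}\min\{|x|,\xi\}$, which you then correctly convert to the stated linear bound via $|[\nabla f(\bxt)]_i| \leq H_t$ from Lemma~\ref{lm:gradient-bound}. Your observation that $\xi$ must also satisfy $\xi \leq H_1 \leq H_t$ for the case $|[\nabla f(\bxt)]_i| \leq \xi$ is the right piece of bookkeeping, and it is harmless since the lemma only asserts existence of some positive $\xi$.
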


The next result characterizes the behaviour of the nonlinearity, when it takes the form $\bPsi(\bx) = \bx\calN_2(\|\bx\|)$.

\begin{lemma}\label{lm:10}
    Let Assumptions~\ref{asmpt:nonlin}-\ref{asmpt:noise} hold and the nonlinearity be of the form $\bPsi(\bx) = \bx\calN_2(\|\bx\|)$. Then, for any $t \geq 1$, there holds almost surely that  
    \begin{equation*}
        \langle\nabla f(\bxt), \boldsymbol{\Phi}^{(t)}\rangle \geq \frac{\|\nabla f(\bxt)\|^2p_0\calN_2(1)}{H_t + C_0}, 
    \end{equation*} where $p_0 = P(\mathbf{0})$, $C_0 = \min\{0.5,B_0\}$ and $H_t$ is defined in Lemma~\ref{lm:gradient-bound}.
\end{lemma}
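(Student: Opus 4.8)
The plan is to adapt, almost verbatim, the joint-nonlinearity portion of the proof of Lemma~\ref{lm:huber-app}, with the single new input being the deterministic gradient bound of Lemma~\ref{lm:gradient-bound}, which lets us upgrade the $\min\{\eta_1\|\bx\|,\eta_2\|\bx\|^2\}$-type conclusion there to the clean quadratic lower bound with denominator $H_t+C_0$ needed here. First I would reduce to a deterministic estimate: since $\bPsi(\bx)=\bx\calN_2(\|\bx\|)$ we have $\bPhit=\bPhi(\nabla f(\bxt))$ with $\bPhi(\bx)=\mathbb{E}_{\bz}[\bPsi(\bx+\bz)]$, and $\nabla f(\bxt)$ is $\mathcal{F}_t$-measurable, so it suffices to prove $\langle\bPhi(\bx),\bx\rangle\ge p_0\calN_2(1)\|\bx\|^2/(H_t+C_0)$ for the fixed (but arbitrary) vector $\bx=\nabla f(\bxt)$ satisfying $\|\bx\|\le H_t$; the case $\bx=\mathbf{0}$ is trivial. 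Conditioning then returns the almost-sure statement.

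Next I would replay the symmetrization argument from the proof of Lemma~\ref{lm:huber-app}: splitting the defining integral over $\{\langle\bz,\bx\rangle\ge 0\}$ and $\{\langle\bz,\bx\rangle<0\}$ and using symmetry of $P$ gives $\langle\bPhi(\bx),\bx\rangle=\int_{J_1(\bx)}M_2(\bx,\bz)P(\bz)\,d\bz$, where $J_1(\bx)=\{\bz:\langle\bz,\bx\rangle\ge 0\}$ and $M_2$ is the symmetrized integrand defined there. On the sub-cone $J_2(\bx)=\{\bz:\langle\bz,\bx\rangle/(\|\bz\|\|\bx\|)\in[0,0.5]\}\cup\{\mathbf{0}\}$ one shows, exactly as before, that $M_2(\bx,\bz)\ge 0.5\|\bx\|^2[\calN_2(\|\bx-\bz\|)+\calN_2(\|\bx+\bz\|)]\ge\|\bx\|^2\calN_2(\|\bx\|+\|\bz\|)$, treating the ranges $\|\bz\|>\|\bx\|$ (via Lemma~\ref{lm:jakovetic-joint}), $0<\|\bz\|\le\|\bx\|$, and $\bz=\mathbf{0}$ separately, and that $M_2\ge 0$ on all of $J_1(\bx)$. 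Restricting the integral to $J_3(\bx)\triangleq J_2(\bx)\cap\{\|\bz\|\le C_0\}\subseteq J_1(\bx)$ with $C_0=\min\{0.5,B_0\}$ then yields $\langle\bPhi(\bx),\bx\rangle\ge\|\bx\|^2\int_{J_3(\bx)}\calN_2(\|\bx\|+\|\bz\|)P(\bz)\,d\bz$.

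The new ingredient enters now: for $\bz\in J_3(\bx)$ we have $\|\bx\|+\|\bz\|\le H_t+C_0$ by Lemma~\ref{lm:gradient-bound}, so since $\calN_2$ is non-increasing and $a\mapsto a\calN_2(a)$ is non-decreasing (Assumption~\ref{asmpt:nonlin}), $\calN_2(\|\bx\|+\|\bz\|)\ge\calN_2(H_t+C_0)\ge\calN_2(1)/(H_t+C_0)$, the last step using $H_t+C_0\ge 1$, which holds since $(t+1)^{1-\delta}/(1-\delta)\ge 2$ for $\delta\in(\nicefrac12,1)$ (or, if need be, after trivially replacing $H_t$ by $\max\{1,H_t\}$ to remain in the regime $a\ge1$). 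Pulling this factor out and lower-bounding $\int_{J_3(\bx)}P(\bz)\,d\bz\ge p_0$ by positivity of $P$ around zero exactly as in the proof of Lemma~\ref{lm:huber-app} gives $\langle\bPhi(\bx),\bx\rangle\ge p_0\calN_2(1)\|\bx\|^2/(H_t+C_0)$, as claimed.

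The main obstacle is not conceptual but a matter of bookkeeping in this last step: one must make sure the two monotonicity properties of $\calN_2$ produce exactly the advertised denominator $H_t+C_0$ (and not an additive constant more), and that the ``positive around zero'' assumption yields a lower bound on the mass $\int_{J_3(\bx)}P$ that is uniform over the rotation of the half-cone $J_3(\bx)$ induced by the direction of $\bx$. Both are handled by the same reasoning already used for Lemma~\ref{lm:huber-app} (and parallel the component-wise estimate of Lemma~\ref{lemma:1.1}), so the result should follow with essentially no machinery beyond Lemmas~\ref{lm:huber-app}, \ref{lm:jakovetic-joint} and \ref{lm:gradient-bound}.
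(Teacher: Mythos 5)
Your proposal is correct and follows essentially the same route as the paper: the paper's proof simply invokes the intermediate bound $\langle \bPhi(\bx),\bx\rangle \geq \|\bx\|^2 p_0\calN_2(1)\min\{(\|\bx\|+C_0)^{-1},1\}$ already established in the proof of Lemma~\ref{lm:huber-app} and then combines it with $\|\nabla f(\bxt)\|\leq H_t$ and $H_t\geq 1$, exactly the two ingredients you identify. Your re-derivation of the symmetrization/$J_3(\bx)$ estimate and your direct use of the monotonicity of $a\mapsto a\calN_2(a)$ at $a=H_t+C_0$ are just an unrolled version of the same argument (including the same implicit $H_t\geq 1$ normalization, which you even flag).
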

\begin{proof}
    We start from \eqref{eq:intermed}, which tells us that, for any $t \geq 1$, almost surely 
    \begin{equation*}
        \langle \bPhit, \nabla f(\bxt) \rangle \geq \|\nabla f(\bxt)\|^2p_0\calN_2(1)\min\left\{\frac{1}{\|\nabla f(\bxt)\| + C_0},1 \right\}.
    \end{equation*} Combining with Lemma \ref{lm:gradient-bound} and the fact that $H_t \geq 1$, we get almost surely
    \begin{equation*}
        \langle \bPhit, \nabla f(\bxt) \rangle \geq \frac{\|\nabla f(\bxt)\|^2p_0\calN_2(1)}{H_t + C_0},
    \end{equation*} which completes the proof.
\end{proof}

We are now ready to prove Lemma~\ref{lm:key}.

\begin{proof}[Proof of Lemma~\ref{lm:key}]
    First, consider the case when the nonlinearity is of the form $\bPsi(\bx) = \lbr \calN_1(x_1), \dots, \calN_1(x_d) \rbr^\top$. We then have 
    \begin{align*}
        \langle \bPhit, \nabla f(\bxt)\rangle &= \sum_{i = 1}^d \phi^{(t)}_i [\nabla f(\bxt)]_i \stackrel{(a)}{=} \sum_{i = 1}^d |\phi^{(t)}_i| |[\nabla f(\bxt)]_i| \\ & \stackrel{(b)}{\geq} \sum_{i = 1}^d |[\nabla f(\bxt)]_i|^2\frac{\phi_i^\prime(0)\xi}{2H_t} \stackrel{(c)}{\geq} \frac{\phi^\prime(0)\xi}{2H_t}\|\nabla f(\bxt)\|^2 = \gamma(t + 1)^{\delta - 1}\|\nabla f(\bxt)\|^2,
    \end{align*} where $\gamma = \frac{(1 - \delta)\phi^\prime(0)\xi}{2L\left(\|\bx^{(1)} - \bx^\star\| + aC\right)}$, $(a)$ follows from the oddity of $\calN_1$, $(b)$ follows from Lemma~\ref{lemma:1.1}, $(c)$ follows from $\phi^\prime(0) = \min_{i =1,\ldots,d}\phi_i^\prime(0)$. On the other hand, if the nonlinearity is of the form $\bPsi(\bx) = \bx\calN_2(\|\bx\|)$, we get
    \begin{align*}
        \langle \bPhit, \nabla f(\bxt)\rangle \geq \frac{p_0\calN_2(1)\|\nabla f(\bxt)\|^2}{H_t+ C_0} \geq \gamma(t + 1)^{\delta - 1}\|\nabla f(\bxt)\|^2,
    \end{align*} where $\gamma = \frac{(1-\delta)p_0\calN_2(1)}{L\left(\|\bx^{(1)} - \bx^\star\| + aC\right) + C_0}$, the first inequality follows from Lemma~\ref{lm:10}, while the second follows from the definition of $H_t$ and the fact that $H_t + C_0 \leq (L\left(\|\bx^{(1)} - \bx^\star\| + aC\right) + C_0)\frac{(t+1)^{1 - \delta}}{1 - \delta}$. This completes the proof.
\end{proof}

We next prove Theorem~\ref{theorem:main}.

\begin{proof}[Proof of Theorem~\ref{theorem:main}]
     Using $L$-smoothness of $f$, the update rule~\eqref{eq:nonlin-sgd2} and Lemma~\ref{lm:key}, we have
    \begin{align*}
        f (\bxtp) &\leq f(\bxt) - \alpha_t\langle \nabla f(\bxt), \bPhi^{(t)} - \bet \rangle + \mfrac{\alpha_t^2L}{2}\|\boldsymbol{\Psi}^{(t)}\|^2 \nn 
        \\ &\leq f(\bxt) - \mfrac{a\gamma\|\nabla f(\bxt)\|^2}{(t + 1)} + \mfrac{a\langle \nabla f(\bxt),\bet \rangle}{(t + 1)^\delta} + \mfrac{a^2LC^2}{2(t + 1)^{2\delta}}.
    \end{align*} Subtracting $f^\star$ from both sides of the inequality, defining $F^{(t)} = f(\bxt) - f^\star$ and using $\mu$-strong convexity of $f$, we get 
    \begin{equation}
        F^{(t+1)} \leq \left(1 - \mfrac{2\mu a\gamma}{t+1} \right)F^{(t)} + \mfrac{a\langle \nabla f(\bxt),\bet \rangle}{(t + 1)^\delta} + \mfrac{a^2LC^2}{2(t + 1)^{2\delta}}. \label{eq_proof:thm:nonL_cw_4}
    \end{equation}
    Let $\zeta = \min\left\{2\delta - 1, \nicefrac{a\gamma\mu}{2} \right\}$. Defining $Y^{(t)} \triangleq t^\zeta F^{(t)} = t^\zeta(f(\bxt) - f^\star)$, from \eqref{eq_proof:thm:nonL_cw_4} we get
    \begin{equation}
        Y^{(t + 1)} \leq a_tY^{(t)} + b_t\langle \nabla f(\bxt),\bet \rangle  + c_tV, \label{eq_proof:thm:nonL_cw_5}
    \end{equation} 
    where $a_t = \left(1 - \frac{2\mu a\gamma}{t+1} \right)\left(\frac{t + 1}{t}\right)^\zeta$, $b_t = \frac{a}{(t + 1)^{\delta - \zeta}}$, $c_t = \frac{a^2}{(t + 1)^{2\delta - \zeta}}$ and $V = \frac{LC^2}{2}$. Denote the MGF of $Y^{(t)}$ conditioned on $\mathcal{F}_t$ as $M_{t+1\vert t}(\nu) = \E\left[\exp\left(\nu Y^{(t+1)}\right)\vert \mathcal{F}_t\right]$. We then have, for any $\nu \geq 0$
    \begin{align}
        M_{t+1\vert t}(\nu) \nn &\overset{(a)}{\leq} \E\left[\exp\left( \nu (a_tY^{(t)} + b_t\langle \bet,\nabla f(\bxt) \rangle  + c_tV ) \right) \big| \: \mathcal{F}_t \right] \nn \\ 
        &\overset{(b)}{\leq} \exp (\nu a_tY^{(t)} + \nu c_tV ) \mathbb{E}\left[\exp (\nu b_t\langle \bet, \nabla f(\bxt)\rangle ) \big| \: \mathcal{F}_t \right] \nn \\ 
        & \overset{(c)}{\leq} \exp\left(\nu a_tY^{(t)} + \nu c_tV + \nu^2b^2_tN\|\nabla f(\bxt)\|^2\right) \nn \\
        & \overset{(d)}{\leq} \exp\left(\nu a_tY^{(t)} + \nu c_tV + 2\nu^2b_t^{\prime 2}LNY^{(t)}\right), \label{eq_proof:thm:nonL_cw_6}
    \end{align} 
    where $(a)$ follows from \eqref{eq_proof:thm:nonL_cw_5}, $(b)$ follows from the fact that $Y^{(t)}$ is $\mathcal{F}_t$ measurable, $(c)$ follows from Lemma~\ref{lm:error_component}, in $(d)$ we use $\|\nabla f(\bx)\|^2 \leq 2L(f(\bx) - f^\star)$ and define $b^\prime_t = a\frac{t^\frac{-\zeta}{2}}{(t+1)^{\delta - \zeta}}$, so that $b_t = t^\frac{\zeta}{2}b^\prime_t$. For the choice $0 \leq \nu \leq B$, for some $B > 0$ (to be specified later), we get
    \begin{equation*}
        M_{t+1\vert t}(\nu) \leq \exp\left(\nu(a_t + 2b_t^{\prime 2}LNB)Y^{(t)}\right)\exp\left(\nu c_tV\right).
    \end{equation*} 
    Taking the full expectation, we get
    \begin{equation}
    \label{eq:induction_step}
        M_{t+1}(\nu) \leq M_t((a_t + 2b_t^{\prime2}LNB)\nu)\exp(\nu c_tV).
    \end{equation} Similarly to the approach in~\citet{harvey2019tight}, we now want to show that $M_t(\nu) \leq e^{\frac{\nu}{B}}$, for any $0 \leq \nu \leq B$ and any $t \geq 1$. We proceed by induction. For $t=1$, we have
    \begin{equation*}
        M_1(\nu) = \exp(\nu Y^{(1)}) = \exp\left(\nu (f(\bx^{(1)}) - f^\star) \right),
    \end{equation*} 
    where we simply used the definition of $Y^{(t)}$ and the fact that it is deterministic for $t = 1$. Choosing $B \leq (f(\bx^{(1)}) - f^\star)^{-1}$ ensures that $M_1(\nu) \leq e^{\frac{\nu}{B}}$. Next, assume that for some $t \geq 2$ it holds that $M_t(\nu) \leq e^{\frac{\nu}{B}}$. We then have
    \begin{align*}
        M_{t+1}(\nu) \leq M_t((a_t + 2b_t^{\prime2}LNB)\nu)\exp(\nu c_tV) \leq \exp\left((a_t + 2b_t^{\prime2}LNB + c_tVB)\mfrac{\nu}{B} \right),
    \end{align*} 
    where we use~\eqref{eq:induction_step} in the first and the induction hypothesis in the second inequality. For our claim to hold, it suffices to show $a_t + 2b_t^{\prime2}LNB + c_tVB \leq 1$. Plugging in the values of $a_t$, $b_t^\prime$ and $c_t$, we have
    \begin{align*}
        a_t + 2b_t^{\prime2}LNB + c_tVB &= \left(1 - \mfrac{2\mu a\gamma}{t+1} \right)\left(\mfrac{t + 1}{t}\right)^{\zeta} + \mfrac{2a^2LNB}{(t + 1)^{2\delta - 2\zeta}t^{\zeta}} + \mfrac{a^2VB}{(t + 1)^{2\delta - \zeta}} \\ &\leq \left(\mfrac{t + 1}{t}\right)^{\zeta}\bigg(1 - \mfrac{2\mu a\gamma}{t+1} + \mfrac{2a^2LNB}{(t + 1)^{2\delta - \zeta}} +  \mfrac{a^2VBt^\zeta}{(t + 1)^{2\delta}}\bigg) \\ &\leq \left(\mfrac{t + 1}{t}\right)^{\zeta} \left(1 - \mfrac{2\mu a\gamma}{t+1} + \mfrac{2a^2LNB}{(t + 1)^{2\delta - \zeta}} +  \mfrac{a^2VB}{(t + 1)^{2\delta - \zeta}}\right).
    \end{align*} 
    Noticing that $2\delta - \zeta \geq 1$ and setting $B = \min\left\{\frac{1}{(f(\bx^{(1)}) - f^\star)}, \frac{\mu\gamma}{2aLN + aV} \right\}$, gives
    \begin{align*}
        a_t + 2b_t^{\prime 2}LNB + c_tVB \leq \left(\mfrac{t + 1}{t}\right)^{\zeta}\left(1 - \mfrac{\mu a\gamma}{t+1}\right) \leq \exp \lp \mfrac{\zeta}{t} - \mfrac{a\mu\gamma}{t + 1} \rp  \leq 1, 
    \end{align*} 
    where in the second inequality we use $1 + x \leq e^x$, while the third inequality follows from the choice of $\zeta$. Therefore, we have shown that $M_{t}(\nu) \leq e^\frac{\nu}{B}$, for any $t \geq 1$ and any $0 \leq \nu \leq B$. By Markov's inequality, it readily follows that
    \begin{align*}
        \mathbb{P}(f(\bxtp) - f^\star \geq \epsilon) = \mathbb{P}(Y_{t+1} \geq (t + 1)^\zeta\epsilon) \leq e^{-\nu(t + 1)^\zeta\epsilon}M_{t+1}(\nu) \leq e^{1-B(t + 1 )^\zeta\epsilon},
    \end{align*} where in the last inequality we set $\nu = B$. Finally, using strong convexity, we have 
    \begin{align*}
        \mathbb{P}(\|\bxtp - \bx^\star \|^2 \geq \epsilon) \leq \mathbb{P}\left(f(\bxtp) - f^\star \geq \mfrac{\mu}{2}\epsilon\right) \leq ee^{-B(t + 1)^\zeta\mfrac{\mu}{2}\epsilon},
    \end{align*} which implies that, for any $\beta \in (0,1)$, with probability at least $1 - \beta$,
    \begin{equation*}
        \|\bxtp - \bx^\star\|^2 \leq  \mfrac{2\log\left(\nicefrac{e}{\beta}\right)}{\mu B(t+1)^\zeta},
    \end{equation*} completing the proof. 
\end{proof}

\subsection{Proof of Theorem \ref{thm:non-sym}}\label{sec:proof-thm3}

\begin{proof}[Proof of Theorem \ref{thm:non-sym}]
    Consider the ``denoised'' nonlinearity $\bPhit \triangleq \E[\bPsi(\nabla f(\bxt) + \bzt)\: \vert \: \mathcal{F}_t]$. From Assumption \ref{asmpt:non-sym} and the linearity of expectation, it follows that $\bPhit$ can be expressed as 
    \begin{equation}\label{eq:phi-split}
        \bPhit = \lambda \bPhi_1^{(t)} + (1 - \lambda)\bPhi_2^{(t)},
    \end{equation} where $\bPhi_i^{(t)} = \E_{\bzt \sim P_i}[\bPsi(\nabla f(\bxt) + \bzt)\: \vert \: \mathcal{F}_t]$, $i \in [2]$ are the ``denoised'' nonlinearities with respect to each of the noise components. Defining the effective noise as $\bet = \bPhit - \bPsi^{(t)}$, it can be readily seen that Lemma \ref{lm:error_component} still applies. Similarly, it can be seen that Lemma \ref{lm:huber} holds for $\bPhi_1$, as this represents the effective search direction with respect to the symmetric noise component. Apply the smoothness inequality and the update rule~\eqref{eq:nonlin-sgd2}, to get
    \begin{align}
        f(\bxtp) &\leq f(\bxt) - \alpha_t\langle \nabla f(\bxt),\bPhit - \bet \rangle + \frac{\alpha_t^2L}{2}\|\bPsi^{(t)}\|^2 \nonumber \\ &\leq f(\bxt) - \alpha_t(1-\lambda)\langle \nabla f(\bxt),\bPhi_1^{(t)}\rangle - \alpha_t\lambda\langle \nabla f(\bxt),\bPhi_2^{(t)} \rangle + \alpha_t\langle \nabla f(\bxt),\bet\rangle + \frac{\alpha_t^2LC^2}{2} \nonumber \\& \leq f(\bxt) - \alpha_t(1-\lambda)Z_t - \alpha_t\lambda\langle \nabla f(\bxt),\bPhi_2^{(t)} \rangle + \alpha_t\langle \nabla f(\bxt),\bet\rangle + \frac{\alpha_t^2LC^2}{2}, \label{eq:smooth-setup}
    \end{align} where the first inequality follows from \eqref{eq:phi-split} and the boundedness of the nonlinearity, while the second inequality follows from Lemma \ref{lm:error_component}, recalling that $Z_t \triangleq \min\{\eta_1\|\nabla f(\bxt) \|,\eta_2\|\nabla f(\bxt)\|^2\}$. To bound the inner product of the gradient and the non-symmetric component, we proceed as follows. For any $\bx \in \R^d$, we have
    \begin{equation}\label{eq:phi2}
        \langle \bx, \bPhi_2(\bx) \rangle \leq \|\bx\|\|\bPhi_2(\bx) \| \leq C\|\bx\| \leq \begin{cases}
            C\|\bx\|, & \|\bx\| \geq B \\
            CB, & \|\bx\| < B 
        \end{cases}, 
    \end{equation} where $B > 0$ is an arbitrary constant, to be specified later. Note that \eqref{eq:phi2} is equivalent to
    \begin{equation}\label{eq:phi2-pt2}
        \langle \bx, \bPhi_2(\bx) \rangle \leq C\max\{\|\bx\|,B\}.
    \end{equation} Plugging \eqref{eq:phi2-pt2} in \eqref{eq:smooth-setup}, we get
    \begin{align*}
        f(\bxtp) &\leq f(\bxt) - \alpha_t(1-\lambda)Z_t + \alpha_t\lambda C\max\{\|\nabla f(\bxt)\|,B\} + \alpha_t\langle \nabla f(\bxt),\bet\rangle + \frac{\alpha_t^2LC^2}{2}
    \end{align*} Setting $B = \eta_1/\eta_2$, it can be readily seen that
    \begin{equation*}
        (1-\lambda)Z_t - \lambda C\max\{\|\nabla f(\bxt)\|,\eta_1/\eta_2\} = \min\{(\eta_1(1-\lambda) - \lambda C)\|\nabla f(\bxt)\|, \eta_2(1-\lambda)\|\nabla f(\bxt)\|^2 - \lambda C\eta_1/\eta_2 \}.
    \end{equation*} From the condition $\lambda < \frac{\eta_1}{\eta_1 + C}$, it follows that $\eta_1(1-\lambda) - \lambda C > 0$. Next, define $\widetilde{Z}_t \triangleq \min\{(\eta_1(1-\lambda) - \lambda C)\|\nabla f(\bxt)\|, \eta_2(1-\lambda)\|\nabla f(\bxt)\|^2 - \lambda C\eta_1/\eta_2 \}$. Rearranging and summing up the first $t$ terms, we get
    \begin{equation*}
        \sum_{k = 1}^{t}\alpha_k\widetilde{Z}_k \leq f(\bx^{(1)}) - f^\star + \frac{LC^2}{2}\sum_{k = 1}^{t}\alpha_k^2 + \sum_{k = 1}^{t}\alpha_k\langle \nabla f(\bxk), \bek \rangle. 
    \end{equation*} Repeating the same steps as in the proof of Theorem \ref{thm:non-conv}, we get
    \begin{align}
        \sum_{k = 1}^{t}&\widetilde{\alpha}_k\widetilde{Z}_k \leq \frac{(1-\delta)\left(f(\bx^{(1)}) - f^\star + \log(\nicefrac{1}{\beta})\right)}{a((t+2)^{1-\delta}-2^{1-\delta})} \nonumber \\ &+ \frac{a(1-\delta)LC^2(\nicefrac{1}{2} + 8LD_{\mathcal{X}})}{(2\delta-1)((t+2)^{1-\delta} - 2^{1-\delta})} + \frac{8a^3C^4L^2\sum_{k = 1}^{t}(k+1)^{2-4\delta}}{(1-\delta)((t+2)^{1-\delta} - 2^{1 - \delta})}. \label{eq:semi-final-pt2}
    \end{align} Considering the different step-size schedules, we can similarly obtain a unified representation of the form
    \begin{equation}\label{eq:bound-pt2}
        \sum_{k = 1}^t\widetilde{\alpha}_k\widetilde{Z}_k \leq Mt^{-\kappa},
    \end{equation} for appropriately selected constants $M,\kappa > 0$. Using $U \triangleq \{k \in [t]: \: \|\nabla f(\bxk)\| \leq \eta_1/\eta_2 \}$, $U^c \triangleq [t] \setminus U$ and \eqref{eq:bound-pt2}, we get 
    \begin{align*}
        \eta_2(1-\lambda)\sum_{k \in U}\widetilde{\alpha}_k\|\nabla f(\bxk)\|^2 \leq Mt^{-\kappa} + \lambda C\eta_1/\eta_2 \text{ and } (\eta_1(1-\lambda) - \lambda C)\sum_{k \in U^c}\widetilde{\alpha}_k\|\nabla f(\bxk)\| \leq Mt^{-\kappa}.
    \end{align*} It then readily follows that
    \begin{align*}
        \min_{k \in [t]}\|\nabla f(\bxk)\| \leq \sum_{k \in U}\widetilde{\alpha}_k\|\nabla f(\bxk)\| + \sum_{k \in U^c}\widetilde{\alpha}_k\|\nabla f(\bxk)\| \leq \sum_{k = 0}^{t-1}\widetilde{\alpha}_kz_k + M_2t^{-\kappa},
    \end{align*} where $M_2 = M/(\eta_1(1-\lambda)-\lambda C)$, while $z_k = \|\nabla f(\bxk)\|$, for $k \in U$, and $z_k = 0$, for $k \in U^c$. Using Jensen's inequality, we get
    \begin{align*}
        \min_{k \in [t]}\|\nabla f(\bxk)\| \leq \sqrt{\sum_{k = 1}^{t}\widetilde{\alpha}_kz_k^2} + M_2t^{-\kappa} &= \sqrt{\sum_{k \in U}\widetilde{\alpha}_k\|\nabla f(\bxk)\|^2} + M_2t^{-\kappa} \\ &\leq \sqrt{M_1t^{-\kappa} + \frac{\lambda C \eta_1}{\eta_2^2(1-\lambda)}} + M_2t^{-\kappa},
    \end{align*} where $M_1 = \frac{M}{\eta_2(1-\lambda)}$. Squaring both sides and using $(a+b)^2 \leq 2a^2 + 2b^2$, gives the desired result.
\end{proof}

\section{Rate $\zeta$}\label{app:rate}

Recalling Assumption~\ref{asmpt:nonlin} and the definition of $C$, it readily follows that $\gamma(a) = \frac{(1 - \delta)\phi^\prime(0)\xi}{2L\left(\|\bx^{(1)} - \bx^\star\| + a\sqrt{d}C_1\right)}$ for nonlinearities of the form $\bPsi(\bx) = \lbr \calN_1(x_1), \dots, \calN_1(x_d) \rbr^\top$ (i.e., component-wise), while $\gamma(a) = \frac{(1-\delta)p_0\calN_2(1)}{L\left(\|\bx^{(1)} - \bx^\star\| + aC_2\right) + C_0}$, for nonlinearities of the form $\bPsi(\bx) = \bx\calN_2(\|\bx\|)$ (i.e., joint). Combined with Theorem~\ref{theorem:main}, it follows that the rate $\zeta$ is given by
\begin{align*}
    \zeta_{joint} &= \min\left\{2\delta - 1, \frac{a\mu(1-\delta)p_0\calN_2(1)}{2L\left(\|\bx^{(1)} - \bx^\star\| + aC_2 \right) + 2C_0} \right\}, \\ \zeta_{comp} &= \min\left\{2\delta - 1, \frac{a\mu\phi^\prime(0)\xi(1-\delta)}{4L\left(\|\bx^{(1)} - \bx^\star\| + aC_1\sqrt{d} \right)} \right\}.
\end{align*} We note that $\zeta$ depends on the following problem-specific parameters:
\begin{itemize}[leftmargin=*]
    \item \emph{Initialization} - starting 
    % $\bx^{(1)}$ that is 
    farther from the minima results in smaller $\zeta$ (i.e., larger $\|\bx^{(1)}-\bx^\star\|$). The effect of initialization can be eliminated by choosing sufficiently large $a$.
    \item \emph{Condition number} - larger values of $\frac{L}{\mu}$ (i.e., a more difficult problem) result in smaller $\zeta$.
    \item \emph{Nonlinearity} - the dependence of $\zeta$ on the nonlinearity comes in the form of two terms: the uniform bound on the nonlinearity $C_1$ or $C_2$, and the value $\phi^\prime(0)$ or $\calN_2(1)$.
    \item \emph{Problem dimension} - for component-wise nonlinearities through $\sqrt{d}$.
    \item \emph{Noise} - in the form of $\phi^\prime(0)$, $\xi$ for component-wise and $p_0$, $C_0 = \min\{0.5,B_0\}$ for joint ones.  
    \item \emph{Step-size} - both terms in the definition of $\zeta$ depend on the step-size parameter $\delta \in (0,1)$.
\end{itemize}

\section{Derivations for Examples \ref{example:5}-\ref{example:7}}\label{app:derivations}

Recall that the size of the neighborhood and condition on $\lambda$ in Theorem \ref{thm:non-sym} are given by $\frac{\eta_1 \lambda C }{\eta_2^2(1-\lambda)}$ and $\lambda < \frac{\eta_1}{C+\eta_1}$, where $C$ is the bound on the nonlinearity, while $\eta_1,\eta_2$ are the constants from Lemma \ref{lm:huber}. From the full statement of Lemma \ref{lm:huber} in the Supplement (i.e., Lemma \ref{lm:huber-app}), we know that $\eta_1 = \nicefrac{\phi^\prime(0)\xi}{2\sqrt{d}}$, $\eta_2 = \nicefrac{\phi^\prime(0)}{2d}$ for copmponent-wise and $\eta_1 = p_0\calN_2(1)/2$, $\eta_2 = p_0\calN_2(1)$ for joint nonlinearities. From the definition of PDF in Example \ref{example:1}, it follows that $p_0 = P(\mathbf{0}) = \left[\frac{\alpha - 1}{2}\right]^d$. We now consider specific nonlinearities. 

\begin{enumerate}
    \item For sign, we have $C = \sqrt{d}$ and it can be shown that $\phi^\prime(0) \approx \alpha - 1$, $\xi \approx \frac{1}{\alpha}$, see \cite{jakovetic2023nonlinear}.

    \item For component-wise clipping with parameter $m > 1$, we have $C = m\sqrt{d}$ and it can be shown that $\phi^\prime(0) \approx 1 - (m+1)^{-\alpha}$, $\xi \approx m-1$, see \cite{jakovetic2023nonlinear}.

    \item For joint clipping with parameter $M > 0$, we have $C = M$ and $\calN_1(1) = \min\{1,M\}$.
\end{enumerate}

Plugging in the said values completes the derivations.

\section{Analytical Example}\label{app:analytical}

In this section we specialize the rates from Theorem \ref{thm:non-conv} for specific choices of nonlinearity and noise, showing analytically that our theory predicts clipping is not always the optimal choice of nonlinearity and confirms the prior findings of \cite{zhang2020adaptive}, namely that for some noise instances, component-wise clipping shows better dimension dependence than joint clipping. 

To that end, we consider the noise with PDF from Example \ref{example:1}, for some $\alpha > 2$ and choice of step-size with $\delta = 3/4$. We consider component-wise and joint clipping, with thresholds $m > 1$ and $M > 0$, respectively. As shown in the derivations from the previous section, in this case, we have $C_{cc} = m\sqrt{d}$, $\eta_{1,cc} = \frac{[1-(m+1)^{-\alpha}](m-1)}{2\sqrt{d}}$, $\eta_{2,cc} = \frac{1-(m+1)^{-\alpha}}{2d}$ for component-wise and $C_{jc} = M$, $\eta_{1,jc} = \left[\frac{\alpha - 1}{2}\right]^d\min\{1/2,M/2\}$, $\eta_{2,jc} = \left[\frac{\alpha - 1}{2}\right]^d\min\{1,M\}$ for joint clipping. For simplicity, we ignore the higher-order term in the bound of Theorem \ref{thm:non-conv} and focus on the first, dominating term, which is ok to do, as the dependence on problem parameters and $\eta_1$, $\eta_2$ in both terms is almost identical. Similarly, we will only focus on the resulting problem related constants that figure in the leading term, ignoring the rate and global constants. To that end, we have the following problem related constants figuring in the leading terms
\begin{align*}
    \text{Component clipping: } &\frac{d(f(\bx^{(1)}-f^\star + \log(\nicefrac{1}{\beta})) + a^2d^2m^2L(1 + LD_{\mathcal{X}}) + a^4d^3m^4L^2}{a[1-(m+1)^{-\alpha}]}, \\
    \text{Joint clipping: } &\frac{(f(\bx^{(1)}-f^\star + \log(\nicefrac{1}{\beta})) + a^2M^2L(1 + LD_{\mathcal{X}}) + a^4M^4L^2}{a[(\alpha-1)/2]^d\min\{1,M\}}.
\end{align*} Note that the leading term for component clip shows a polynomial dependence on problem dimension, of order $d^3$, while the leading term for the joint clip has an exponential dependence on $d$, via $[(\alpha-1)/2]^{-d}$. As $\alpha$ is an intrinsic property of the noise, whenever $\alpha \in (2,3)$, (i.e., variance is unbounded and noise is heavy-tailed), we have $[(\alpha-1)/2]^{-d} \rightarrow \infty$, as $d \rightarrow \infty$, at an exponential rate, showing a much worse dependence on problem dimension than component clip, providing a theoretical confirmation of our numerical results (recall that we use $\alpha = 2.05$ in our simulations) and underlining the benefits of component clipping over joint one for certain noises and certain regimes, as noted in \cite{zhang2020adaptive}. The polynomial dependence of component clip on dimension $d$ can be seen as a byproduct of our unified black-box analysis, wherein we provide a general bound $C$, which results in a factor $\sqrt{d}$ when specialized to component-wise nonlinearities. This polynomial dependence is unavoidable, as, even by tuning the step-size parameter $a$ and clipping threshold 
$m > 1$, we can at best remove the direct dependence on $d$ in the numerator, while resulting in the denominator of the form $[1 - (m/d^{\kappa}+1)^{-\alpha}]$, for some $\kappa > 0$, which still explodes as $d \rightarrow \infty$, again at a polynomial rate. Similarly, the exponential explosion of the bound in the joint clipping case and heavy-tailed noise (i.e., 
$\alpha \in (2,3)$) is unavoidable, even under careful tuning of $a$ and $M$. Therefore, our bounds confirm the observations from \cite{zhang2020adaptive}, that for some noise instances, component clipping shows better dimension dependence than than the joint one. Finally, we note that the same dependence on problem dimension can be shown to hold for sign and normalized gradient, further underlining the benefit of component-wise nonlinearities for some noise instances.

\section{Additional Experiments}\label{app:num}

In this section we provide additional experiments.

\paragraph{Noise Symmetry - Setup Details.} The convolutional layers have 32 and 64 filters, with $3 \times 3$ kernels, respectively. The fully connected layers are of size $9216 \times 168$ and $168 \times 10$, respectively. We apply dropout, with rates 0.25 and 0.5, respectively, applied after the max pooling layers and the first fully connected layer. We use a batch size of 64, set the learning rate to 1 and decrease it by a factor of $0.7$ every epoch. The experiments are done on MacOS 15.0 with M1 Pro processor using PyTorch 2.2.2 MPS backend.

\paragraph{Noise Symmetry - Additional Results.} In Figure \ref{fig:proj-epoch15-reps}, we independently sample 6 Gaussian random projection matrices, and for each realization we plot the per-sample gradient projections, after training for 15 epochs. We can see that the noise projection is again highly symmetric for most random projections. 

\begin{figure*}[htbp]
    \centering
    \begin{subfigure}[b]{0.3\textwidth}
        \centering
        \includegraphics[width=\textwidth]{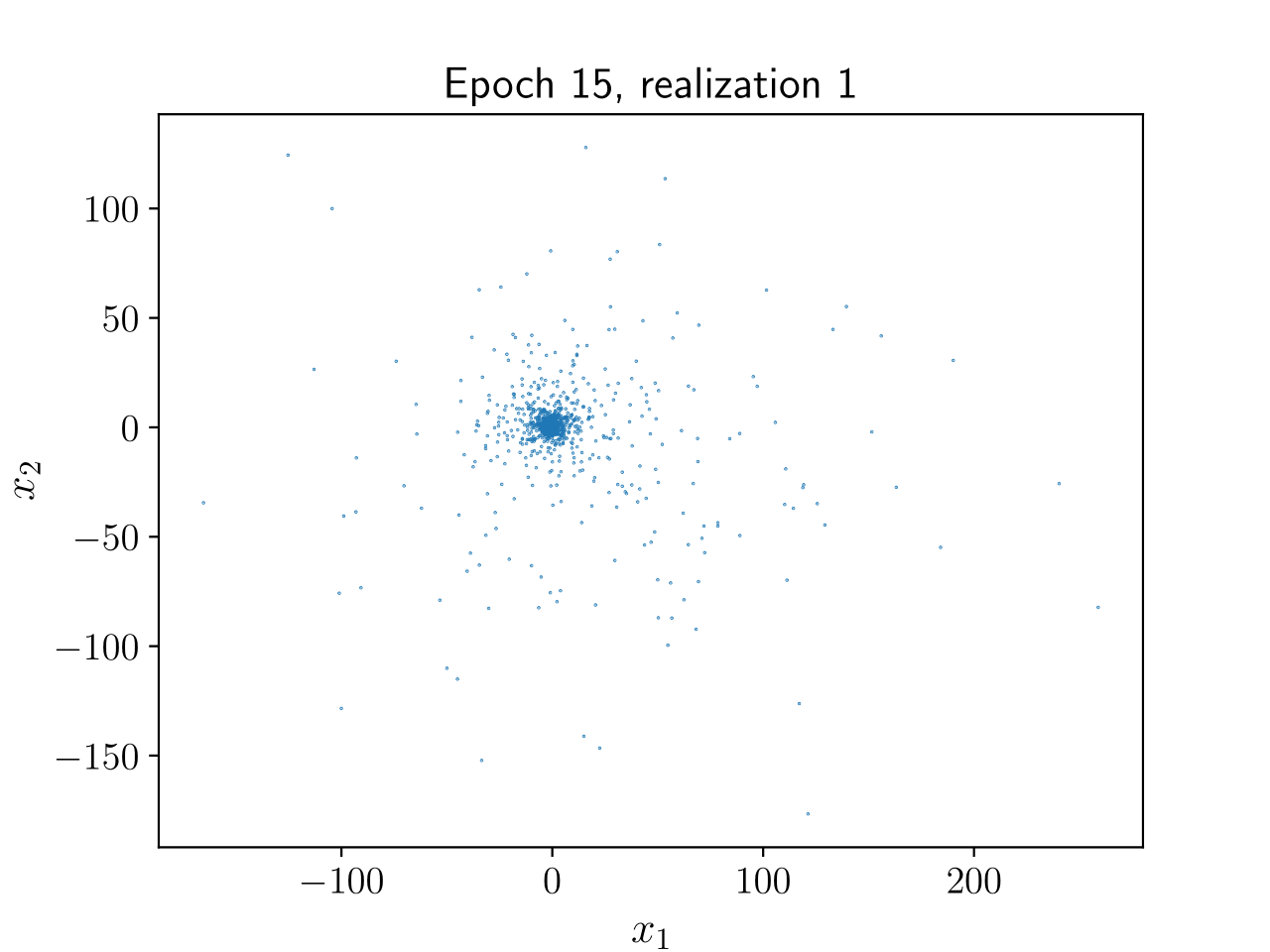} 
    \end{subfigure}
    \hfill
    \begin{subfigure}[b]{0.3\textwidth}
        \centering
        \includegraphics[width=\textwidth]{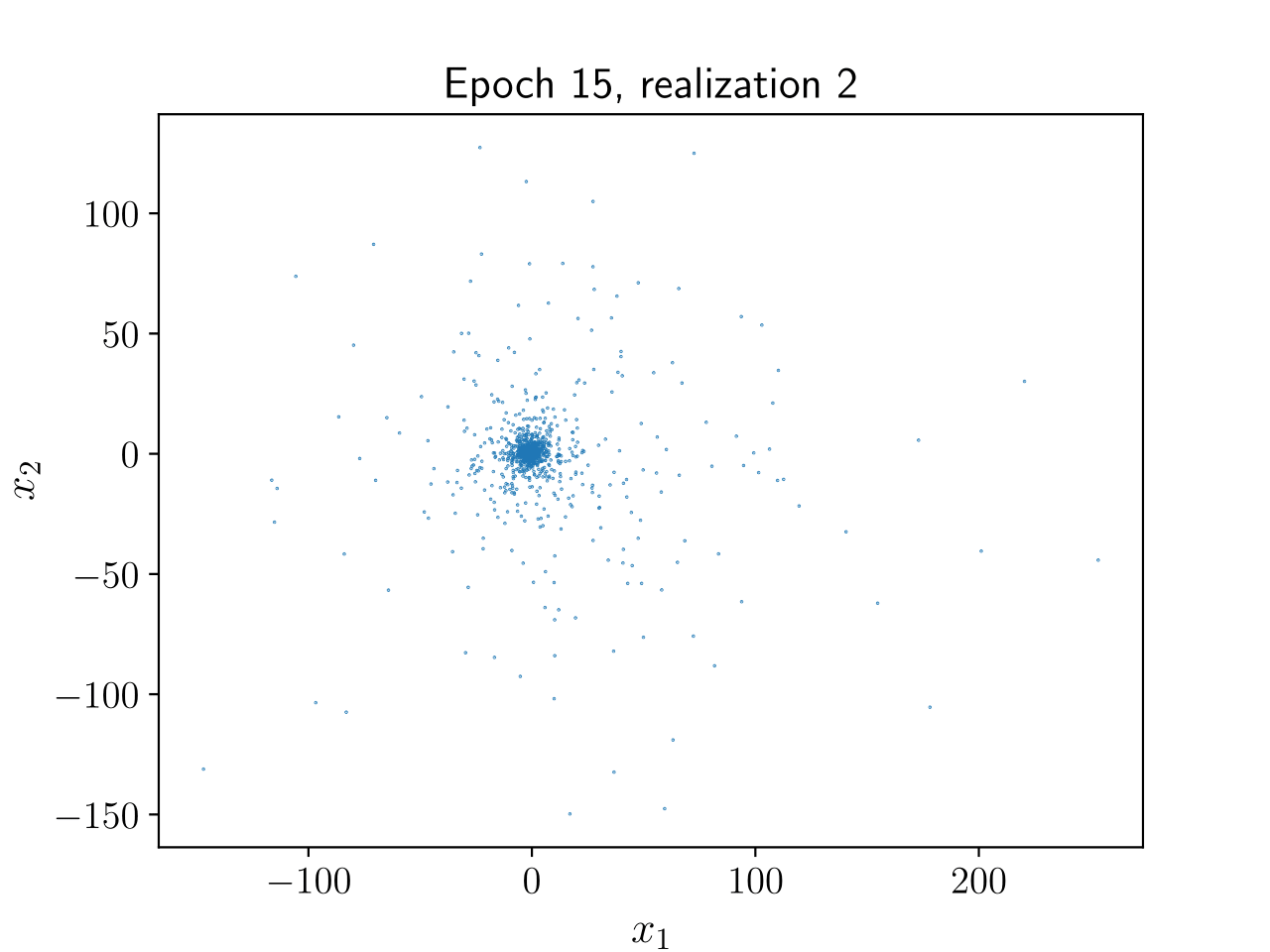} 
    \end{subfigure}
    \hfill
    \begin{subfigure}[b]{0.3\textwidth}
        \centering
        \includegraphics[width=\textwidth]{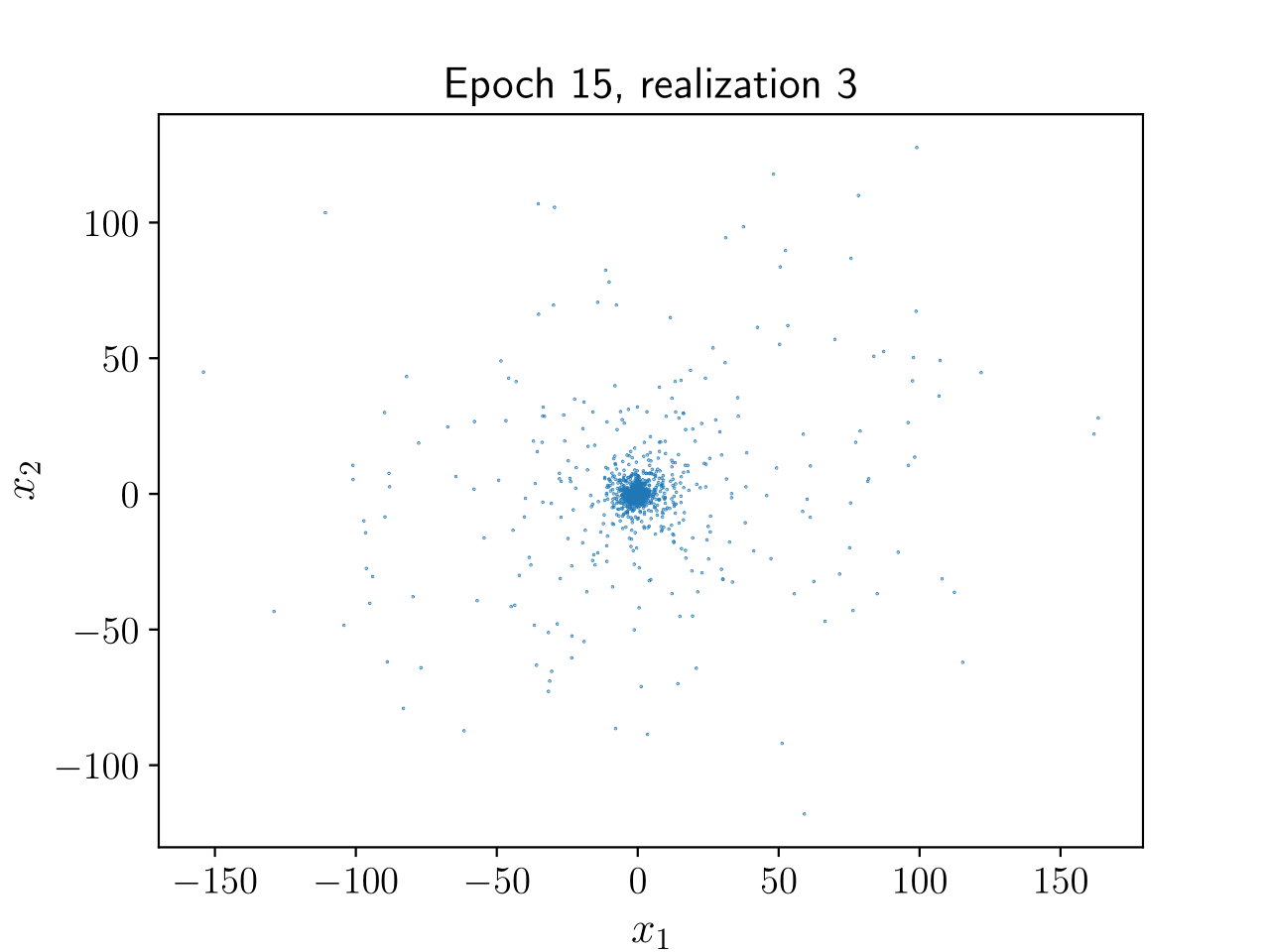} 
    \end{subfigure}
    \vskip\baselineskip
    \begin{subfigure}[b]{0.3\textwidth}
        \centering
        \includegraphics[width=\textwidth]{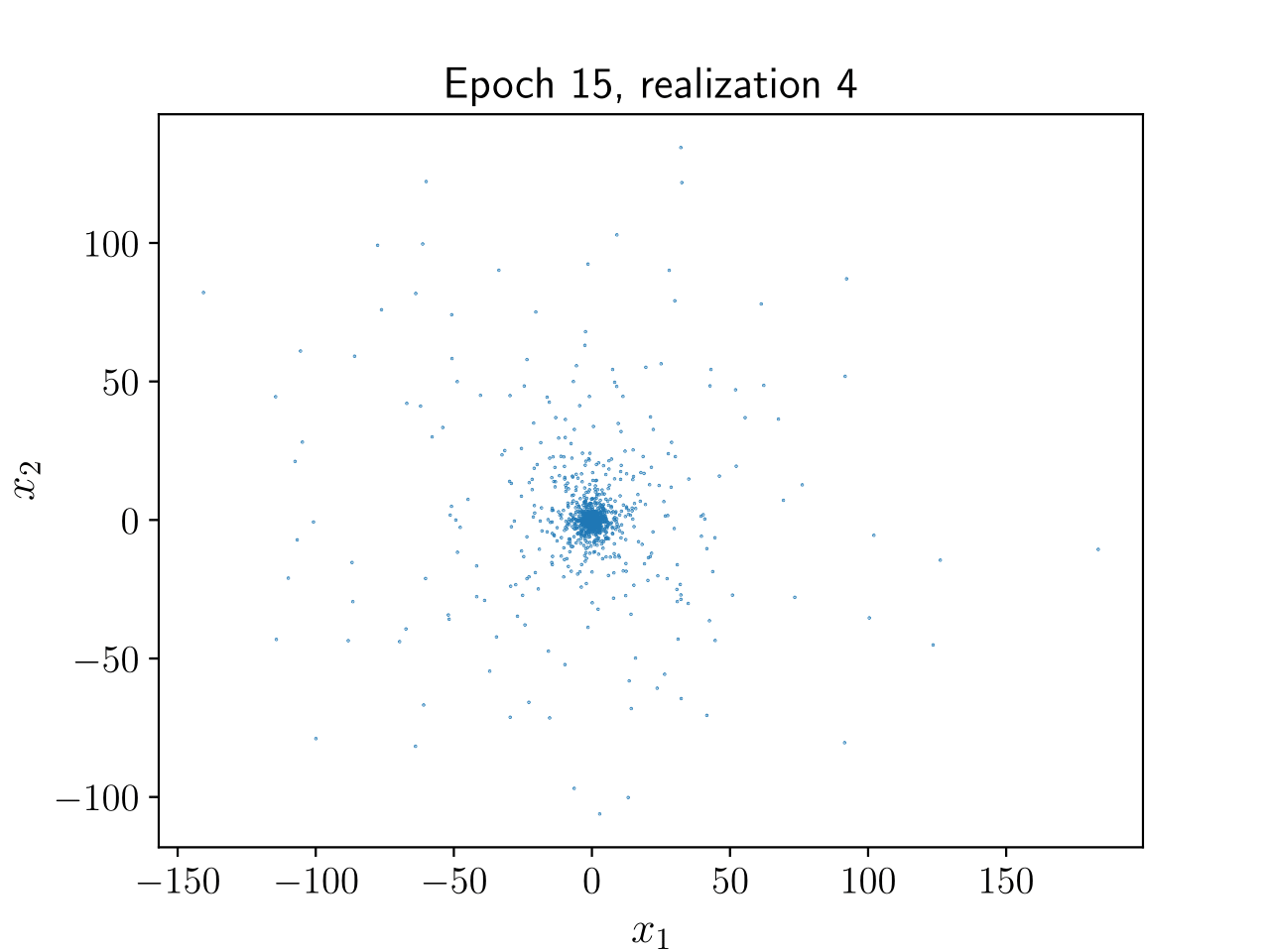} 
    \end{subfigure}
    \hfill
    \begin{subfigure}[b]{0.3\textwidth}
        \centering
        \includegraphics[width=\textwidth]{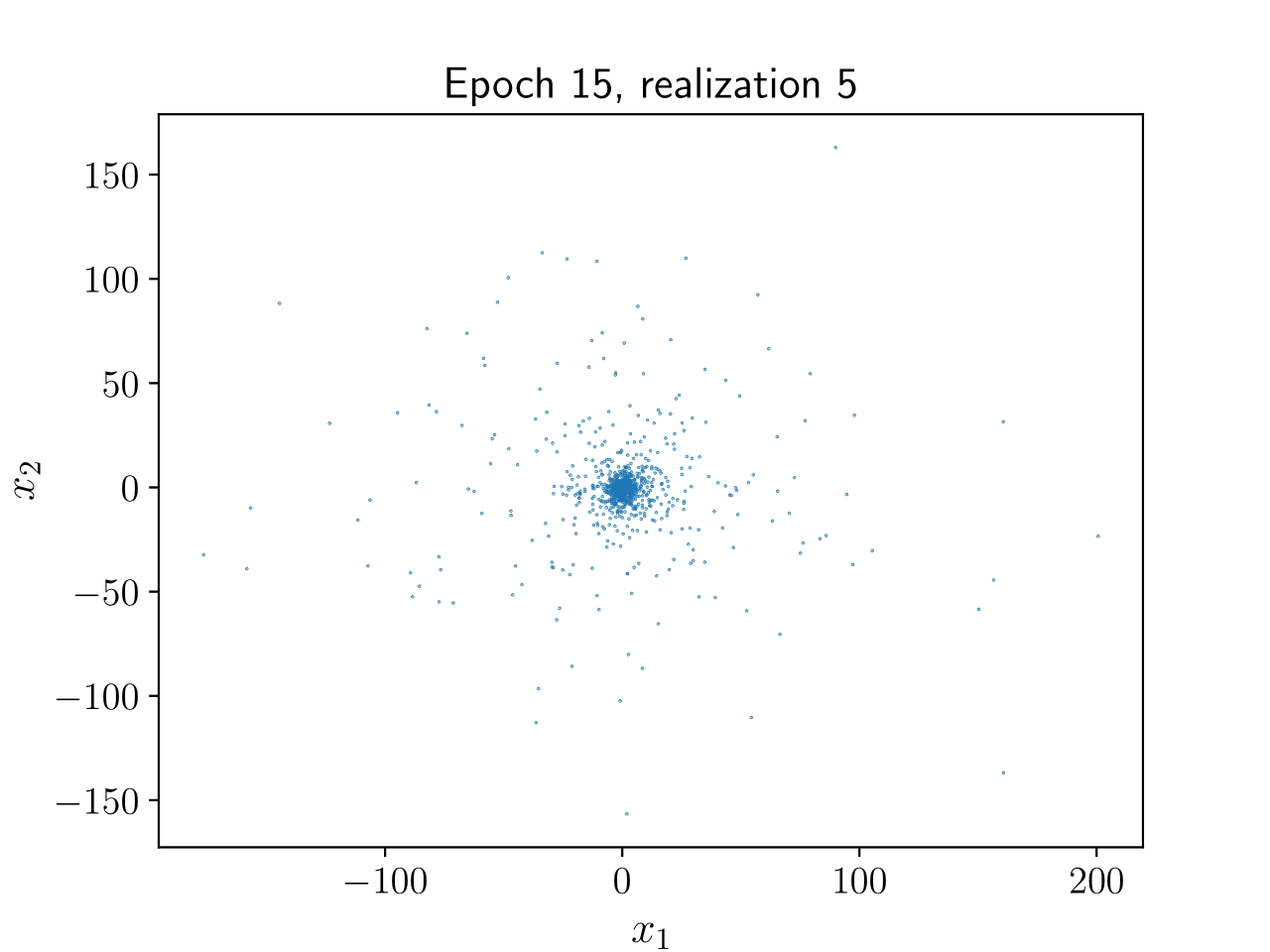}
    \end{subfigure}
    \hfill
    \begin{subfigure}[b]{0.3\textwidth}
        \centering
        \includegraphics[width=\textwidth]{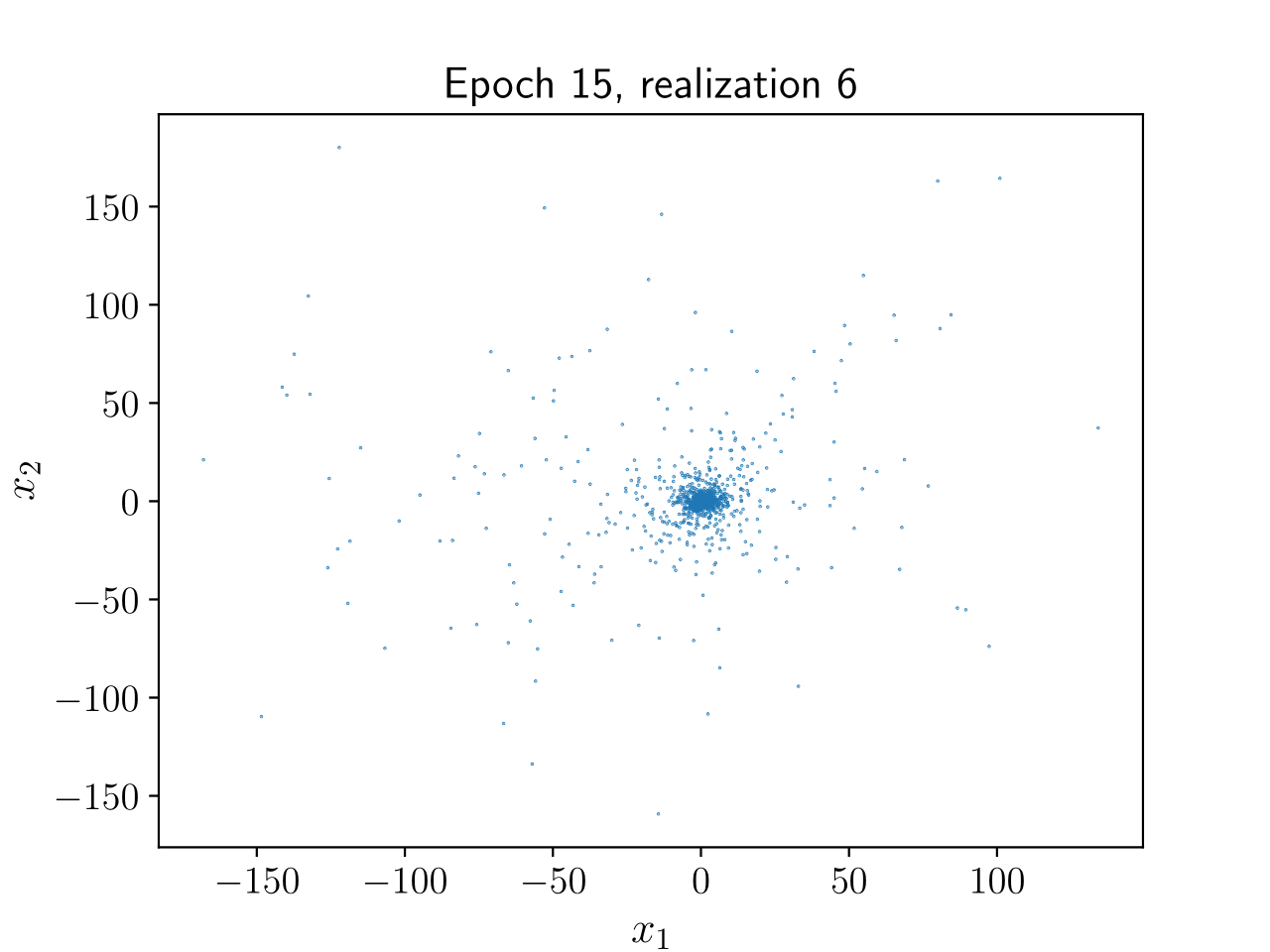} 
    \end{subfigure}
    \caption{The distribution of gradient projections after training for 15 epochs, using 6 different projection matrices.}
    \label{fig:proj-epoch15-reps}
\end{figure*}

\paragraph{Comparisons of Nonlinear SGD Methods.} We use the same MNIST dataset and CNN model as described above to test the performance of SGD with different nonlinearities under injected heavy-tailed noise. In particular, when computing mini-batch stochastic gradients, we inject random noise following a Levy stable distribution, with the stability parameter 1.5, the skewness 1, location parameter 0, and scale 1. Note that this is a non-symmetric heavy-tailed distribution. We compare the test accuracies and test losses of baseline SGD method, SGD with component-wise and joint clipping, as well as normalized SGD. All algorithms use a varying step-size schedule $\alpha_t = \frac{a}{(t + 1)^{3/4}}$, where $a$ is a hyper-parameter chosen from $\{0.001, 0.005, 0.01, 0.05, 0.1, 0.5, 1.0\}$. For component-wise and joint clipped SGD, we pick the best clipping threshold from the set $\{0.1, 0.5, 1.0\}$. For the best hyper-parameter combination for each algorithm, we run the algorithm for 5 independent runs and plot the mean value with error bars. The results are presented in Figure \ref{fig:perf_nlr}, where it can be seen that all nonlinear SGD methods (fine-tuned) perform well, while the performance of vanilla SGD is significantly affected by the presence of heavy-tailed noise.

\begin{figure}[!ht]
    \centering
    \begin{subfigure}{0.48\textwidth}  % Adjust width as needed
        \centering
        \includegraphics[width=\linewidth]{{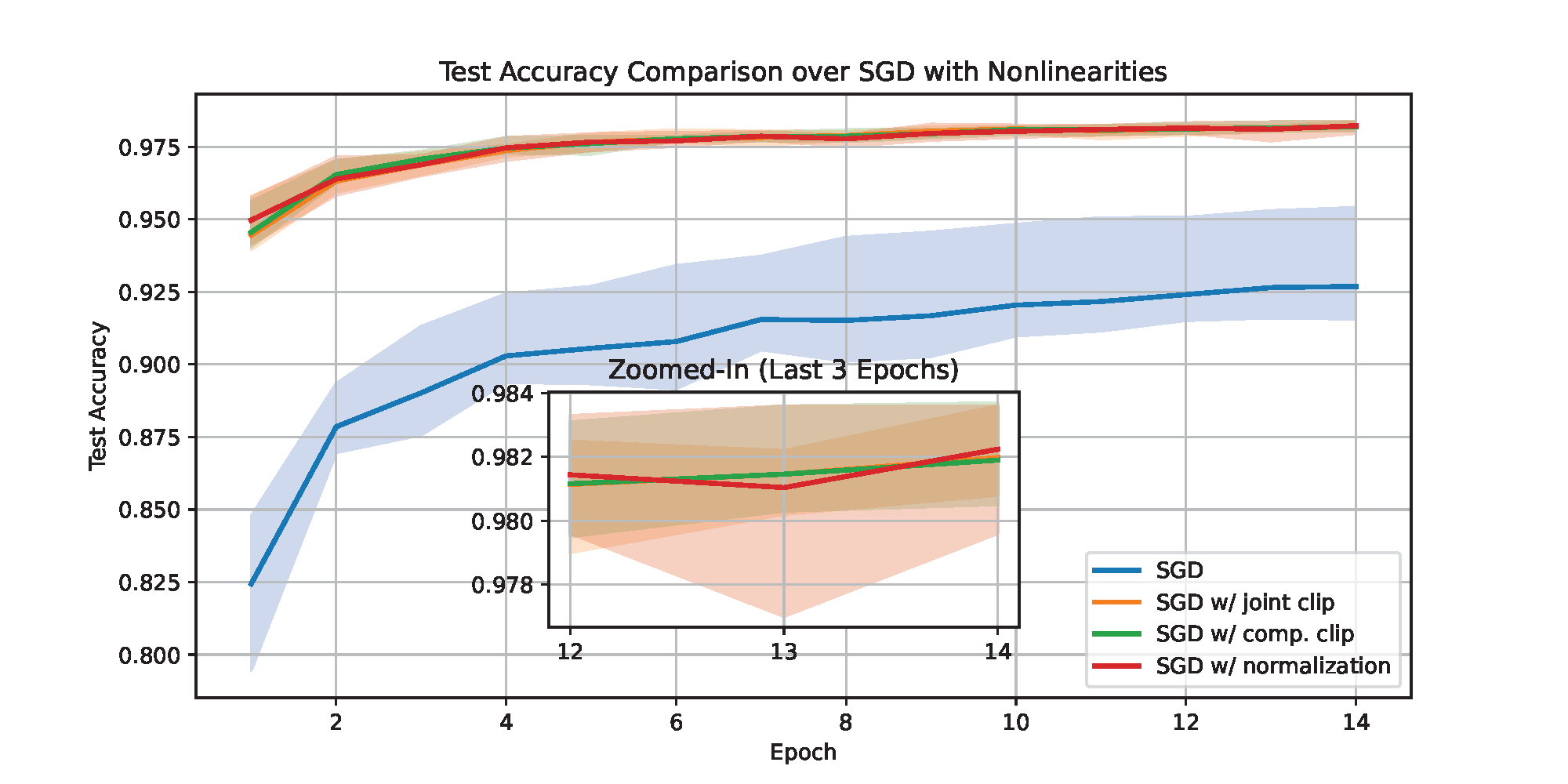}} 
    \end{subfigure}
    \hfill
    \begin{subfigure}{0.48\textwidth}  % Adjust width as needed
        \centering
        \includegraphics[width=\linewidth]{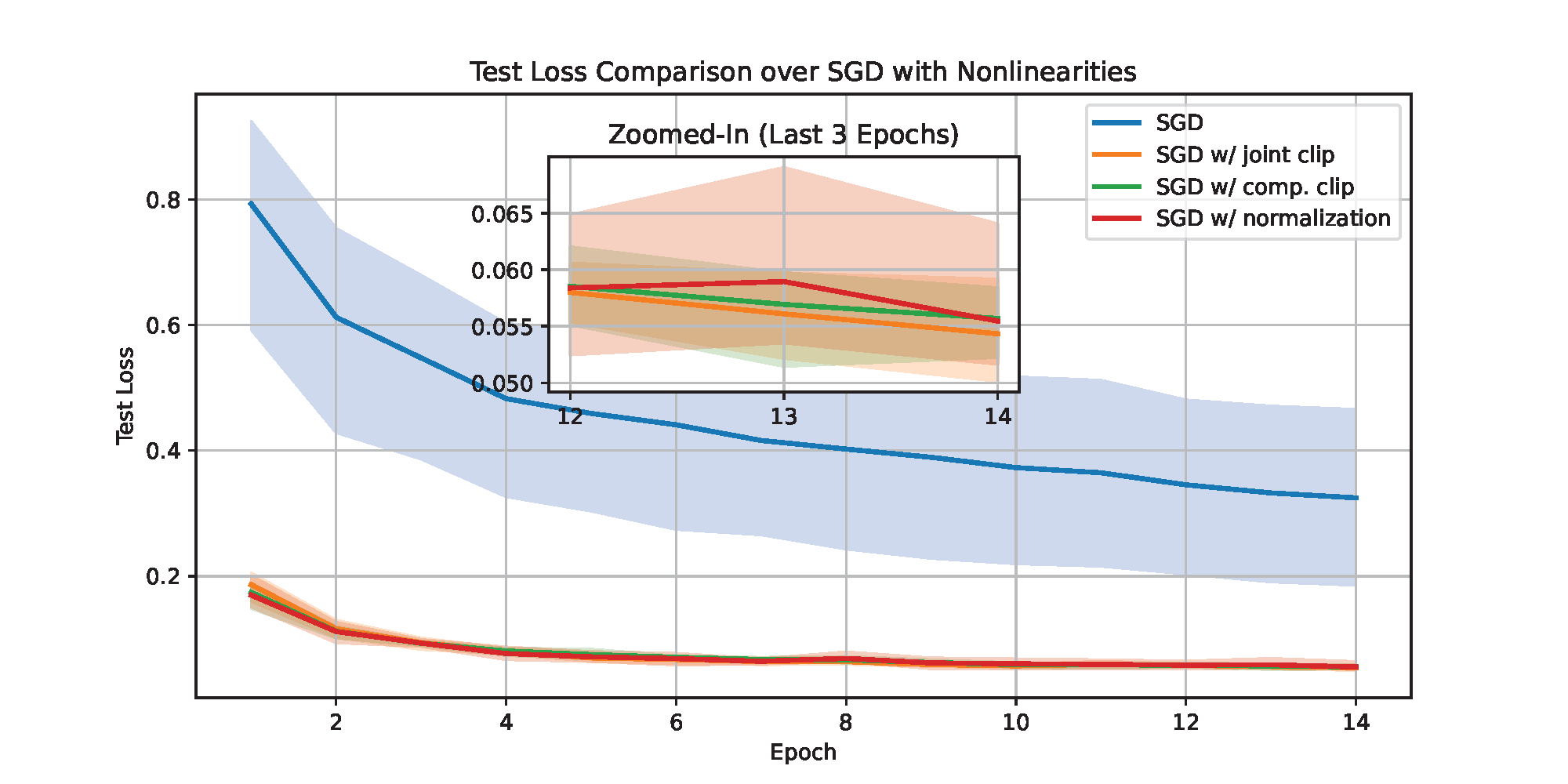}
    \end{subfigure}
    \caption{Comparisons of test accuracies and losses of SGD with different nonlinearities under Levy stable gradient noise.}
    \label{fig:perf_nlr}
\end{figure} 

\paragraph{Additional Experiments.} Here, we present the results for the same setup as used in Section \ref{sec:an-num} in the main body, for a wider range of step-sizes and tail probability thresholds. Figure \ref{fig:fig4} provides the MSE behaviour of sign, joint and component-wise clipping for step-sizes $\alpha_t = \frac{1}{(t+1)^\delta}$, with $\delta \in \{17/24,3/4,7/8\}$, while Figure \ref{fig:5} presents the tail probability for all three methods, with step-size $\delta = 3/4$ and using thresholds $\varepsilon \in \{0.05,0.1,0.5,5 \}$ . We can see that the results from Section \ref{sec:an-num} are consistent for different ranges of step-sizes, confirming that joint clipping is not always the optimal choice of nonlinearity. Moreover, we can see that all three methods achieve exponential tail decay, with joint clipping requiring a larger threshold, as it converges slower than the other two nonlinear methods, reaching a lower accuracy in the allocated number of iterations.

\begin{figure*}[!ht]
\centering
\begin{tabular}{lll}
\includegraphics[scale=0.33]{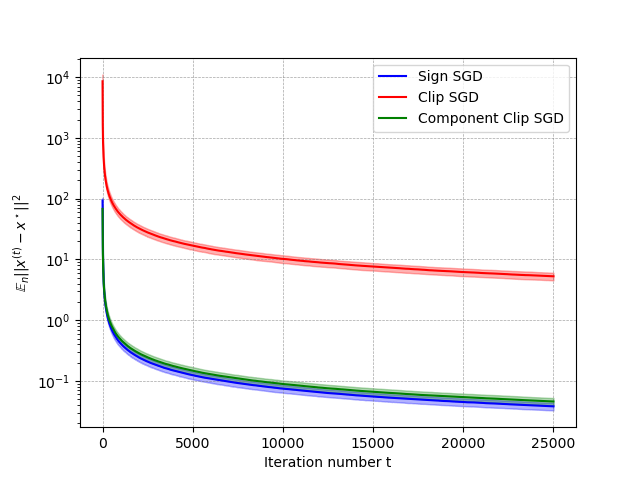}
&
\includegraphics[scale=0.33]{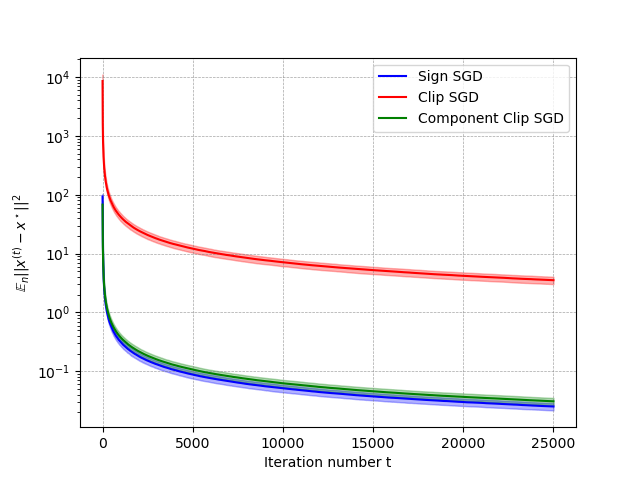}
&
\includegraphics[scale=0.33]{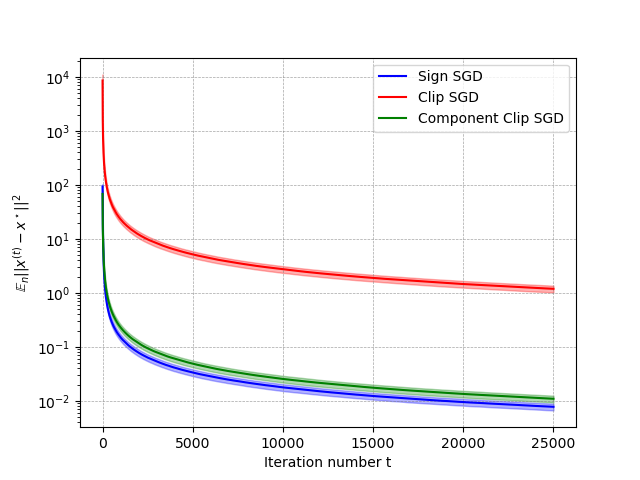}
\end{tabular}
\caption{MSE performance of nonlinear SGD methods, using step-size policy $\alpha_t = 1/(t+1)^\delta$, for different values of $\delta \in (2/3,1)$. Left to right: we choose the values $\delta \in \{17/24,3/4,7/8\}$, respectively. We can see that both component-wise nonlinearities converge faster in the MSE sense, independent of the step-size choice.}
\label{fig:fig4}
\end{figure*}

\begin{figure*}[htbp]
    \centering
    \begin{subfigure}[b]{0.45\textwidth}
        \centering
        \includegraphics[width=\textwidth]{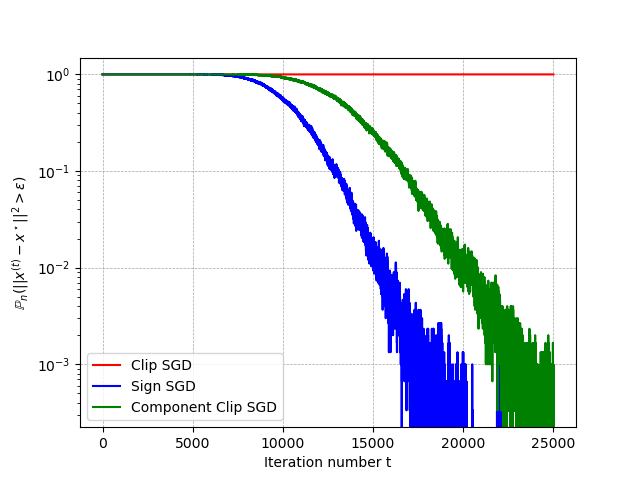} 
    \end{subfigure}
    \hfill
    \begin{subfigure}[b]{0.45\textwidth}
        \centering
        \includegraphics[width=\textwidth]{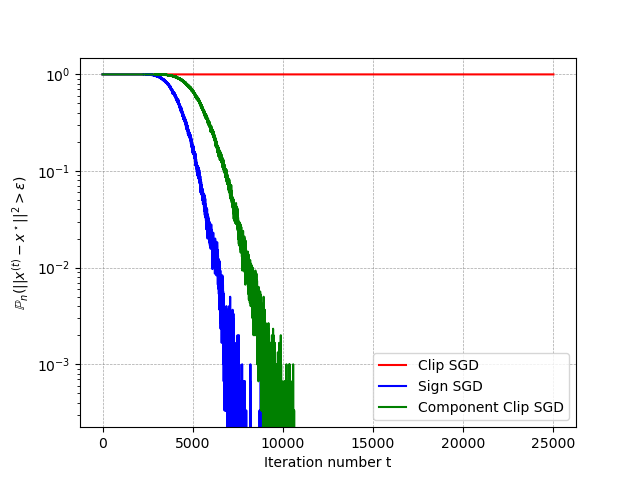} 
    \end{subfigure}
    \vskip\baselineskip
    \begin{subfigure}[b]{0.45\textwidth}
        \centering
        \includegraphics[width=\textwidth]{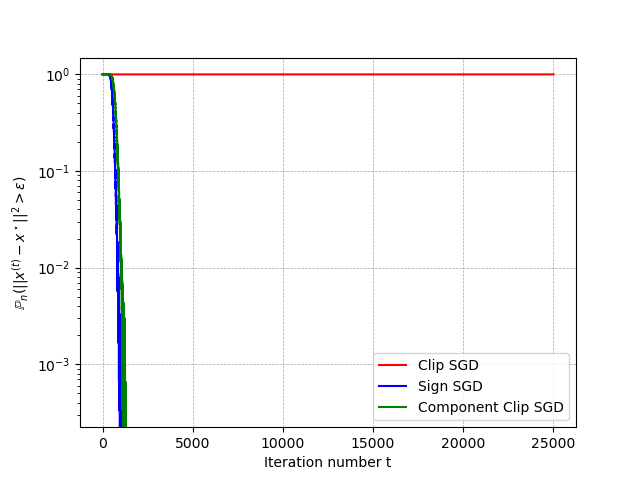} 
    \end{subfigure}
    \hfill
    \begin{subfigure}[b]{0.45\textwidth}
        \centering
        \includegraphics[width=\textwidth]{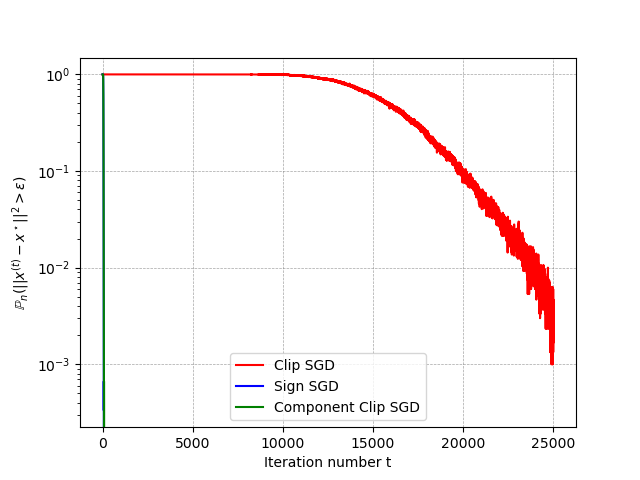}
    \end{subfigure}
    \caption{High-probability performance of nonlinear SGD methods, using step-size policy $\alpha_t = 1/(t+1)^\delta$, with $\delta = 3/4$. We use the thresholds $\varepsilon \in \{0.05,0.1,0.5,5 \}$ to compute the tail probability, left to right and top to bottom. We can see that all the methods exhibit exponential tail decay, with joint clipping needing the largest threshold to achieve exponential decay, due to slower convergence.}
    \label{fig:5}
\end{figure*}

\section{On the Noise Assumptions}\label{app:noise}

In this section we provide detailed discussions on the noise assumption used in our paper. In particular, we provide a detailed comparison with the bounded $p$-th moment assumption and discuss relaxations of the independent, identically distributed condition.

\paragraph{Comparison with Assumption \eqref{eq:bounded-moment}.} As discussed in Remark \ref{rmk:noise-comparison}, while the noise assumption in our work and in works assuming \eqref{eq:bounded-moment} are different, it is important to note that neither set of assumptions is uniformly weaker and both come with some advantages and disadvantages, as we detail next. To begin with, both set of assumptions are concerned with heavy-tailed noises, with ours requiring no moment bounds, while assumption \eqref{eq:bounded-moment} requires bounded moments of order $p \in (1,2]$, uniformly for all $x \in \mathbb{R}^d$. This is a significant relaxation on our end and allows for considering extremely heavy-tailed noises, such as Cauchy noise, for which even the mean does not exist! On the other hand, in order to guarantee exact convergence, our work requires noise with symmetric PDF, positive around zero, whereas no such requirements are needed for \eqref{eq:bounded-moment}. However, we relax the symmetry requirement, allowing for mixtures of symmetric and non-symmetric components, resulting in potentially biased noise, for which convergence to a neighbourhood of stationarity is shown (which is in general the best possible guarantee for biased SGD without corrective mechanisms like momentum or error-feedback). Contrary to this, \eqref{eq:bounded-moment} always requires the noise to be unbiased. Finally, while we require the noise vectors to be independent and identically distributed, which is not the case with \eqref{eq:bounded-moment}, this condition can be relaxed to include noises which are not identically distributed and depend on the current state (which we detail in the next paragraph), making the two sets of assumptions comparable on this point. Therefore, we can clearly see that both sets of noise assumptions come with advantages and disadvantages, with neither uniformly stronger than the other.

\paragraph{On the Independent, Identically Distributed Condition.} As discussed in Remark \ref{rmk:iid}, the independent, identically distributed condition can be significantly relaxed. First, the noise vectors need not be identically distributed. Instead, it suffices that in each iteration $t=1,2,\ldots$, the noise vector $\bzt$ has a probability density function (PDF) $P_t$, where in addition to being symmetric, we make the following requirement: there exists a $B_0 > 0$, such that $\inf_{t = 1,2,\ldots}P_t(\bz) > 0$, for each $\|\bz\| \leq B_0$. This condition can be seen as a uniform positivity in a neighbourhood of zero requirement, which is a mild condition on the behaviour of the sequence of PDFs and is satisfied, e.g., if the PDFs are drawn from a finite family $\mathcal{P}$ of symmetric PDFs, positive in a neighbourhood of zero (assuming a finite family is natural, as for our finite-time bounds, a weaker condition actually suffices, namely $\min_{t \in [T]}P_t(\bz) > 0$, for all $\|\bz\| \leq B_0$, which exactly corresponds to considering a finite family $\mathcal{P}$ of symmetric distributions, positive in a neighbourhood of zero, with $|\mathcal{P}| = T$, for any finite time horizon $T$). Therefore, defining $\phi^\prime(0) = \min_{i \in [d]}\inf_{t = 1,2,\ldots}\phi_{i,t}^\prime(0) > 0$, where $\phi_{i,t}(x_i) = \mathbb{E}_{z_i \sim P_t} \big[\mathcal{N}_1(x_i + z_i)\big]$ is the marginal expectation of the $i$-th noise component at time $t$, and $p_0 = \inf_{t = 1,2,\ldots}P_t(\mathbf{0}) > 0$, our current analysis applies and our proofs readily go through. Second, for joint nonlinearities, the noise vectors need not be independent. Instead, in each iteration $t$, the noise vector $\bzt$ is allowed to depend on the history through current state $\bxt$. This is facilitated by assuming that, for each fixed $\bx \in \mathbb{R}^d$, the noise vector $\bz = \bz(\bx)$ has a PDF $P_{\bx}(\bz) = P(\bz\vert X = \bx)$, which is symmetric for each fixed $\bx \in \mathbb{R}^d$, and that there exists a $B_0 > 0$, such that $\inf_{\bx \in \mathbb{R}^d}P_\bx(\bz) > 0$, for all $\| \bz \| \leq B_0$. The uniform positivity around zero for the conditional PDF $P_{\bx}(\bz)$ is again a generalization of the positivity around zero condition, and similar to the previous discussion, can be relaxed to a path-wise condition for our finite-time high-probability guarantees, namely, $\inf_{t \in [T]}P_{\bxt}(\bz) > 0$, for all $\|\bz\| \leq B_0$ and each fixed $T$. It can be shown, using the same steps of our proof, while replacing $P(\bz)$ with $P_{\bx}(\bz)$, that Lemma 3.2 holds for joint nonlinearities (recall the proof of Lemma S3.2, with $p_0 = P(\mathbf{0})$ now replaced by $p_0 = \inf_{\bx \in \mathbb{R}^d}P_{\bx}(\mathbf{0})$). Similarly, the proofs of Theorems 1-3, which use the conditional moment-generating function, conditioned on the entire history of the algorithm, readily go through, requiring no further modification.

\section{On the Metric}\label{app:metric}

As discussed in Remark \ref{rmk:metric}, it is possible to provide high-probability convergence guarantees of the same order as in Theorem \ref{thm:non-conv}, for the metric $\frac{1}{t}\sum_{k = 1}^t\min\{\|\nabla f(\bxk)\|,\|\nabla f(\bxk)\|^2\}$. To do so, we proceed as follows. Recall equation \eqref{eq:6} in the proof of Theorem \ref{thm:non-conv} in Section \ref{app:proofs}, namely that, for any $\beta \in (0,1)$, with probability at least $1 - \beta$, we have $G_t \leq \log(\nicefrac{1}{\beta}) + N_t$, where $G_t \triangleq \sum_{k = 1}^t\alpha_k\min\{\eta_1\|\nabla f(\bxk)\|,\eta_2\|\nabla f(\bxk)\|^2\}$ and $N_t \triangleq f(\bx^{(1)}) - f^\star + LC^2\left(\nicefrac{1}{2} + 8LD_{\mathcal{X}}\right)\sum_{k = 1}^{t}\alpha_k^2 + 8C^4L^2\sum_{k = 1}^{t}\alpha_k^2A_{k}^2$. Instead of dividing both sides of the inequality by $A_t = \sum_{k = 1}^t\alpha_k$, as was originally done in \eqref{eq:6}, we divide both sides of the inequality by $t$ and notice that the sequence of step-sizes is decreasing, to get, with probability at least $1 - \beta$
\begin{equation*}
    \frac{\alpha_t}{t}\sum_{k = 1}^t\min\{\eta_1\|\nabla f(\bxk)\|,\eta_2\|\nabla f(\bxk)\|^2\} \leq \frac{\log(\nicefrac{1}{\beta})+N_t}{t}.
\end{equation*} Dividing both sides of the above inequality by $\eta\alpha_t$, where $\eta = \min\{\eta_1,\eta_2\}$ and recalling that $\alpha_t = \frac{a}{(t+1)^\delta}$, we get
\begin{equation*}
    \frac{1}{t}\sum_{k = 1}^t\min\{\|\nabla f(\bxk)\|,\|\nabla f(\bxk)\|^2\} \leq \frac{2^\delta(\log(\nicefrac{1}{\beta}) + N_t)}{a\eta t^{1-\delta}},    
\end{equation*} with probability at least $1 - \beta$. Considering the different choices of step-size parameter $\delta \in (2/3,1)$, we can obtain the same convergence rates as in Theorem \ref{thm:non-conv}. The same trick can be used to show convergence guarantees of the exact Polyak-Ruppert average $\widetilde{\bx}^{(t)} \triangleq \frac{1}{t}\sum_{k = 1}^t\bxk$ in Corollary \ref{cor:cvx}. 

As discussed, the metric $\frac{1}{t}\sum_{k = 1}^t\min\{\|\nabla f(\bxk)\|,\|\nabla f(\bxk)\|^2\}$ is a more general quantity than $\min_{k \in [t]}\|\nabla f(\bxk)\|^2$, in the sense that in our proof of Theorem \ref{thm:non-conv}, we used the bounds on the metric $\frac{1}{t}\sum_{k = 1}^t\min\{\|\nabla f(\bxk)\|,\|\nabla f(\bxk)\|^2\}$ to show that they imply the same rates on the more standard metric $\min_{k \in [t]}\|\nabla f(\bxk)\|^2$.\footnote{Technically, we use the bounds on the metric $\sum_{k = 1}^t\frac{\alpha_k}{\sum_{s = 1}^t\alpha_s}\min\{\eta_1\|\nabla f(\bxk)\|,\eta_2\|\nabla f(\bxk)\|^2\}$, however, as we showed above, we can easily switch to the metric $\frac{1}{t}\sum_{k = 1}^t\min\{\|\nabla f(\bxk)\|,\|\nabla f(\bxk)\|^2\}$.} The metric considered in our work, $\frac{1}{t}\sum_{k = 1}^t\min\{\|\nabla f(\bxk)\|,\|\nabla f(\bxk)\|^2\}$ is directly comparable to the metric used in \cite{nguyen2023improved}, namely $\frac{1}{t}\sum_{k = 1}^t\|\nabla f(\bxk)\|^2$. Moreover, the two metrics are asymptotically equivalent, in the sense that, for some $t_0 \in \N$ sufficiently large, we have, for all $k \geq t_0$, $\min\{\|\nabla f(\bxk)\|,\|\nabla f(\bxk)\|^2\} = \|\nabla f(\bxk)\|^2 $, as the gradient norm converges to zero with high-probability, according to Theorem \ref{thm:non-conv}. The expression $\min\{\|\nabla f(\bxk)\|,\|\nabla f(\bxk)\|^2\}$ stems from our general, black-box analysis in Lemma \ref{lm:huber}, and was also previously used in works studying clipping, e.g., \cite{zhang2020adaptive,chen2020understanding}. We used the more standard metric $\min_{k \in [t]}\|\nabla f(\bxk)\|^2$, to simplify the exposition in Theorem \ref{thm:non-conv}.    

Finally, the reason why \cite{nguyen2023improved} are able to provide bounds on the quantity $\frac{1}{t}\sum_{k = 1}^t\|\nabla f(\bxk)\|^2$ stems from the fact that a large clipping threshold is used in their analysis, proportional to $t^{1/(3p-2)}$, allowing the authors to show that the norms of gradients of the sequence of iterates, i.e., $\|\nabla f(\bxk)\|$, for all $k = 1,\ldots,t$, are guaranteed to stay below the clipping threshold with high probability, i.e., that no clipping will be performed with high probability, in effect behaving like SGD with no clipping. As observed in a recent work \cite{hubler2024gradient}, this is contrary to how clipping is used in practice, where clipping is typically deployed with a small, constant threshold, see \cite{hubler2024gradient} and references therein. On the other hand, our general black-box analysis provides convergence guarantees of (joint) clipped SGD for any constant value of the clipping threshold, bridging the existing gap between theory and practice.

\end{document}